\newcommand\MTR{IWR}
\def\eg{\emph{e.g.}}
\def\ie{\emph{i.e.}}
\def\R{{\mathbb R}}
\DeclareMathOperator{\E}{\mathbb{E}}
\DeclareMathOperator{\1}{\mathbbm{1}}
\title{A Contextual Bandit Bake-off}
\author{\name Alberto Bietti \email alberto.bietti@nyu.edu \\
    \addr Center for Data Science, New York University, New York, NY\thanks{Work done while AB was at Inria, partly during a visit to Microsoft Research NY, supported by the Microsoft Research-Inria Joint Center.} \\
    \name Alekh Agarwal \email alekha@microsoft.com \\
    \addr Microsoft Research, Redmond, WA \\
    \name John Langford \email jcl@microsoft.com \\
    \addr Microsoft Research, New York, NY
}
\begin{document}

\maketitle

\begin{abstract}
Contextual bandit algorithms are essential for solving many real-world
interactive machine learning problems. Despite multiple recent successes
on statistically optimal and computationally efficient methods, the practical
behavior of these algorithms is still poorly understood.
We leverage the availability of large numbers of supervised learning datasets to
empirically evaluate contextual bandit algorithms,
focusing on practical methods that learn by relying on optimization oracles
from supervised learning. We find that a recent method~\citep{foster2018practical} using optimism under uncertainty works the best overall.
A surprisingly close second is a simple greedy baseline that only explores implicitly through the diversity of contexts, followed by a variant of Online Cover~\citep{agarwal2014taming} which tends to be more conservative but robust to problem specification by design.
Along the way, we also evaluate various components of contextual bandit algorithm design such as loss estimators.
Overall, this is a thorough study and review of contextual bandit methodology.
\end{abstract}
\begin{keywords}
contextual bandits, online learning, evaluation
\end{keywords}

\section{Introduction} % (fold)
\label{sec:introduction}
%!TEX root = main.tex

At a practical level, how should contextual bandit learning and
exploration be done?

In the contextual bandit problem, a learner repeatedly observes a
context, chooses an action, and observes a loss for the chosen action
only.  Many real-world interactive machine learning tasks are
well-suited to this setting: a movie recommendation system selects a
movie for a given user and receives feedback (click or no
click) only for that movie; a choice of medical treatment may be
prescribed to a patient with an outcome observed for (only) the chosen
treatment.  The limited feedback (known as \emph{bandit} feedback)
received by the learner highlights the importance
of \emph{exploration}, which needs to be addressed by contextual
bandit algorithms.

The focal point of contextual bandit (henceforth CB) learning research is efficient
exploration algorithms~\citep{abbasi2011improved,agarwal2012contextual,agarwal2014taming,
agrawal2013thompson,dudik2011efficient,langford2008epoch,russo2017tutorial}.
However, many of these algorithms remain far from practical, and even when considering more
practical variants, their empirical behavior is poorly understood, typically with
limited evaluation on just a handful of scenarios.
In particular, strategies based on upper confidence bounds~\citep{abbasi2011improved,li2010contextual}
or Thompson sampling~\citep{agrawal2013thompson,russo2017tutorial}
are often intractable for sparse, high-dimensional datasets,
and make strong assumptions on the model representation.
The method of~\citet{agarwal2014taming} alleviates some of these difficulties while being statistically optimal
under weak assumptions,
but the analyzed version is still far from practical, and the worst-case guarantees may lead
to overly conservative exploration that can be inefficient in practice.

The main objective of our work is an evaluation of practical
methods that are relevant to practitioners.
We focus on algorithms that rely on \emph{optimization oracles} from supervised learning 
such as cost-sensitive classification or regression oracles,
which provides computational efficiency and support for generic representations.
We further rely on online learning implementations of the oracles,
which are desirable in practice due to the sequential nature of contextual bandits.
While confidence-based strategies and Thompson sampling are not directly adapted to this setting,
we achieve it with online Bootstrap approximations for Thompson sampling~\citep{agarwal2014taming,eckles2014thompson,osband2015bootstrapped},
and with the confidence-based method of~\citet{foster2018practical} based on regression oracles,
which contains LinUCB as a special case.
Additionally, we consider practical design choices such as loss encodings
(\eg, if values have a range of 1, should we encode the costs of best and worst outcomes as 0/1 or -1/0?),
and for methods that learn by reduction to off-policy learning,
we study different reduction techniques beyond the simple inverse propensity scoring approach.
All of our experiments are based on the online learning system Vowpal
Wabbit\footnote{\url{https://vowpalwabbit.org}} which has already been
successfully used in production systems~\citep{agarwal2016multiworld}.

The interactive aspect of CB problems makes them notoriously difficult
to evaluate in real-world settings beyond a handful of tasks.
Instead, we leverage the wide availability of supervised learning datasets
with different cost structures on their predictions, and obtain contextual bandit instances
by simulating bandit feedback, treating labels as actions and hiding the loss of all actions
but the chosen one.
This setup captures the generality of the i.i.d.~contextual bandit setting, while avoiding some
difficult aspects of real-world settings that are not supported by most existing algorithms
and are difficult to evaluate, such as non-stationarity.
We consider a large collection of over 500 datasets with varying characteristics and various cost structures,
including multiclass, multilabel and more general cost-sensitive datasets with real-valued costs.
To our knowledge, this is the first evaluation of contextual bandit algorithms
on such a large and diverse corpus of datasets.

Our evaluation considers online implementations of Bootstrap Thompson
sampling~\citep{agarwal2014taming,eckles2014thompson,osband2015bootstrapped},
the Cover approach of~\citet{agarwal2014taming},
$\epsilon$-greedy~\citep{langford2008epoch},
RegCB~\citep[which includes LinUCB as a special
case]{foster2018practical}, and a basic greedy method similar to the one studied
in~\citet{bastani2017exploiting,kannan2018smoothed}.
As the first conclusion of our study, we find that the recent RegCB method~\citep{foster2018practical} performs the best overall across a number of experimental conditions.
Remarkably, we discover that a close second in our set of methods is the simple greedy baseline, often outperforming most exploration algorithms.
Both these methods have drawbacks in theory; greedy can fail arbitrarily poorly in problems where intentional exploration matters, while UCB methods make stronger modeling assumptions and can have an uncontrolled regret when the assumptions fail. The logs collected by deploying these methods in practice are also unfit for later off-policy experiments, an important practical consideration.
Our third conclusion is that several methods which are more robust in that they make only a relatively milder
i.i.d.~assumption on the problem tend to be overly conservative and often pay a steep price on easier datasets.
Nevertheless, we find that an adaptation of Online Cover~\citep{agarwal2014taming} is quite competitive on a large fraction of our datasets.
We also evaluate the effect of different ways to encode losses and study different reduction mechanisms for
exploration algorithms that rely on off-policy learning (such as~$\epsilon$-greedy),
finding that a technique based on importance-weighted regression tends to outperform other approaches when
applicable.
We show pairwise comparisons between the top 3 methods in our evaluation in Figure~\ref{fig:greed_is_good} for datasets with 5 or more actions.
For future theoretical research, our results motivate an emphasis on understanding greedy strategies, building on recent progress~\citep{bastani2017exploiting,kannan2018smoothed}, as well as effectively leveraging easier datasets in exploration problems~\citep{agarwal2017open}.

\begin{figure}[tb]
%run paper_scatterplots.py --comp_granular --base_name allbinary --min_actions 5 --introlabels --noval
	\begin{center}
	\includegraphics[width=0.30\columnwidth]{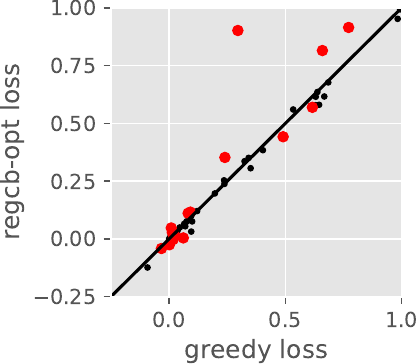}
	\includegraphics[width=0.30\columnwidth]{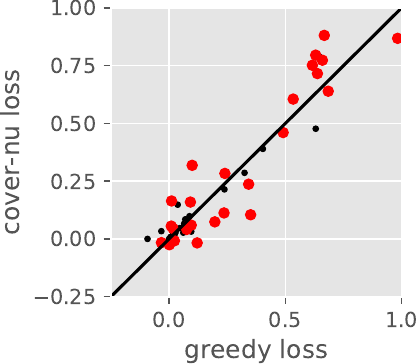}
	\includegraphics[width=0.30\columnwidth]{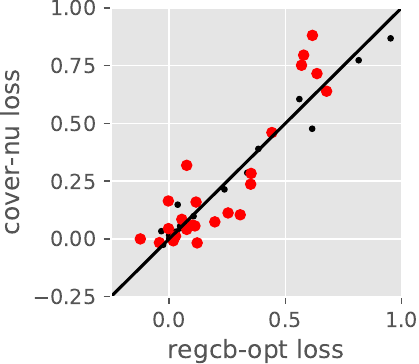}
	\end{center}
\caption{Comparison between three competitive approaches: RegCB (confidence based),
Cover-NU (variant of Online Cover) and Greedy.
The plots show relative loss compared to supervised learning (lower is better) on all datasets with 5 actions or more.
Red points indicate datasets with a statistically significant difference in loss between two methods.
A greedy approach can outperform exploration methods in many cases; yet both Greedy and RegCB may
fail to explore efficiently on some other datasets where Cover-NU dominates.}
	\label{fig:greed_is_good}
\end{figure}

\subsection{Organization of the paper}
The paper is organized as follows:
\begin{itemize}[noitemsep,topsep=1pt]
	\item Section~\ref{sec:setup} provides relevant background on i.i.d.~contextual bandits, optimization
	oracles, and mechanisms for reduction to off-policy learning, and introduces our experimental setup.
	\item Section~\ref{sec:algorithms} describes the main algorithms we consider in our evaluation,
	as well as the modifications that we found effective in practice.
	\item Section~\ref{sec:experiments} presents the results and insights from our experimental evaluation.
	\item Finally, we conclude in Section~\ref{sec:discussion} with a discussion of our findings
	and a collection of guidelines and recommendations for practitioners that come out
	of our empirical study, as well as open questions for theoreticians.
\end{itemize}

\section{Contextual Bandit Setup}
\label{sec:setup}
%!TEX root = main.tex

In this section, we present the learning setup considered in this work,
recalling the stochastic contextual bandit setting, the notion of optimization
oracles,
various techniques used by contextual bandit algorithms for leveraging these oracles,
and finally our experimental setup.

\subsection{Learning Setting}

The stochastic~(i.i.d.) contextual bandit learning problem can be described as follows.
At each time step~$t$, the environment produces a pair~$(x_t, \ell_t) \sim D$
independently from the past, where~$x_t \in \mathcal{X}$ is a context vector and~$\ell_t = (\ell_t(1), \ldots, \ell_t(K)) \in \R^K$ is a loss vector, with~$K$ the number of possible actions,
and the data distribution is denoted~$D$.
After observing the context~$x_t$, the learner chooses an action~$a_t$, and only observes the loss~$\ell_t(a_t)$ corresponding to the chosen action.
The goal of the learner is to trade-off exploration and exploitation in order to incur a small cumulative regret
\begin{equation*}
R_T := \sum_{t=1}^T \ell_t(a_t) - \sum_{t=1}^T \ell_t(\pi^*(x_t)),
\end{equation*}
with respect to the optimal policy~$\pi^* \in \arg\min_{\pi \in \Pi} \E_{(x, \ell) \sim D}[\ell(\pi(x))]$,
where~$\Pi$ denotes a (large, possibly infinite) set of policies~$\pi : \mathcal{X} \to \{1, \ldots, K\}$
which we would like to do well against.
It is often important for the learner to use randomized strategies,
for instance in order to later evaluate or optimize new policies,
hence we let~$p_t(a) \in [0,1]$ denote the probability that the agent chooses action~$a \in \{1, \ldots, K\}$ at time~$t$, so that~$a_t\sim p_t$.

\subsection{Optimization Oracles}
\label{sub:oracles}

In this paper, we focus on CB algorithms which rely on access to an \emph{optimization oracle}
for solving optimization problems similar to those that arise in supervised learning,
leading to methods that are suitable for general policy classes~$\Pi$.
The main example is the \textbf{cost-sensitive classification (CSC) oracle}~\citep{agarwal2014taming,dudik2011efficient,langford2008epoch},
which given a collection $\{(x_t, c_t)\}_{t=1,\ldots,T} \subset \mathcal{X} \times \R^K$ computes
\begin{equation}
\label{eq:csc_oracle}
\arg\min_{\pi\in \Pi} \sum_{t=1}^T c_t(\pi(x_t)).
\end{equation}
The cost vectors~$c_t = (c_t(1), \ldots, c_t(K)) \in \R^K$ are often constructed using counterfactual estimates of the true (unobserved) losses, as we describe in the next section.

Another approach is to use \textbf{regression oracles}, which find~$f : \mathcal{X} \times \{1, \ldots, K\} \to \R$
from a class of regressor functions~$\mathcal{F}$
to predict a cost~$y_t$, given a context~$x_t$ and action~$a_t$~\citep[see, \eg,][]{agarwal2012contextual,foster2018practical}.
In this paper, we consider the following regression oracle with importance weights $\omega_t > 0$, which given a collection~$\{(x_t, a_t, y_t, \omega_t)\}_{t=1,\ldots,T}$ computes
\begin{equation}
\label{eq:reg_oracle}
\arg\min_{f\in \mathcal{F}} \sum_{t=1}^T \omega_t(f(x_t, a_t) - y_t)^2.
\end{equation}

While the theory typically requires exact solutions to~\eqref{eq:csc_oracle} or~\eqref{eq:reg_oracle},
this is often impractical due to the difficulty of the underlying optimization problem (especially for CSC, which yields a non-convex and non-smooth problem),
and more importantly because the size of the problems to be solved keeps increasing after each iteration. In this work, we consider instead the use of \emph{online optimization oracles} for solving problems~\eqref{eq:csc_oracle}
or~\eqref{eq:reg_oracle}, which incrementally update a given policy or regression function after each new observation,
using for instance an online gradient method.
Such an online learning approach is natural in the CB setting,
and is common in interactive production systems~\citep[\eg,][]{agarwal2016multiworld,he2014practical,mcmahan2013ad}.
More details on the implementation of these online oracles are given in Appendix~\ref{sec:alg_details}.

\subsection{Loss Estimates and Reductions}
\label{sub:reductions}
A common approach to solving problems with bandit (partial) feedback is to compute an estimate of the full feedback using the observed loss and then apply methods for the full-information setting to these estimated values.
With enough randomization in the choices of actions, these can provide good \emph{counterfactual} estimates for other actions, and in turn, for new policies.
In the case of CBs, this allows an algorithm to find a ``good'' policy based on
\emph{off-policy} exploration data previously collected by the algorithm,
for instance through various forms of off-policy learning~\citep[see, \eg,][]{dudik2011doubly}.

Given loss estimates~$\hat \ell_t(a)$ of~$\ell_t(a)$ for all actions~$a$ (including unobserved ones) and for~$t = 1, \ldots, T$,
one may directly write such an off-policy learning problem as optimizing a CSC objective of the form~\eqref{eq:csc_oracle} over policies, where~$x_t$ are the observed contexts and the cost vectors are defined by~$c_t = (\hat \ell_t(1), \ldots, \hat \ell_t(K)) \in \R^K$.
A CB algorithm that relies on policies trained in this manner may be said to operate by \emph{reduction to off-policy learning}. A canonical example of this is the~$\epsilon$-Greedy algorithm~\citep{langford2008epoch}.
More generally, CB algorithms may use these loss estimates to create more complex cost vectors~$c_t$ which could include additional bonuses for certain actions~\citep[\eg,][]{agarwal2014taming,dudik2011efficient},
a process which we refer to as \emph{reduction to cost-sensitive classification}, since this does not directly correspond to off-policy learning.

We now describe the three different mechanisms considered in this paper for loss estimation and reduction to off-policy learning. The first two compute vectors~$\hat \ell_t \in \R^K$ of loss estimates; these may then be fed directly to a CSC oracle for off-policy learning, or used as part of more general reductions to CSC. The third approach relies directly on a regression oracle for reducing to off-policy learning.
In what follows, we consider observed interaction records~$(x_t, a_t, \ell_t(a_t), p_t(a_t))$.

\subparagraph{IPS (inverse propensity-scoring).}
Perhaps the simplest approach is the following IPS estimator:
\begin{equation}
\label{eq:ips}
\hat{\ell}_t(a) := \frac{\ell_t(a_t)}{p_t(a_t)} \1\{a = a_t\}.
\end{equation}
For any action~$a$ with $p_t(a) > 0$, this estimator is unbiased, \ie~$\E_{a_t\sim p_t}[\hat{\ell}_t(a)] = \ell_t(a)$, but can have high variance when~$p_t(a_t)$ is small.
The estimator leads to a straightforward CSC example~$(x_t, \hat{\ell}_t)$.
Using such examples in~\eqref{eq:csc_oracle} provides a way to perform off-policy (or counterfactual) evaluation and optimization, which in turn allows a CB algorithm to identify good policies for exploration.
In order to obtain good unbiased estimates, one needs to control the variance of the estimates, \eg, by enforcing a minimum exploration probability~$p_t(a) \geq \epsilon > 0$ on all actions.

\subparagraph{DR (doubly robust).}
In order to reduce the variance of IPS, the doubly robust estimator~\citep{dudik2011doubly} uses a separate, possibly biased, estimator of the loss $\hat{\ell}(x, a)$:
\begin{equation}
\label{eq:dr}
\hat{\ell}_t(a) := \frac{\ell_t(a_t) - \hat{\ell}(x_t, a_t)}{p_t(a_t)} \1\{a = a_t\} + \hat{\ell}(x_t, a).
\end{equation}
When $\hat{\ell}(x_t, a_t)$ is a good estimate of~$\ell_t(a_t)$, the small numerator in the first term helps reduce the variance induced by a small denominator, while the second term ensures that the estimator is unbiased on the support of~$p_t$.
Typically, $\hat{\ell}(x, a)$ is learned by regression on all past observed losses, \eg,
\begin{equation}
\label{eq:loss_estimator_def}
\hat \ell := \arg\min_{f \in \mathcal F} \sum_{t' \leq t} (f(x_{t'}, a_{t'}) - \ell_{t'}(a_{t'}))^2.
\end{equation}
The reduction to cost-sensitive classification is similar to IPS, by feeding cost vectors~$c_t = \hat \ell_t$ to the CSC oracle.

\subparagraph{IWR (importance-weighted regression).}
We consider a third method that directly reduces to the importance-weighted regression oracle~\eqref{eq:reg_oracle},
which we refer to as IWR,
and is suitable for algorithms which rely on off-policy learning.\footnote{Note that IWR is not directly applicable to methods that explicitly reduce to CSC oracles with well-chosen cost vectors, such as~\citet{agarwal2014taming,dudik2011efficient}.}
This approach finds a regressor
\begin{equation}
\label{eq:mtr}
\hat{f} := \arg\min_{f \in \mathcal{F}} \sum_{t=1}^T \frac{1}{p_t(a_t)}(f(x_t, a_t) - \ell_t(a_t))^2,
\end{equation}
and considers the policy~$\hat{\pi}(x) = \arg\min_a \hat{f}(x, a)$.
Such an estimator has been used, \eg, in the context of off-policy learning for recommendations~\citep{schnabel2016recommendations}
and is available in the Vowpal Wabbit library.
Note that if~$p_t$ has full support, then the objective is an unbiased
estimate of the full regression objective on all actions,
\begin{align*}
\sum_{t=1}^T \sum_{a=1}^K (f(x_t, a) - \ell_t(a))^2.
\end{align*}
In contrast, if the learner only explores a single action (so that $p_t(a_t) = 1$ for all~$t$),
the obtained regressor~$\hat f$ is the same as the loss estimator~$\hat \ell$ in~\eqref{eq:loss_estimator_def}.
In this csae,
if we consider a linear class of regressors of the form $f(x, a) = \theta_a^{\top} x$ with $x \in \R^d$,
then the \MTR{} reduction computes least-squares estimates~$\hat{\theta}_a$
from the data observed when action~$a$ was chosen.
When actions are selected according to the greedy policy $a_t = \arg\min_a \hat{\theta}_a^\top x_t$,
this setup corresponds to the greedy algorithm considered, \eg, by~\citet{bastani2017exploiting}.

Note that while CSC is typically intractable and requires approximations in order to work in practice,
importance-weighted regression does not suffer from these issues.
In addition, while the computational cost for an approximate CSC online update scales with the number of actions~$K$,
\MTR{} only requires an update for a single action,
making the approach more attractive computationally.
Another benefit of \MTR{} in an online setting is that it can leverage
importance weight aware online updates~\citep{karampatziakis2011online},
which makes it easier to handle large inverse propensity scores.

\subsection{Experimental Setup}
\label{sub:exp_setup}
Our experiments are conducted by simulating the contextual bandit setting using multiclass or cost-sensitive classification datasets,
and use the online learning system Vowpal Wabbit (VW).

% \vspace{-0.2cm}
\paragraph{Simulated contextual bandit setting.}
The experiments in this paper are based on leveraging supervised cost-sensitive classification datasets for simulating CB learning.
In particular, we treat a CSC example $(x_t, c_t) \in \mathcal{X} \times \R^K$
as a CB example, with $x_t$ given as the context to a CB algorithm,
and we only reveal the loss for the chosen action~$a_t$.
For a multiclass example with label~$y_t \in \{1, \ldots, K\}$, we set $c_t(a) := \1\{a \ne y_t\}$;
for multilabel examples with label set~$Y_t \subseteq \{1, \ldots, K\}$, we set $c_t(a) := \1\{a \notin Y_t\}$;
the cost-sensitive datasets we consider have $c_t \in [0,1]^K$.
We consider more general \emph{loss encodings} defined with an additive offset on the cost by:
\begin{equation}
\label{eq:loss_enc_def}
\ell_t^{c}(a) = c + c_t(a),
\end{equation}
for some~$c\in \R$.
Although some techniques attempt to remove a dependence on such encoding choices
through appropriately designed counterfactual loss estimators~\citep{dudik2011doubly,swaminathan2015self},
these may be imperfect in practice, and particularly in an online scenario.
The behavior observed for different choices of~$c$ allows us to get a sense of the
robustness of the algorithms to the scale of observed losses,
which might be unknown.
Separately, different values of~$c$ can lead to lower variance for loss estimation in different scenarios:
$c = 0$ might be preferred if $c_t(a)$ is often 0, while $c = -1$ is preferred
when $c_t(a)$ is often 1.
In order to have a meaningful comparison between different algorithms,
loss encodings, as well as supervised multiclass classification,
our evaluation metrics consider the original costs~$c_t$, and view the loss encodings as a hyperparameter or design choice.

\paragraph{Online learning in VW.}
Online learning is an important tool for having machine learning systems
that quickly and efficiently adapt to observed data~\citep{agarwal2016multiworld,he2014practical,mcmahan2013ad}.
We run our CB algorithms in an online fashion using Vowpal Wabbit:
instead of exact solutions of the optimization oracles from Section~\ref{sub:oracles},
we consider online variants of the CSC and regression oracles, which incrementally update
the policies or regressors with online gradient steps or variants thereof.
More details on their implementations are provided in Appendix~\ref{sec:alg_details}.
Note that in VW, online CSC itself reduces to multiple online regression problems in VW (one per action),
so that we are left with only online regression steps.
In order to provide more adaptivity to the wide range of datasets considered and to better handle importance weights in IWR, our online regression updates use adaptive~\citep{duchi2011adaptive},
normalized~\citep{ross2013normalized} and importance-weight-aware~\citep{karampatziakis2011online} gradient updates,
with a single tunable step-size parameter.

\paragraph{Parameterization.}
We consider linearly parameterized policies taking the form $\pi(x) = \arg\min_a \theta_a^\top x$,
or in the case of the IWR reduction, regressors $f(x, a) = \theta_a^\top x$. 
For the DR loss estimator, we use a similar linear parameterization $\hat{\ell}(x, a) = \phi_a^\top x$.
Some datasets in our evaluation have an action-dependent structure, with different feature
vectors~$x_a$ for different actions~$a$; in this case we use parameterizations of the form
$f(x, a) = \theta^\top x_a$, and $\hat{\ell}(x, a) = \phi^\top x_a$,
where the parameters~$\theta$ and~$\phi$ are shared across all actions.
We note that the algorithms we consider do not rely on these specific forms, and easily
extend to more complex, problem-dependent representations, in particular any linear parameterization~$f(x, a) = \theta^\top \Phi(x, a)$ with any choice of feature map~$\Phi(x, a)$, which includes the above-mentioned settings but can be more general.

\section{Algorithms}
\label{sec:algorithms}
%!TEX root = main.tex

In this section, we present the main algorithms we study in this paper,
along with simple modifications that achieve improved exploration efficiency.
All methods are based on the generic scheme in Algorithm~\ref{alg:cb}.
The function $\verb+explore+$ computes the exploration distribution~$p_t$ over actions,
and $\verb+learn+$ updates the algorithm's policies~$\pi$ or regressors~$f$.
These policies or regressors are seen as global variables, and their parameters are updated using the following routines, which implement in particular the oracles and loss estimators described in Section~\ref{sec:setup} (more details on their actual implementation are provided in Appendix~\ref{sec:alg_details}):
\begin{itemize}
  \item $\verb+csc_update+(\pi, (x, c))$: performs an online CSC update to the policy~$\pi$ using the CSC example~$(x, c) \in \mathcal X \times \R^K$.
  \item $\verb+reg_update+(f, (x, a, y, \omega))$: performs an online (importance-weighted) regression update to the regressor~$f$ on the example~$(x, a, y, \omega)$. When the importance weight~$\omega$ is not provided we assume a default value~$\omega = 1$.
  \item $\verb+estimator+(x_t, a_t, \ell_t(a_t), p_t)$: computes a vector of loss estimates~$\hat \ell_t \in \R^K$ using either IPS~\eqref{eq:ips} or DR~\eqref{eq:dr} from the interaction record~$(x_t, a_t, \ell_t(a_t), p_t)$. In the case of DR, this routine also performs an online regression update to a global loss estimator~$\hat \ell$ before computing the DR loss estimates, which takes the form~$\verb+reg_update+(\hat \ell, (x_t, a_t, \ell_t(a_t)))$.
  \item $\verb+opl_update+(\pi, (x_t, a_t, \ell_t(a_t), p_t))$: performs an online off-policy learning update using either IPS, DR or IWR using an interaction record~$(x_t, a_t, \ell_t(a_t), p_t)$. For IPS and DR this consists of a call to the corresponding~$\verb+estimator+$ routine to obtain the loss estimates~$\hat \ell_t$, followed by a call to~$\verb+csc_update+(\pi, (x_t, \hat \ell_t))$. For IWR, this is simply a call to~$\verb+reg_update+(f, (x_t, a_t, \ell_t(a_t), 1/p_t))$, where~$f$ is the underlying regressor that defines the policy as~$\pi(x) = \arg\min_a f(x, a)$.
\end{itemize}

Some of the algorithms we consider reduce directly to off-policy learning and hence only need to rely on the~$\verb+opl_update+$ routine, for a particular choice of loss estimator among IPS, DR or IWR.
Others rely on more ad hoc calls to underlying optimization oracles, in particular Cover and RegCB, and hence use the other routines as well.

Our C++ implementations of each algorithm are available in the Vowpal Wabbit online learning library.

\begin{algorithm}[tb]
\caption{Generic contextual bandit algorithm}
\label{alg:cb}
\begin{algorithmic}
\FOR{$t = 1, \ldots$}
	\STATE Observe context~$x_t$, compute $p_t := \verb+explore+(x_t)$;
	\STATE Choose action $a_t \sim p_t$, observe loss $\ell_t(a_t)$;
	\STATE $\verb+learn+(x_t, a_t, \ell_t(a_t), p_t)$;
\ENDFOR
\end{algorithmic}
\end{algorithm}

\subsection{$\epsilon$-greedy and greedy}
\label{sub:e_greedy}

\begin{algorithm}[tb]
\caption{$\epsilon$-greedy}
\label{alg:egreedy}
\textbf{Inputs}: exploration rate $\epsilon > 0$ (or $\epsilon = 0$ for Greedy), off-policy learning oracle \verb+opl_update+ (IPS/DR/IWR for $\epsilon$-Greedy, IWR for Greedy).

\textbf{Global state}: policy $\pi$.

$\verb+explore+(x_t)$:
\begin{algorithmic}
  \STATE {\bfseries return} $p_t(a) = \epsilon / K + (1 - \epsilon) \1\{\pi(x_t) = a\}$;
\end{algorithmic}
$\verb+learn+(x_t, a_t, \ell_t(a_t), p_t)$:
\begin{algorithmic}
  \STATE $\verb+opl_update+(\pi, (x_t, a_t, \ell_t(a_t), p_t(a_t)))$;
\end{algorithmic}
\end{algorithm}
We consider an importance-weighted variant of the epoch-greedy
approach of~\citet{langford2008epoch}, given in Algorithm~\ref{alg:egreedy}.
The method acts greedily with probability~$1 - \epsilon$,
and otherwise explores uniformly on all actions.
Learning is achieved by reduction
to off-policy optimization, through any of the three reductions presented in Section~\ref{sub:reductions}.

We also experimented with a variant we call active $\epsilon$-greedy,
that uses notions from disagreement-based active learning~\citep{hanneke2014theory,hsu2010algorithms}
in order to reduce uniform exploration to only actions that could plausibly be taken by the optimal policy.
While this variant often improves on the basic~$\epsilon$-greedy method, we found that it is often outperformed empirically
by other exploration algorithms, and thus defer its presentation to Appendix~\ref{sec:active_e_greedy_appx},
along with a theoretical analysis, for reference.

\paragraph{Greedy.}
When taking $\epsilon = 0$ in the $\epsilon$-greedy approach, with the IWR reduction,\footnote{We note that while IPS and DR could also be used here, the resulting algorithms are less natural since loss estimates have large bias when~$p_t$ is only supported on the greedy action, making them poor candidates for off-policy learning.
We also found these variants to be outperformed by IWR in our experiments.}
we are left with a fully greedy approach that always selects the action given by the current policy.
This gives us an online variant of the greedy algorithm of~\citet{bastani2017exploiting},
which regresses on observed losses and acts by selecting the action with minimum predicted loss.
Although this greedy strategy does not have an explicit mechanism for exploration in its choice of actions,
the inherent diversity in the distribution of contexts may provide sufficient exploration for
good performance and provable regret guarantees~\citep{bastani2017exploiting,kannan2018smoothed}.
In particular, under appropriate assumptions including a diversity assumption on the contexts,
one can show that all actions have a non-zero probability of being selected at each step,
providing a form of ``natural'' exploration from which one can establish regret guarantees.
Empirically, we find that Greedy can perform very well in practice on many datasets
(see Section~\ref{sec:experiments}).
If multiple actions get the same score according to the current regressor, we break ties randomly.

\subsection{Bag (Online Bootstrap Thompson Sampling)}
\label{sub:bag}

\begin{algorithm}[th]
\caption{Bag / Online BTS}
\label{alg:bag}
\textbf{Inputs}: number of policies~$N$, off-policy learning oracle \verb+opl_update+ (IPS/DR/IWR).

\textbf{Global state}: policies~$\pi^1, \ldots, \pi^N$.

$\verb+explore+(x_t)$:
\begin{algorithmic}
  \STATE {\bfseries return} $p_t(a) \propto |\{i : \pi^i(x_t) = a\}|$;\footnotemark
\end{algorithmic}
$\verb+learn+(x_t, a_t, \ell_t(a_t), p_t)$:
\begin{algorithmic}
  \FOR{$i = 1, \ldots, N$}
  	\STATE $\tau^i \sim Poisson(1)$; \hfill \COMMENT{or $\tau^1 := 1$ for bag-greedy}
    \FOR{$s = 1, \ldots, \tau^i$}
      \STATE $\verb+opl_update+(\pi^i, (x_t, a_t, \ell_t(a_t), p_t(a_t)))$;
    \ENDFOR
  \ENDFOR
\end{algorithmic}
\end{algorithm}

We now consider a variant of Thompson sampling which is usable in practice with optimization oracles.
Thompson sampling provides a generic approach to exploration problems, which maintains a belief
on the data generating model in the form of a posterior distribution given the observed data,
and explores by selecting actions according to a model sampled from this posterior~\citep[see, \eg,][]{agrawal2013thompson,chapelle2011empirical,russo2017tutorial,thompson1933likelihood}.
While the generality of this strategy makes it attractive, maintaining this posterior distribution
can be intractable for complex policy classes, and may require strong modeling assumptions.
In order to overcome such difficulties and to support the optimization oracles considered in this paper,
we rely on an approximation of Thompson sampling known as online Bootstrap Thompson
sampling~\citep[BTS,][]{eckles2014thompson,eckles2019bootstrap,osband2015bootstrapped}, or bagging~\citep{agarwal2014taming}.
This approach, shown in Algorithm~\ref{alg:bag}, maintains a collection of~$N$ policies $\pi^1, \ldots, \pi^N$
meant to approximate the posterior distribution over policies via the online
Bootstrap~\citep{agarwal2014taming,eckles2014thompson,osband2015bootstrapped,oza2001online,qin2013efficient},
and explores in a Thompson sampling
fashion, by computing the probability over
policies of each action in a bagging style (hence the name \emph{Bag}).

\footnotetext{When policies are parametrized using regressors as in our implementation, we let $\pi^i(x)$ be uniform over all actions tied for the lowest cost, and the final distribution is uniform across all actions tied for best according to one of the policies in the bag. The added randomization gives useful variance reduction in our experiments.}

Each policy is trained on a different online Bootstrap sample of the observed data, in the form of interaction records.
The online Bootstrap performs a random number $\tau$ of online
updates to each policy instead of one.
We use a Poisson distribution with parameter~1 for~$\tau$, which ensures that in expectation,
each policy is trained on~$t$ examples after~$t$ steps.
In contrast to~\citet{eckles2014thompson,osband2015bootstrapped}, which play the arm given by one of
the~$N$ policies chosen at random, we compute the full action
distribution~$p_t$ resulting from such a sampling, and leverage this
for loss estimation, allowing learning by reduction to off-policy optimization as in~\citet{agarwal2014taming}.
As in the~$\epsilon$-greedy algorithm, Bag directly relies on off-policy learning and thus all three reductions are admissible.

\paragraph{Bag-Greedy.}
We also consider a simple optimization that we call \emph{Bag-greedy},
for which the first policy~$\pi^1$ is trained on the true data sample (like Greedy), that is,
with~$\tau$ always equal to one, instead of a bootstrap sample with random choices of~$\tau$.
We found this approach to often improve on Bag, particularly when the number of policies~$N$ is~small.
We note that future work could consider other modifications of Bag which lead to an increasingly Greedy-like behavior controlled by an exploration parameter~$\epsilon$. For instance, putting a~$1- \epsilon$ weight in~$p_t$ on the action chosen either by~$\pi^1$, or by the majority of policies, and only~$\epsilon$ weight on the actions chosen by the other policies.

\subsection{Cover}
\label{sub:cover}

\begin{algorithm}[tb]
\caption{Cover}
\label{alg:cover}
\textbf{Inputs}: number of policies~$N$, exploration parameters $\epsilon_t = \min(1/K, 1/\sqrt{Kt})$ and $\psi > 0$, loss estimator \verb+estimator+ (IPS or DR).

\textbf{Global state}: policies $\pi^1, \ldots, \pi^N$.

$\verb+explore+(x_t)$:
\begin{algorithmic}
  \STATE $p_t(a) \propto |\{i : \pi^i(x_t) = a\}|$;
  \STATE {\bfseries return} $\epsilon_t + (1 - \epsilon_t) p_t$; \hfill \COMMENT{for cover}
  \STATE {\bfseries return} $p_t$; \hfill \COMMENT{for cover-nu}
\end{algorithmic}
$\verb+learn+(x_t, a_t, \ell_t(a_t), p_t)$:
\begin{algorithmic}
  \STATE $\hat{\ell}_t := \verb+estimator+(x_t, a_t, \ell_t(a_t), p_t(a_t))$;
  \STATE $\verb+csc_update+(\pi^1, (x_t, \hat \ell_t))$;
  \FOR{$i = 2, \ldots, N$}
    \STATE $q_i(a) \propto |\{j \leq i - 1 : \pi^j(x_t) = a\}|$;
    \STATE $\hat{c}(a) := \hat{\ell}_t(a) - \frac{\psi \epsilon_t}{\epsilon_t + (1 - \epsilon_t)q_i(a)}$;
  	\STATE $\verb+csc_update+(\pi^i, (x_t, \hat{c}))$;
  \ENDFOR
\end{algorithmic}
\end{algorithm}

This method, given in Algorithm~\ref{alg:cover}, is based on Online Cover,
an online approximation of the ``ILOVETOCONBANDITS'' algorithm of~\citet{agarwal2014taming}.
The approach maintains a collection of $N$ policies, $\pi^1, \ldots, \pi^N$,
meant to approximate a covering distribution over policies that are good for both exploration and exploitation.
The first policy $\pi^1$ is trained on observed data using the oracle as in previous algorithms,
while subsequent policies are trained using modified cost-sensitive examples which encourage diversity in
the predicted actions compared to the previous policies.
Since this method relies on CSC updates with well-chosen cost vectors, it does support general off-policy learning oracles, in particular IWR updates.
We note that we have tried simple variants of Cover based on the more computationally efficient IWR updates, but have not succeeded at obtaining one such variant that is competitive with DR in experiments.

Our implementation differs from the Online Cover algorithm of~\citet[Algorithm 5]{agarwal2014taming} in how
the diversity term in the definition of~$\hat{c}(a)$ is handled (the second term).
When creating cost-sensitive examples for a given policy~$\pi^i$,
this term rewards an action~$a$ that is not well-covered by previous policies (\ie, small~$q_i(a)$),
by subtracting from the cost a term that decreases with $q_i(a)$.
While Online Cover considers a fixed $\epsilon_t = \epsilon$,
we let~$\epsilon_t$ decay with~$t$, and introduce a parameter~$\psi$ to control the overall reward term,
which bears more similarity with the analyzed algorithm.
In particular, the magnitude of the reward is~$\psi$ whenever action~$a$ is not covered by previous policies
(\ie, $q_i(a) = 0$), but decays with $\psi \epsilon_t$ whenever $q_i(a) > 0$,
so that the level of induced diversity can decrease over time as we gain confidence
that good policies are covered.

\paragraph{Cover-NU.}
While Cover requires some uniform exploration across all actions, our experiments suggest that
this can make exploration highly inefficient, thus we introduce a variant, \emph{Cover-NU},
with \emph{n}o \emph{u}niform exploration outside the set of actions selected by covering policies.

\begin{algorithm}[tb]
\caption{RegCB}
\label{alg:regcb}
\textbf{Inputs}: exploration parameters~$C_0 > 0$ and~$\Delta_{t,C_0}$ as in~\eqref{eq:regcb_delta}.

\textbf{Global state}: regressor~$f$.

$\verb+explore+(x_t)$:
\begin{algorithmic}
  \STATE $l_t(a) := \verb+lcb+(f, x_t, a, \Delta_{t,C_0})$;
  \STATE $u_t(a) := \verb+ucb+(f, x_t, a, \Delta_{t,C_0})$;
  \STATE $p_t(a) \propto \1\{a \in \arg\min_{a'} l_t(a')\}$; \hfill \COMMENT{RegCB-opt variant}
  \STATE $p_t(a) \propto \1\{l_t(a) \leq \min_{a'} u_t(a')\}$; \hfill \COMMENT{RegCB-elim variant}
  \STATE {\bfseries return} $p_t$;
\end{algorithmic}
$\verb+learn+(x_t, a_t, \ell_t(a_t), p_t)$:
\begin{algorithmic}
  \STATE $\verb+reg_update+(f, (x_t, a_t, \ell_t(a_t)))$;
\end{algorithmic}
\end{algorithm}

\subsection{RegCB}
\label{sub:regcb}

We consider online approximations of the two algorithms introduced by~\citet{foster2018practical} based on regression oracles,
shown in Algorithm~\ref{alg:regcb}.
Both algorithms estimate confidence intervals of the loss for each action given the current context~$x_t$,
denoted~$[l_t(a), u_t(a)]$ in Algorithm~\ref{alg:regcb}.
The \emph{optimistic} variant then selects the action with smallest lower bound estimate, similar to LinUCB,
while the \emph{elimination} variant explores uniformly on actions that may plausibly be the best.

\paragraph{Confidence bounds with regression oracles.}
The confidence intervals are obtained by considering worst-case predictions over a confidence set of regressors with small excess squared loss.
Concretely, the RegCB algorithm studied by~\citet{foster2018practical}, which uses offline regression oracles rather than online oracles, defines the confidence bounds as follows:
\begin{equation}
\label{eq:lucb}
l_t(a) = \min_{f \in \mathcal F_t} f(x_t, a), \quad \text{and} \quad u_t(a) = \max_{f \in \mathcal F_t} f(x_t, a).
\end{equation}
Here,~$\mathcal F_t$ is a subset of regressors that is ``good'' for loss estimation,
in the sense that it achieves a small regression loss on observed data,
$\hat R_{t-1}(f) := \frac{1}{t-1} \sum_{s=1}^{t-1} (f(x_t, a_t) - \ell_t(a_t))^2$,
compared to the best regressor in the full regressor class~$\mathcal F$:
\[
\mathcal F_t := \{f \in \mathcal F : \hat R_{t-1}(f) - \min_{f \in \mathcal F} \hat R_{t-1}(f) \leq \Delta_t\},
\]
where~$\Delta_t$ is a quantity decreasing with~$t$ obtained from the theoretical analysis.

A key insight in~\citep{foster2018practical} is that the lower and upper confidence estimates given in~\eqref{eq:lucb} can be approximately computed using a small number of calls to a regression oracle.
If we consider the lower bound~$l_t(a)$ (the upper bound calculations are analogous), and if we assume that losses are in a known range~$[c_{\min}, c_{\max}]$, these oracle calls are based on the observed dataset along with one additional weighted example~$(x_t, a, y_l, \omega)$ with~$y_l = c_{\min} - 1$ (for the upper bound, $y_u = c_{\max} + 1$ is used instead), that is
\begin{equation}
\label{eq:fomega}
f^l_{a,\omega} := \arg\min_{f \in \mathcal F} ~(t-1) \hat R_{t-1}(f) + \omega (f(x_t, a) - y_l)^2,
\end{equation}
where~$\omega$ is an importance weight that is chosen as large as possible while ensuring that~$f^l_{a,\omega} \in \mathcal F_t$, that is,~$\hat R_{t-1}(f^l_{a,\omega}) - \min_f \hat R_{t-1}(f) \leq \Delta_t$.
Then it can be shown that~$l_t(a) \approx f^l_{a,\omega}(x_t, a)$, with an accuracy that improves exponentially with the number of oracle calls (corresponding to the number of binary search iterations over the choice of~$\omega$).

\paragraph{Online approximation of confidence bounds.}
In order to compute confidence bounds in our online setup, we approximate the above procedure using an importance-weight sensitivity analysis of online regression, similar to the approach described in~\citep[Section 7.1]{krishnamurthy2017active} in the context of active learning.
The key idea is that one may approximate~$f^l_{a,\omega}$ in~\eqref{eq:fomega} by performing an importance-weighted online regression update with example~$(x_t, a, y_l, \omega)$ to the regressor trained on the previous observed examples, which is denoted~$f$ in Algorithm~\ref{alg:regcb}.
Since such an update to~$f$'s parameter using online gradient descent is typically proportional to~$\omega$, we may then consider the approximation
\begin{equation}
\label{eq:fomega_approx}
f^l_{a,\omega}(x_t, a) \approx f(x_t, a) + s^l_a \omega,
\end{equation}
where~$s^l_a$ is the \emph{sensitivity} of~$f(x_t, a)$ to the update~$(x_t, a, y_l, \omega)$, given by the derivative of the output of the updated regressor w.r.t.~$\omega$ (which is negative here assuming~$f(x_t, a) \geq c_{\min}$).
It remains to find a maximal value of~$\omega$, denoted~$\omega^*$, such that the resulting updated regressor is still ``close'' to~$f$ in the sense of excess squared loss.
In order to find a reasonable method for finding such~$\omega^*$, it is helpful to reason again in terms of offline oracles.
Denoting~$f_t = \arg\min \hat R_{t-1}(f)$, we would like to consider
\[
\omega^* = \max\{\omega \text{ s.t. } \hat R_{t-1}(f^l_{a,\omega}) - \hat R_{t-1}(f_t) \leq \Delta_t\},
\]
where~$f^l_{a,\omega}$ is given in~\eqref{eq:fomega}. \citet{krishnamurthy2017active} show the following upper bound:
\[
(t-1) (\hat R_{t-1}(f^l_{a,\omega}) - \hat R_{t-1}(f_t)) \leq \omega (f_t(x_t, a) - y_l)^2 - \omega (f^l_{a,\omega}(x_t, a) - y_l)^2.
\]
We may then choose~$\omega^*$ by maximizing this upper bound, using the approximation~\eqref{eq:fomega_approx} instead of~$f^l_{a,\omega}(x_t, a)$ and the online estimate~$f$ from Algorithm~\ref{alg:regcb} instead of~$f_t$.
This leads to the following implementation of $\verb+lcb+$ in Algorithm~\ref{alg:regcb}:
\begin{align*}
l_t(a) &:= f(x_t, a) + s^l_a \omega^* \\
\text{with~~} \omega^* &:= \max\{\omega \text{ s.t. } \omega (f(x_t, a) - y_l)^2 - \omega (f(x_t, a) + s^l_a \omega - y_l)^2 \leq (t-1) \Delta_t\},
\end{align*}
where the optimization over~$\omega$ can be done with a simiple binary search procedure in the range $[0, (f(x_t, a) - y_l) / (-s^l_a)]$, since one can verify that the objective is increasing in this range.
In practice, we replace the theoretical value $(t-1) \Delta_t$ by~$\Delta_{t,C_0}$ given by
\begin{equation}
\label{eq:regcb_delta}
\Delta_{t,C_0} = C_0 \log (Kt),
\end{equation}
where~$C_0$ is a parameter controlling the width of the confidence bounds.
Note that unlike other methods, this algorithm requires knowledge of the loss range~$[c_{\min}, c_{\max}]$ (which defines the artificial labels~$y^l$ and~$y^u$ used in $\verb+lcb+/\verb+ucb+$).

\section{Evaluation}
\label{sec:experiments}
%!TEX root = main.tex

In this section, we present our evaluation of the contextual bandit algorithms described in Section~\ref{sec:algorithms}.
The evaluation code is available at \url{https://github.com/albietz/cb_bakeoff}.
All methods presented in this section are available in Vowpal Wabbit.\footnote{For reproducibility purposes, the precise version of VW used to run these experiments is available at \url{https://github.com/albietz/vowpal_wabbit/tree/bakeoff}.}

\paragraph{Evaluation setup.}
Our evaluation consists in simulating a CB setting from
cost-sensitive classification datasets, as described in Section~\ref{sub:exp_setup}.
We consider a collection of 516 multiclass classification datasets from the \url{openml.org} platform,
including among others, medical, gene expression, text, sensory or synthetic data,
as well as 5 multilabel datasets\footnote{\url{https://www.csie.ntu.edu.tw/~cjlin/libsvmtools/datasets/multilabel.html}}
and 3 cost-sensitive datasets,
namely a cost-sensitive version of the RCV1 multilabel dataset used in~\citep{krishnamurthy2017active},
where the cost of a news topic is equal to the tree distance to a correct topic,
as well as the two learning to rank datasets used in~\citep{foster2018practical}.
More details on these datasets are given in Appendix~\ref{sec:datasets}.
Because of the online setup, we consider one or more fixed, shuffled orderings of each dataset. The datasets widely vary in noise levels, and number of actions, features, examples etc., allowing us to model varying difficulties in CB problems.

We evaluate the algorithms described in Section~\ref{sec:algorithms}.
We ran each method on every dataset with different choices of algorithm-specific hyperparameters,
learning rates, reductions, and loss encodings.
Details are given in Appendix~\ref{sub:hyperparams_appx}.
Unless otherwise specified, we consider \emph{fixed choices} which are chosen to optimize performance on a subset of multiclass datasets with a voting mechanism
and are highlighted in Table~\ref{table:algos} of Appendix~\ref{sec:evaluation_appx},
except for the learning rate, which is always optimized.

The performance of method~$\mathcal A$ on a dataset of size $n$ is measured by the
\textbf{progressive validation loss}~\citep{blum1999beating}:
\[
PV_{\mathcal A} = \frac{1}{n} \sum_{t=1}^{n} c_t(a_t),
\]
where~$a_t$ is the action chosen by the algorithm on the $t$-th example,
and~$c_t$ the true cost vector.
This metric allows us to capture the explore-exploit trade-off,
while providing a measure of generalization that is independent of the choice of loss encodings,
and comparable with online supervised learning.
We also consider a \emph{normalized loss} variant given by $\frac{PV_{\mathcal A} - PV_{\text{OAA}}}{PV_{\text{OAA}}}$,
where $\text{OAA}$ denotes an online (supervised) cost-sensitive one against all classifier.
This helps highlight the difficulty of exploration for some datasets in our plots.

In order to compare two methods on a given dataset with binary costs (multiclass or multilabel),
we consider a notion of \textbf{statistically significant win or loss}.
We use the following (heuristic) definition of significance based on an approximate Z-test:
if~$p_a$ and~$p_b$ denote the PV loss of~$a$ and~$b$ on a given dataset of size~$n$,
then $a$ wins over~$b$ if
\[1 - \Phi \left(\frac{p_a - p_b}{\sqrt{\frac{p_a(1 - p_a)}{n} + \frac{p_b(1 - p_b)}{n}}} \right) < 0.05, \]
where $\Phi$ is the Gauss error function.
We also define the \emph{significant win-loss difference} of one algorithm against another
to be the difference between the number of significant wins and significant losses.
We have found these metrics to provide more insight into the behavior of different methods,
compared to strategies based on aggregation of loss measures across all datasets.
Indeed, we often found the relative performance of two methods to vary significantly across datasets,
making aggregate metrics less informative.

Results in this section focus on Greedy (G), RegCB-optimistic (RO), Cover-NU (C-nu), Bag/BTS-greedy (B-g)
and $\epsilon$-greedy ($\epsilon$G), deferring other variants to Appendix~\ref{sec:evaluation_appx},
as their performance is typically comparable to or dominated by these methods.
We combine results on multiclass and multilabel datasets, but show them separately in Appendix~\ref{sec:evaluation_appx}.

\begin{table}[tb]
\caption{Entry (row, column) shows the \emph{statistically significant} win-loss difference
of row against column. Encoding fixed to -1/0 (top) or 0/1 (bottom).
(left) held-out datasets only (322 in total); fixed hyperparameters, only the learning rate is optimized;
(right) all datasets (521 in total); all choices of hyperparameters are optimized on each dataset
(different methods have different hyperparameter settings from 1 to 18, see Table~\ref{table:algos} in Appendix~\ref{sec:evaluation_appx}).}
\label{table:win_loss_diff}
% \vspace{-0.3cm}

\small
\center
%run paper_tables.py --granular_name --name neg10 --base_name allbinary --short --noval
\begin{tabular}{ | l | c | c | c | c | c |  }
\hline
$\downarrow$ vs $\rightarrow$ & G & RO & C-nu & B-g & $\epsilon$G \\ \hline
G & - & -9 & 11 & 48 & 53 \\ \hline
\textbf{RO} & \textbf{9} & - & \textbf{28} & \textbf{50} & \textbf{69} \\ \hline
C-nu & -11 & -28 & - & 21 & 54 \\ \hline
B-g & -48 & -50 & -21 & - & 18 \\ \hline
$\epsilon$G & -53 & -69 & -54 & -18 & - \\ \hline
\end{tabular}
%
%
%run paper_tables.py --opt_neg10 --b nb --base_name allbinary --short
~~\begin{tabular}{ | l | c | c | c | c | c |  }
\hline
$\downarrow$ vs $\rightarrow$ & G & RO & C-nu & B-g & $\epsilon$G \\ \hline
G & - & -32 & -88 & -21 & 59 \\ \hline
RO & 32 & - & -33 & 12 & 87 \\ \hline
\textbf{C-nu} & \textbf{88} & \textbf{33} & - & \textbf{60} & \textbf{140} \\ \hline
B-g & 21 & -12 & -60 & - & 86 \\ \hline
$\epsilon$G & -59 & -87 & -140 & -86 & - \\ \hline
\end{tabular}

\vspace{0.1cm}
-1/0 encoding
\vspace{0.3cm}

%run paper_tables.py --granular_name --name 01 --base_name allbinary --short --noval
\begin{tabular}{ | l | c | c | c | c | c |  }
\hline
$\downarrow$ vs $\rightarrow$ & G & RO & C-nu & B-g & $\epsilon$G \\ \hline
G & - & -62 & -17 & 38 & 51 \\ \hline
\textbf{RO} & \textbf{62} & - & \textbf{44} & \textbf{100} & \textbf{117} \\ \hline
C-nu & 17 & -44 & - & 48 & 74 \\ \hline
B-g & -38 & -100 & -48 & - & 15 \\ \hline
$\epsilon$G & -51 & -117 & -74 & -15 & - \\ \hline
\end{tabular}
%run paper_tables.py --opt_01 --b nb --base_name allbinary --short
~~\begin{tabular}{ | l | c | c | c | c | c |  }
\hline
$\downarrow$ vs $\rightarrow$ & G & RO & C-nu & B-g & $\epsilon$G \\ \hline
G & - & -121 & -145 & -69 & 6 \\ \hline
RO & 121 & - & -10 & 68 & 122 \\ \hline
\textbf{C-nu} & \textbf{145} & \textbf{10} & - & \textbf{69} & \textbf{141} \\ \hline
B-g & 69 & -68 & -69 & - & 78 \\ \hline
$\epsilon$G & -6 & -122 & -141 & -78 & - \\ \hline
\end{tabular}

\vspace{0.1cm}
0/1 encoding

\end{table}

\begin{table}[tb]
\caption{Progressive validation loss for cost-sensitive datasets with real-valued costs in~$[0,1]$.
Hyperparameters are fixed as in Table~\ref{table:hyperparams2}.
We show mean and standard error based on~10 different random reshufflings.
For RCV1, costs are based on tree distance to correct topics.
For MSLR and Yahoo, costs encode 5 regularly-spaced discrete relevance scores (0: perfectly relevant, 1: irrelevant), and we include results for a loss encoding offset~$c = -1$ in Eq.~\eqref{eq:loss_enc_def}.
}
\label{table:cost_sensitive}
% \vspace{-0.3cm}
\small
\center
%run paper_tables.py --avg_std_name --name 01 --base_name rcv1shuf --use_cs --short
\begin{tabular}{ | c | c | c | c | c |  }
\hline
G & RO & C-nu & B-g & $\epsilon$G \\ \hline
\textbf{0.215} $\pm$ 0.010 & 0.225 $\pm$ 0.008 & \textbf{0.215} $\pm$ 0.006 & 0.251 $\pm$ 0.005 & 0.230 $\pm$ 0.009 \\ \hline
\end{tabular}

(a) RCV1
\vspace{0.1cm}

%run paper_tables.py --avg_std_name --name 01 --base_name mslr --use_cs --short
\begin{tabular}{ | c | c | c | c | c |  }
\hline
G & RO & C-nu & B-g & $\epsilon$G \\ \hline
0.798 $\pm$ 0.0023 & \textbf{0.794} $\pm$ 0.0007 & 0.798 $\pm$ 0.0012 & 0.799 $\pm$ 0.0013 & 0.807 $\pm$ 0.0020 \\ \hline
\end{tabular}

(b) MSLR
\vspace{0.1cm}

%run paper_tables.py --avg_std_name --name neg10 --base_name mslr --use_cs --short
\begin{tabular}{ | c | c | c | c | c |  }
\hline
G & RO & C-nu & B-g & $\epsilon$G \\ \hline
0.791 $\pm$ 0.0012 & \textbf{0.790} $\pm$ 0.0009 & 0.792 $\pm$ 0.0007 & 0.791 $\pm$ 0.0008 & 0.806 $\pm$ 0.0018 \\ \hline
\end{tabular}

(c) MSLR, $c = -1$
\vspace{0.1cm}

%run paper_tables.py --avg_std_name --name 01 --base_name yahoo --use_cs --short
\begin{tabular}{ | c | c | c | c | c |  }
\hline
G & RO & C-nu & B-g & $\epsilon$G \\ \hline
0.593 $\pm$ 0.0004 & \textbf{0.592} $\pm$ 0.0005 & 0.594 $\pm$ 0.0004 & 0.596 $\pm$ 0.0006 & 0.598 $\pm$ 0.0005 \\ \hline
\end{tabular}

(d) Yahoo
\vspace{0.1cm}

%run paper_tables.py --avg_std_name --name neg10 --base_name yahoo --use_cs --short
\begin{tabular}{ | c | c | c | c | c |  }
\hline
G & RO & C-nu & B-g & $\epsilon$G \\ \hline
0.589 $\pm$ 0.0005 & \textbf{0.588} $\pm$ 0.0004 & 0.589 $\pm$ 0.0004 & 0.590 $\pm$ 0.0005 & 0.594 $\pm$ 0.0006 \\ \hline
\end{tabular}

(e) Yahoo, $c = -1$

\end{table}

%%%%%%%%%%%%%%%%%%%%%%%%%%%%%%%%%%
\paragraph{Efficient exploration methods.}
Our experiments suggest that the best performing method is the RegCB approach~\citep{foster2018practical},
as shown in Table~\ref{table:win_loss_diff} (left), where the significant wins of RO
against all other methods exceed significant losses,
which yields the best performance overall.
This is particularly prominent with 0/1 encodings.
With -1/0 encodings, which are generally preferred on our corpus as discussed below,
the simple greedy approach comes a close second, outperforming
other methods on a large number of datasets,
despite the lack of an explicit exploration mechanism.
A possible reason for this success is the diversity that is inherently present
in the distribution of contexts across actions, which has been shown to yield
no-regret guarantees under various assumptions~\citep{bastani2017exploiting,kannan2018smoothed}.
The noise induced by the dynamics of online learning and random tie-breaking
may also be a source of more exploration.
RO and Greedy also show strong performance on the 8 UCI datasets and the learning-to-rank datasets from~\citet{foster2018practical}, as shown in Tables~\ref{table:cost_sensitive} and~\ref{table:win_loss_uci}.
Nevertheless, both Greedy and RegCB have known failure modes which the Cover approach is robust to by design. While the basic approach with
uniform exploration is too conservative, we found our Cover-NU variant to be quite competitive overall.
The randomization in its choice of actions yields exploration logs which may be additionally used for offline evaluation, in contrast to Greedy and RegCB-opt,
which choose actions deterministically.
A more granular comparison of these methods is given
in Figure~\ref{fig:comp},
which highlight the failure of Greedy and RegCB against Cover-NU on some datasets which may be more difficult perhaps due to a failure of modeling assumptions.
Bag also outperforms other methods on some datasets, however it is outperformed on most datasets,
possibly because of the additional variance induced by the bootstrap sampling.
Table~\ref{table:win_loss_diff} (right) optimizes over hyperparameters for each dataset,
which captures the best potential of each method.
Cover-NU does the best here, but also has the most hyperparameters,
indicating that a more adaptive variant could be desirable.
RegCB stays competitive, while Greedy pales possibly due to fewer hyperparameters.

\begin{figure}[tb]
	\centering
	%run paper_scatterplots.py --comp_granular --base_name allbinary --noval
	\includegraphics[width=0.30\columnwidth]{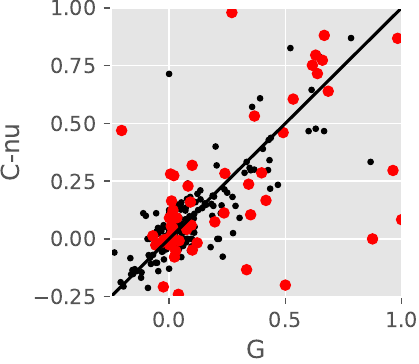}
	\includegraphics[width=0.30\columnwidth]{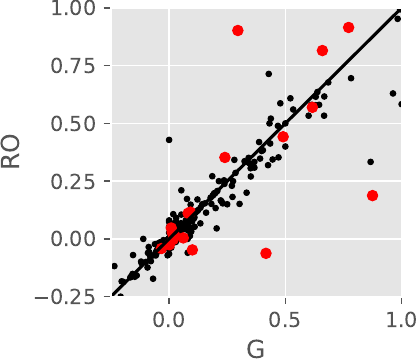}
	\includegraphics[width=0.30\columnwidth]{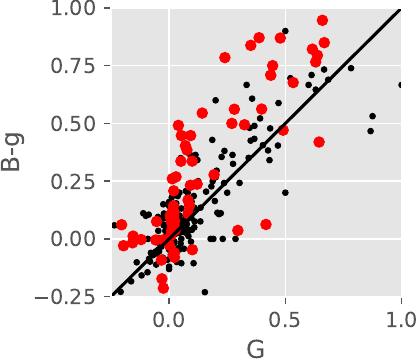}
	\includegraphics[width=0.30\columnwidth]{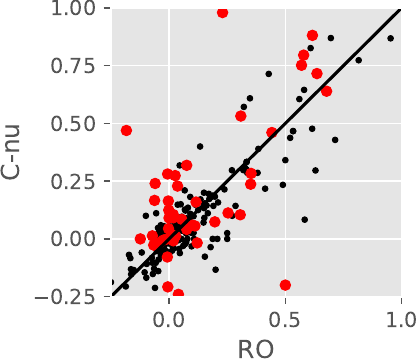}
	\includegraphics[width=0.30\columnwidth]{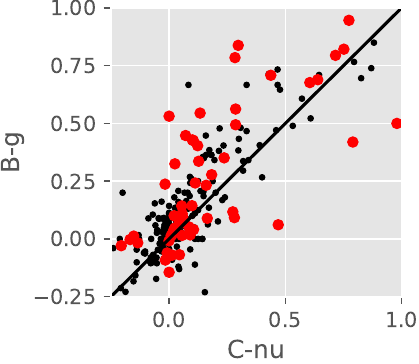}
	\includegraphics[width=0.30\columnwidth]{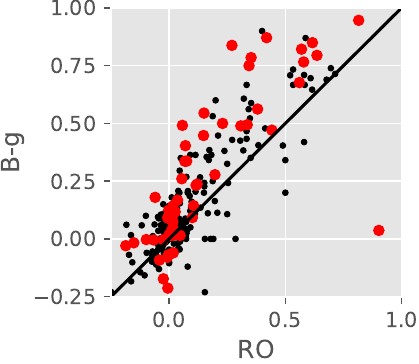}

	\caption{Pairwise comparisons among four successful methods:
	Greedy, Cover-nu, Bag-greedy, and RegCB-opt.
	Hyperparameters fixed as in Table~\ref{table:algos}, with encoding -1/0.
	All held-out multiclass and multilabel datasets are shown, in contrast to Figure~\ref{fig:greed_is_good},
	which only shows held-out datasets with 5 or more actions.
	The plots consider normalized loss, with red points indicating significant wins.
	}
	\label{fig:comp}
\end{figure}

\paragraph{Variability with dataset characteristics.}
Table~\ref{table:breakdown} shows win-loss statistics for
subsets of the datasets with constraints on different characteristics, such as
number of actions, dimensionality, size, and performance in the supervised setting.
The values for these splits were chosen in order to have a reasonably balanced number
of datasets in each table.

We find that RegCB-opt is the preferred method in most situations,
while Greedy and Cover-NU can also provide good performance in different settings.
When only considering larger datasets, RegCB-opt dominates all other methods, with Greedy
a close second, while Cover-NU seems to explore less efficiently.
This could be related to the better adaptivity properties of RegCB to favorable noise conditions,
which can achieve improved (even logarithmic) regret~\citep{foster2018practical},
in contrast to Cover-NU, for which a slower rate (of $O(\sqrt{n})$)
may be unavoidable since it is baked into the algorithm of~\citet{agarwal2014taming} by design,
and is reflected in the diversity terms of the costs~$\hat{c}$ in our online variant given in Algorithm~\ref{alg:cover}.
In contrast, when~$n$ is small, RegCB-opt and Greedy may struggle to find good policies
(in fact, their analysis typically requires a number of ``warm-start'' iterations with uniform exploration),
while Cover-NU seems to explore more efficiently from the beginning,
and behaves well with large action spaces or high-dimensional features.
Finally, Table~\ref{table:breakdown}(d,e) shows that Greedy can be the best choice when
the dataset is ``easy'', in the sense that a supervised learning method achieves small loss.
Achieving good performance on such easy datasets is related to the open problem of~\citet{agarwal2017open},
and variants of methods designed to be agnostic to the data distribution---such as Cover(-NU)
and $\epsilon$-Greedy~\citep{agarwal2014taming,langford2008epoch}---seem to be the weakest on these~datasets.

\begin{table}[tb]
\caption{Impact of reductions for Bag (left) and $\epsilon$-greedy (right),
with hyperparameters optimized and encoding fixed to -1/0.
Each (row, column) entry shows the \emph{statistically significant} win-loss difference
of row against column.
IWR outperforms the other reductions for both methods, which are the only two methods that directly reduce to off-policy learning, and thus where such a comparison applies.}
\label{table:reductions}
% \vspace{-0.3cm}

\small
\center
%run paper_tables.py --opt_algo --algo bag --sep_cb_type --name neg10 --base_name allbinary
\begin{tabular}{ | l | c | c | c |  }
\hline
$\downarrow$ vs $\rightarrow$ & ips & dr & iwr \\ \hline
ips & - & -42 & -55 \\ \hline
dr & 42 & - & -25 \\ \hline
\textbf{iwr} & \textbf{55} & \textbf{25} & - \\ \hline
\end{tabular}
%run paper_tables.py --opt_algo --algo e_greedy --sep_cb_type --name neg10 --base_name allbinary
~~~\begin{tabular}{ | l | c | c | c |  }
\hline
$\downarrow$ vs $\rightarrow$ & ips & dr & iwr \\ \hline
ips & - & 61 & -128 \\ \hline
dr & -61 & - & -150 \\ \hline
\textbf{iwr} & \textbf{128} & \textbf{150} & - \\ \hline
\end{tabular}
\end{table}

\begin{table}[tb]
\caption{Impact of encoding on different algorithms, with hyperparameters optimized.
Each entry indicates the number of \emph{statistically significant} wins/losses of -1/0 against 0/1.
-1/0 is the better overall choice of encoding, but 0/1 can be preferable on larger datasets
(the bottom row considers the 64 datasets in our corpus with more than 10,000 examples).
}
\label{table:encoding}
% \vspace{-0.3cm}
\small
\center
%run paper_tables.py --comp_enc --nodiff --base_name allbinary --short
\begin{tabular}{ | c | c | c | c | c | c |  }
\hline
datasets & G & RO & C-nu & B-g & $\epsilon$G \\ \hline
all & 132 / 42 & 58 / 46 & 71 / 46 & 73 / 27 & 94 / 27 \\ \hline
$\geq$ 10,000 & 19 / 12 & 10 / 18 & 14 / 20 & 15 / 11 & 14 / 5 \\ \hline
\end{tabular}

\end{table}

\paragraph{Reductions.}
Among the reduction mechanisms introduced in Section~\ref{sub:reductions},
IWR has desirable properties such as
tractability (the other reductions rely on a CSC objective, which requires approximations due to non-convexity),
and a computational cost that is independent
of the total number of actions, only requiring updates for the chosen action.
In addition to Greedy, which can be seen as using a form of IWR,
we found IWR to work very well for
Bag and $\epsilon$-Greedy, as shown in Table~\ref{table:reductions}
(see also Table~\ref{table:algos} in Appendix~\ref{sec:evaluation_appx}, which shows
that IWR is also preferred when considering fixed hyperparameters for these methods).
This may be attributed to the difficulty of the CSC problem compared to regression,
as well as importance weight aware online updates, which can be helpful for small~$\epsilon$.
Together with its computational benefits, our results suggest
that IWR is often a compelling alternative to CSC reductions based on IPS or DR.
In particular, when the number of actions is prohibitively large for using Cover-NU or RegCB,
Bag with IWR may be a good default choice of exploration algorithm.
While Cover-NU does not directly support the IWR reduction,
making them work together well would be a promising future direction.

\paragraph{Encodings.}
Table~\ref{table:encoding} indicates that the -1/0 encoding is preferred to 0/1
on many of the datasets, and for all methods.
We now give one possible explanation.
As discussed in Section~\ref{sub:exp_setup}, the -1/0 encoding yields low variance
loss estimates when the cost is often close to 1.
For datasets with binary costs, since the learner may often be wrong in early iterations,
a cost of 1 is a good initial bias for learning. With enough data, however, the learner should reach better accuracies and observe losses closer to 0,
in which case the 0/1 encoding should lead to lower variance estimates,
yielding better performance as observed in Table~\ref{table:encoding}.
We tried shifting the loss range in the RCV1 dataset with real-valued costs from~$[0,1]$ to~$[-1,0]$,
but saw no improvements compared to the results in Table~\ref{table:cost_sensitive}.
Indeed, a cost of 1 may not be a good initial guess in this case, in contrast to the binary cost setting.
On the MSLR and Yahoo learning-to-rank datasets, we do see some improvement from shifting costs
to the range~$[-1,0]$, perhaps because in this case the costs are discrete values, and the cost of
1 corresponds to the document label ``irrelevant'', which appears frequently in these datasets.

\begin{figure}[tb]
	\centering
	\begin{subfigure}[c]{.4\textwidth}
	\includegraphics[width=\textwidth]{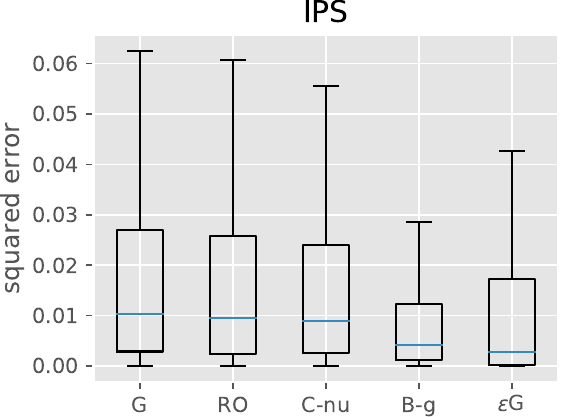}
	\caption{all multiclass datasets}
	\end{subfigure}
	\begin{subfigure}[c]{.4\textwidth}
	\includegraphics[width=\textwidth]{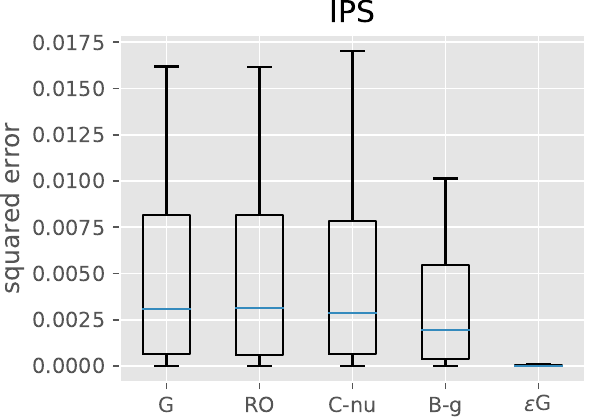}
	\caption{$n \geq 10\,000$ only}
	\label{fig:cfe_large_n}
	\end{subfigure}
	\caption{Errors of IPS counterfactual estimates for the uniform random policy using exploration
	logs collected by various algorithms on multiclass datasets.
	The boxes show quartiles (with the median shown as a blue line) of the distribution of squared errors across all multiclass datasets or only those with at least 10\,000 examples.
	The logs are obtained by running each algorithm with -1/0 encodings, fixed hyperparameters from Table~\ref{table:algos}, and the best learning rate on each dataset according to progressive validation loss.}
	\label{fig:cfe}
\end{figure}

\paragraph{Counterfactual evaluation.}
After running an exploration algorithm, a desirable goal for practitioners is to evaluate
new policies offline,
given access to interaction logs from exploration data.
While various exploration algorithms rely on such counterfactual evaluation through a reduction,
the need for efficient exploration might restrict the ability to evaluate arbitrary policies offline,
since the algorithm may only need to estimate ``good'' policies in the sense of small regret with respect to the optimal policy.
To illustrate this, in Figure~\ref{fig:cfe}, we consider the evaluation of the uniform random \emph{stochastic} policy,
given by $\pi^{\text{unif}}(a|x) = 1/K$ for all actions $a = 1, \ldots, K$,
using a simple inverse propensity scoring (IPS) approach.
While such a task is somewhat extreme and of limited practical interest in itself,
we use it mainly as a proxy for the ability to evaluate any \emph{arbitrary} policy (in particular, it should be possible to evaluate any policy if we can evaluate the uniform random policy).
On a multiclass dataset, the expected loss of such a policy is simply $1 - 1/K$,
while its IPS estimate is given by
\[
\hat L^{IPS} = 1 - \frac{1}{n} \sum_{t=1}^n \frac{1 - \ell_t(a_t)}{K p_t(a_t)},
\]
where the loss is given by~$\ell_t(a_t) = \1\{a_t \ne y_t\}$ when the correct multiclass label is~$y_t$.
Note that this quantity is well-defined since the denominator is always non-zero when~$a_t$ is sampled from~$p_t$, but the estimator is biased when data is collected by a method without some amount of uniform exploration (\ie, when~$p_t$ does not have full support).  This bias is particularly evident in Figure~\ref{fig:cfe_large_n} where epsilon-greedy shows very good counterfactual evaluation performance while the other algorithms induce biased counterfactual evaluation due to lack of full support.
The plots in Figure~\ref{fig:cfe} show the distribution of squared errors $(\hat L^{IPS} - (1 - 1/K))^2$ across multiclass datasets.
We consider IPS on the \emph{rewards}~$1 - \ell_t(a_t)$ here as it is more adapted to the -1/0 encodings used to collect exploration logs, but we also show IPS on losses in Figure~\ref{fig:cfe_appx} of Appendix~\ref{sub:results_appx}.
Figure~\ref{fig:cfe} shows that the more efficient exploration methods (Greedy, RegCB-optimistic, and Cover-NU) give poor estimates for this policy, probably because their
exploration logs provide biased estimates and are quite focused on few actions that may be taken by good policies,
while the uniform exploration in $\epsilon$-Greedy (and Cover-U, see Figure~\ref{fig:cfe_appx}) yields better estimates, particularly on larger datasets.
The elimination version of RegCB provides slightly better logs compared to the optimistic version (see Figure~\ref{fig:cfe_appx}), and Bag may also be preferred in this context.
Overall, these results show that there may be a trade-off between efficient exploration and the
ability to perform good counterfactual evaluation, and that uniform exploration may be needed
if one would like to perform accurate offline experiments for a broad range of questions.

\begin{table}[tb]
	\caption{\emph{Statistically significant} win-loss difference with hyperparameters fixed as in Table~\ref{table:hyperparams2},
	encodings fixed to -1/0, on all held-out datasets or subsets with different characteristics:
	(a) number of actions~$K$;
	(b) number of features~$d$;
	(c) number of examples~$n$;
	(d) PV loss of the one-against-all (OAA) method.
	The corresponding table for all held-out datasets is shown in Table~\ref{table:win_loss_diff}(top left).}
	\label{table:breakdown}
% \vspace{-0.5cm}
\small
\center

%run paper_tables.py --granular_name --name neg10 --base_name allbinary --short --noval --min_actions 3
\begin{tabular}{ | l | c | c | c | c | c |  }
\hline
$\downarrow$ vs $\rightarrow$ & G & RO & C-nu & B-g & $\epsilon$G \\ \hline
G & - & -7 & -4 & 20 & 18 \\ \hline
\textbf{RO} & \textbf{7} & - & \textbf{8} & \textbf{25} & \textbf{27} \\ \hline
C-nu & 4 & -8 & - & 23 & 24 \\ \hline
B-g & -20 & -25 & -23 & - & -4 \\ \hline
$\epsilon$G & -18 & -27 & -24 & 4 & - \\ \hline
\end{tabular}
%run paper_tables.py --granular_name --name neg10 --base_name allbinary --short --noval --min_actions 10
~\begin{tabular}{ | l | c | c | c | c | c |  }
\hline
$\downarrow$ vs $\rightarrow$ & G & RO & C-nu & B-g & $\epsilon$G \\ \hline
G & - & -1 & -2 & 13 & 7 \\ \hline
\textbf{RO} & \textbf{1} & - & \textbf{3} & \textbf{15} & \textbf{11} \\ \hline
C-nu & 2 & -3 & - & 20 & 11 \\ \hline
B-g & -13 & -15 & -20 & - & -7 \\ \hline
$\epsilon$G & -7 & -11 & -11 & 7 & - \\ \hline
\end{tabular}

\vspace{0.1cm}
(a) $K \geq 3$ (left, 83 datasets), $K \geq 10$ (right, 31 datasets)
\vspace{0.4cm}

%run paper_tables.py --granular_name --name neg10 --base_name allbinary --short --noval --min_features 100
\begin{tabular}{ | l | c | c | c | c | c |  }
\hline
$\downarrow$ vs $\rightarrow$ & G & RO & C-nu & B-g & $\epsilon$G \\ \hline
G & - & -5 & -1 & 20 & 10 \\ \hline
\textbf{RO} & \textbf{5} & - & \textbf{8} & \textbf{26} & \textbf{21} \\ \hline
C-nu & 1 & -8 & - & 20 & 17 \\ \hline
B-g & -20 & -26 & -20 & - & -7 \\ \hline
$\epsilon$G & -10 & -21 & -17 & 7 & - \\ \hline
\end{tabular}
%run paper_tables.py --granular_name --name neg10 --base_name allbinary --short --noval --min_features 10000
~\begin{tabular}{ | l | c | c | c | c | c |  }
\hline
$\downarrow$ vs $\rightarrow$ & G & RO & C-nu & B-g & $\epsilon$G \\ \hline
G & - & -5 & -6 & 7 & 0 \\ \hline
\textbf{RO} & \textbf{5} & - & \textbf{1} & \textbf{8} & \textbf{8} \\ \hline
C-nu & 6 & -1 & - & 15 & 7 \\ \hline
B-g & -7 & -8 & -15 & - & -6 \\ \hline
$\epsilon$G & 0 & -8 & -7 & 6 & - \\ \hline
\end{tabular}

\vspace{0.1cm}
(b) $d \geq 100$ (left, 76 datasets), $d \geq 10\,000$ (right, 41 datasets)
\vspace{0.4cm}

%run paper_tables.py --granular_name --name neg10 --base_name allbinary --short --noval --min_size K
\begin{tabular}{ | l | c | c | c | c | c |  }
\hline
$\downarrow$ vs $\rightarrow$ & G & RO & C-nu & B-g & $\epsilon$G \\ \hline
G & - & -5 & 24 & 37 & 52 \\ \hline
\textbf{RO} & \textbf{5} & - & \textbf{34} & \textbf{41} & \textbf{60} \\ \hline
C-nu & -24 & -34 & - & 8 & 34 \\ \hline
B-g & -37 & -41 & -8 & - & 18 \\ \hline
$\epsilon$G & -52 & -60 & -34 & -18 & - \\ \hline
\end{tabular}
~\begin{tabular}{ | l | c | c | c | c | c |  }
\hline
$\downarrow$ vs $\rightarrow$ & G & RO & C-nu & B-g & $\epsilon$G \\ \hline
G & - & -3 & 16 & 10 & 34 \\ \hline
\textbf{RO} & \textbf{3} & - & \textbf{17} & \textbf{14} & \textbf{36} \\ \hline
C-nu & -16 & -17 & - & 2 & 30 \\ \hline
B-g & -10 & -14 & -2 & - & 25 \\ \hline
$\epsilon$G & -34 & -36 & -30 & -25 & - \\ \hline
\end{tabular}

\vspace{0.1cm}
(c) $n \geq 1\,000$ (left, 113 datasets), $n \geq 10\,000$ (right, 43 datasets)
\vspace{0.4cm}

%run paper_tables.py --granular_name --name neg10 --base_name allbinary --short --noval --max_refloss pv
\begin{tabular}{ | l | c | c | c | c | c |  }
\hline
$\downarrow$ vs $\rightarrow$ & G & RO & C-nu & B-g & $\epsilon$G \\ \hline
\textbf{G} & - & \textbf{1} & \textbf{24} & \textbf{40} & \textbf{35} \\ \hline
RO & -1 & - & 25 & 36 & 42 \\ \hline
C-nu & -24 & -25 & - & 8 & 23 \\ \hline
B-g & -40 & -36 & -8 & - & 3 \\ \hline
$\epsilon$G & -35 & -42 & -23 & -3 & - \\ \hline
\end{tabular}
~\begin{tabular}{ | l | c | c | c | c | c |  }
\hline
$\downarrow$ vs $\rightarrow$ & G & RO & C-nu & B-g & $\epsilon$G \\ \hline
\textbf{G} & - & \textbf{1} & \textbf{14} & \textbf{8} & \textbf{15} \\ \hline
RO & -1 & - & 12 & 5 & 16 \\ \hline
C-nu & -14 & -12 & - & -12 & 5 \\ \hline
B-g & -8 & -5 & 12 & - & 10 \\ \hline
$\epsilon$G & -15 & -16 & -5 & -10 & - \\ \hline
\end{tabular}

\vspace{0.1cm}
(d) $PV_{OAA} \leq 0.2$ (left, 133 datasets), $PV_{OAA} \leq 0.05$ (right, 28 datasets)
\vspace{0.4cm}

%run paper_tables.py --granular_name --name neg10 --base_name allbinary --short --noval --min_size 10000 --max_refloss 0.1
\begin{tabular}{ | l | c | c | c | c | c |  }
\hline
$\downarrow$ vs $\rightarrow$ & G & RO & C-nu & B-g & $\epsilon$G \\ \hline
\textbf{G} & - & \textbf{2} & \textbf{8} & \textbf{5} & \textbf{12} \\ \hline
RO & -2 & - & 8 & 4 & 12 \\ \hline
C-nu & -8 & -8 & - & -1 & 12 \\ \hline
B-g & -5 & -4 & 1 & - & 11 \\ \hline
$\epsilon$G & -12 & -12 & -12 & -11 & - \\ \hline
\end{tabular}

\vspace{0.1cm}
(e) $n \geq 10\,000$ and $PV_{OAA} \leq 0.1$ (13 datasets)

\end{table}

\section{Discussion and Takeaways}
\label{sec:discussion}
%!TEX root = main.tex

In this paper, we presented an evaluation of practical contextual bandit algorithms on a large collection of supervised learning datasets with simulated bandit feedback.
We find that a worst-case theoretical robustness forces several common methods to often over-explore, damaging their empirical performance, and strategies that limit (RegCB and Cover-NU) or simply forgo (Greedy) explicit exploration dominate the field.
For practitioners, our study also provides a reference for practical implementations, while stressing the importance of loss estimation and other design choices such as how to encode observed feedback.

\paragraph{Guidelines for practitioners.}
We now summarize some practical guidelines that come out of our empirical study:
\begin{itemize}
	\item Methods relying on modeling assumptions on the data distribution such as RegCB are often preferred in practice, and even Greedy can work well (see, \eg, Table~\ref{table:win_loss_diff}).
	They tend to dominate more robust approaches such as Cover-NU even more prominently on larger datasets,
	or on datasets where prediction is easy, \eg, due to low noise (see Table~\ref{table:breakdown}).
	While it may be difficult to assess such favorable conditions of the data in advance,
	practitioners may use specific domain knowledge
	to design better feature representations for prediction,
	which may in turn improve exploration for these methods.
	\item Uniform exploration hurts empirical performance in most cases (see, \eg, the poor performance of~$\epsilon$-greedy and Cover-u in Table~\ref{table:win_loss_multiclass} of Appendix~\ref{sec:evaluation_appx}). Nevertheless, it may be necessary on the hardest datasets,
	and may be crucial if one needs to perform off-policy counterfactual evaluation (see Figures~\ref{fig:cfe} and~\ref{fig:cfe_appx}).
	\item Loss estimation is an essential component in many CB algorithms for good practical performance, and DR should be preferred over IPS. For methods based on reduction to off-policy learning, such as~$\epsilon$-Greedy and Bagging, the IWR reduction is typically best, in addition to providing computational benefits (see Table~\ref{table:reductions}).
	\item From our early experiments, we found randomization on tied choices of actions to always be useful. For instance, it avoids odd behavior which may arise from deterministic, implementation-specific biases (\eg, always favoring one specific action over the others).
	\item The choice of cost encodings makes a big difference in practice and should be carefully considered when designing a contextual bandit system, even when loss estimation techniques such as DR are used. For binary outcomes, -1/0 is a good default choice of encoding in the common situation where the observed loss is often 1 (see Table~\ref{table:encoding}).
	\item Modeling choices and encodings sometimes provide pessimistic initial estimates that can hurt initial exploration on some problems, particularly for Greedy and RegCB-optimistic.
	Random tie-breaking as well as using a shared additive baseline can help mitigate this issue (see Section~\ref{sub:baseline}).
	\item The hyperparameters highlighted in Appendix~\ref{sub:hyperparams_appx} obtained on our datasets may be good default choices in practice. Nevertheless, these may need to be balanced in order to address other conflicting requirements in real-world settings, such as non-stationarity or the ability to run offline experiments.
\end{itemize}

\paragraph{Open questions for theoreticians.}
Our study raises some questions of interest for theoretical research on contextual bandits.
The good performance of greedy methods calls for a better understanding of greedy methods,
building upon the work of~\citet{bastani2017exploiting,kannan2018smoothed},
as well as methods that are more robust to more difficult datasets while adapting to such favorable
scenarios, such as when the context distribution has enough diversity.
A related question is that of adaptivity to easy datasets for which the optimal policy has small loss,
an open problem pointed out by~\citet{agarwal2017open} in the form of ``first-order'' regret bounds.
While methods satisfying such regret bounds have now been developed theoretically~\citep{allen2018make},
these methods are currently not computationally efficient, and obtaining efficient methods based on optimization oracles
remains an important open problem.
We also note that while our experiments are based on online optimization oracles,
most analyzed versions of the algorithms rely on solving the full optimization problems;
it would be interesting to better understand the behavior of online variants,
and to characterize the implicit exploration effect for the greedy method.

\paragraph{Limitations of the study.}
Our study is primarily concerned with prediction performance, while
real world applications often additionally consider the value of
counterfactual evaluation for offline policy evaluation.  

A key limitation of our study is that it is concerned with stationary
datasets.  Many real-world contextual bandit applications involve
nonstationary datasources.  This limitation is simply due to the nature of
readily available public datasets.  The lack of public CB datasets as
well as challenges in counterfactual evaluation of CB algorithms make
a more realistic study challenging, but we hope that an emergence of
platforms~\citep{agarwal2016multiworld,jamieson2015next} to easily
deploy CB algorithms will enable studies with real CB datasets in the
future.
Furthermore, our setup does not cover some other difficulties which may often arise in practical CB systems, such as delayed/batched rewards, varying action sets, or heuristics for selecting subsets of actions in cases where the total number of actions is prohibitively large; an extension of our empirical study to such settings could also prove beneficial for practitioners.

\bibliography{full,bibli}
% \bibliography{full,bibli}
% \bibliographystyle{abbrv}

\clearpage
\appendix
\renewcommand{\theHsection}{A\arabic{section}}

\section{Algorithm Details}
\label{sec:alg_details}
%!TEX root = main.tex

This section provides more details on the implementations of online oracles used in our algorithms, as described in Sections~\ref{sec:setup} and~\ref{sec:algorithms}.

\paragraph{Online regression.}
Since our work considers linear models, we assume that regressors are of the form:
\[
f(x, a) = f_\theta(x, a) = \theta^\top \Phi(x, a).
\]
The basic form of online regression updates is a standard online gradient descent update on the squared loss, given below:
\begin{algorithm}[h]
\caption{Online regression update}
\label{alg:reg_update}
\textbf{Input}: step-size~$\eta$.

$\verb+reg_update+(f_\theta, (x, a, y, \omega))$:
\begin{algorithmic}
  \STATE $\theta := \theta - \eta \omega (\theta^\top \Phi(x,a) - y) \Phi(x, a)$
\end{algorithmic}
\end{algorithm}

In practice, our experiments in VW use more adaptive versions of online gradient descent that make the choice of step-size more robust to variabilities in the features~$\Phi(x, a)$, which may be quite different in different datasets in our collection.
More precisely, the learning rate~$\eta$ is replaced by a diagonal adaptive preconditioning matrix that also provides some invariance to the scale of features, a method known as normalized adaptive gradient~\citep[NAG,][]{ross2013normalized} that extends AdaGrad~\citep{duchi2011adaptive}.
Furthermore, in order to better handle the importance weighted examples in online updates, VW uses the approach of~\citet{karampatziakis2011online}, which provides more consistent updates w.r.t.~the importance weights using integration of a continuous-time ODE.

\paragraph{Cost-sensitive classification.}
For the online CSC oracle, we use a reduction to regression, by feeding~$K$ online regression examples to the online regression oracle, one for each action, with the corresponding cost.
This is shown in Algorithm~\ref{alg:csc_update}.
In order to make explicit the fact that a policy~$\pi$ is implemented using a regressor~$f$, we denote the policy by~$\pi_f$, and show the corresponding implementation of the policy.
\begin{algorithm}[h]
\caption{CSC implementation and online update}
\label{alg:csc_update}
\textbf{Input}: number of actions~$K$.

$\verb+call+(\pi_f, x)$: {\it (implementation of the call~$\pi_f(x)$)}
\begin{algorithmic}
	\STATE \textbf{return} $\arg\min_a f(x, a)$;
\end{algorithmic}

$\verb+csc_update+(\pi_f, (x, c))$:
\begin{algorithmic}
  	\FOR{$a = 1, \ldots, K$}
  		\STATE $\verb+reg_update+(f, (x, a, c(a)))$;
  	\ENDFOR
\end{algorithmic}
\end{algorithm}

\paragraph{Loss estimators (IPS and DR).}
The IPS and DR loss estimators, which implement the~$\verb+estimator+$ routine in Section~\ref{sec:algorithms}, are shown in Algorithm~\ref{alg:loss_estimators}.
For convenience, the loss estimator~$\hat \ell$, which is considered a global variable meant to be trained on all observed samples~$(x_t, a_t, \ell_t(a_t))$, is updated in the call to~$\verb+dr_estimator+$, which we assume is only called once at each iteration~$t$.
\begin{algorithm}[h]
\caption{IPS and DR Loss estimators}
\label{alg:loss_estimators}
\textbf{Global state}: loss regressor for DR~$\hat \ell$.

\vspace{0.2cm}
$\verb+ips_estimator+(x_t, a_t, y_t, p_t)$:
\begin{algorithmic}
	\STATE $\hat{\ell}_t(a) := \frac{y_t}{p_t} \1\{a = a_t\}$;
	\STATE \textbf{return} $\hat \ell_t$;
\end{algorithmic}

\vspace{0.2cm}
$\verb+dr_estimator+(x_t, a_t, y_t, p_t)$:
\begin{algorithmic}
	\STATE $\verb+reg_update+(\hat \ell, (x_t, a_t, y_t))$;
	\STATE $\hat{\ell}_t(a) := \frac{y_t - \hat{\ell}(x_t, a_t)}{p_t} \1\{a = a_t\} + \hat{\ell}(x_t, a)$;
	\STATE \textbf{return} $\hat \ell_t$;
\end{algorithmic}

\end{algorithm}

\paragraph{Off-policy learning updates.}
Finally, the online off-policy learning updates, which implement the~$\verb+opl_update+$ routine in Section~\ref{sec:algorithms}, are given below in Algorithm~\ref{alg:opl_updates}.
For IWR, we write the policy as~$\pi_f(x) = \arg\min_a f(x, a)$, where~$f$ denotes the underlying regressor.
\begin{algorithm}[h]
\caption{Off-policy learning updates}
\label{alg:opl_updates}
$\verb+ips_opl_update+(\pi, (x_t, a_t, y_t, p_t))$:
\begin{algorithmic}
	\STATE $\hat \ell_t := \verb+ips_estimator+(x_t, a_t, y_t, p_t)$;
	\STATE $\verb+csc_oracle+(\pi, (x_t, \hat \ell_t))$;
\end{algorithmic}

\vspace{0.2cm}
$\verb+dr_opl_update+(\pi, (x_t, a_t, y_t, p_t))$:
\begin{algorithmic}
	\STATE $\hat \ell_t := \verb+dr_estimator+(x_t, a_t, y_t, p_t)$;
	\STATE $\verb+csc_oracle+(\pi, (x_t, \hat \ell_t))$;
\end{algorithmic}

\vspace{0.2cm}
$\verb+iwr_opl_update+(\pi_f, (x_t, a_t, y_t, p_t))$:
\begin{algorithmic}
	\STATE $\verb+reg_oracle+(f, (x_t, a_t, y_t, 1/p_t))$;
\end{algorithmic}
\end{algorithm}

\section{Datasets}
\label{sec:datasets}
%!TEX root = main.tex

This section gives some details on the cost-sensitive classification datasets considered in our study.

\paragraph{Multiclass classification datasets.}
We consider 516 multiclass datasets\footnote{We note that previous versions of this paper considered 525 such datasets, but we later discovered that 9 of them (ids $8, 189, 197, 209, 223, 227, 287, 294, 298$) were regression datasets poorly converted to multiclass, and decided to discard them.} from the \url{openml.org} platform, including among others, medical,
gene expression, text, sensory or synthetic data. Table~\ref{table:data_multiclass} provides some statistics
about these datasets.
These also include the 8 classification datasets considered in~\citep{foster2018practical} from the UCI database.
The full list of datasets is given below.

\begin{table}[h]
\caption{Statistics on number of multiclass datasets by number of examples,
actions and unique features, as well as by progressive validation 0-1 loss for the supervised one-against-all online classifier, in our collection of 516 multiclass datasets.}
\label{table:data_multiclass}
\small
\center
\begin{tabular}{|c|c|}
\hline
actions & \# \\ \hline
2 & 402 \\ \hline
3-9 & 71 \\ \hline
10+ & 43 \\ \hline
\end{tabular}
~
\begin{tabular}{|c|c|}
\hline
examples & \# \\ \hline
$\leq$ $10^2$ & 94 \\ \hline
$10^2$-$10^3$ & 268 \\ \hline
$10^3$-$10^5$ & 126 \\ \hline
$> 10^5$ & 28 \\ \hline
\end{tabular}
~
\begin{tabular}{|c|c|}
\hline
features & \# \\ \hline
$\leq$ $50$ & 384 \\ \hline
51-100 & 34 \\ \hline
101-1000 & 17 \\ \hline
1000+ & 81 \\ \hline
\end{tabular}
~
\begin{tabular}{|c|c|}
\hline
$PV_{OAA}$ & \# \\ \hline
$\leq$ $0.01$ & 10 \\ \hline
$(0.01, 0.1]$ & 87 \\ \hline
$(0.1, 0.2]$ & 102 \\ \hline
$(0.2, 0.5]$ & 271 \\ \hline
$> 0.5$ & 46 \\ \hline
\end{tabular}
\end{table}

\paragraph{Multilabel classification datasets.}
We consider 5 multilabel datasets from the LibSVM website\footnote{\url{https://www.csie.ntu.edu.tw/~cjlin/libsvmtools/datasets/multilabel.html}},
listed in Table~\ref{table:data_multilabel}.

\begin{table}[h]
\caption{List of multilabel datasets.}
\label{table:data_multilabel}
\center
\begin{tabular}{|c|c|c|c|c|}
\hline
Dataset & \# examples & \# features & \# actions & $PV_{OAA}$ \\ \hline
mediamill & 30,993 & 120 & 101 & 0.1664 \\ \hline
rcv1 & 23,149 & 47,236 & 103 & 0.0446 \\ \hline
scene & 1,211 & 294 & 6 & 0.0066\\ \hline
tmc & 21,519 & 30,438 & 22 & 0.1661 \\ \hline
yeast & 1,500 & 103 & 14 & 0.2553 \\ \hline
\end{tabular}
\end{table}

\paragraph{Cost-sensitive classification datasets.}
For more general real-valued costs in~$[0,1]$, we use a modification of the multilabel RCV1 dataset
introduced in~\citep{krishnamurthy2017active}.
Each example consists of a news article labeled with the topics it belongs to, in a collection of 103 topics.
Instead of fixing the cost to 1 for incorrect topics, the cost is defined as the tree distance to
the set of correct topics in a topic hierarchy.

We also include the learning-to-rank datasets considered in~\citep{foster2018practical},
where we limit the number of documents (actions) per query, and consider all the training folds.
We convert relevance scores to losses in~$\{0,0.25,0.5,0.75,1\}$, with 0 indicating a perfectly relevant document,
and 1 an irrelevant one.
The datasets considered are the Microsoft Learning to Rank dataset, variant MSLR-30K at \url{https://www.microsoft.com/en-us/research/project/mslr/},
and the Yahoo! Learning to Rank Challenge V2.0, variant C14B at \url{https://webscope.sandbox.yahoo.com/catalog.php?datatype=c}.
Details are shown in Table~\ref{tab:data_ltr}.
We note that for these datasets we consider action-dependent features, with a fixed parameter vector
for all documents.

\begin{table}[h]
	\caption{Learning to rank datasets.}
	\label{tab:data_ltr}
\center
\begin{tabular}{|c|c|c|c|c|}
\hline
Dataset & \# examples & \# features & max \# documents & $PV_{OAA}$ \\ \hline
MSLR-30K & 31,531 & 136 & 10 & 0.7892 \\ \hline
Yahoo & 36,251 & 415 & 6 & 0.5876 \\ \hline
\end{tabular}
\end{table}

\paragraph{List of multiclass datasets.}
The datasets we used can be accessed at \url{https://www.openml.org/d/<id>}, with \verb+id+ in the following list:

3, 6, 10, 11, 12, 14, 16, 18, 20, 21, 22, 23, 26, 28, 30, 31, 32, 36, 37, 39, 40, 41, 43, 44, 46, 48, 50, 53, 54, 59, 60, 61, 62, 150, 151, 153, 154, 155, 156, 157, 158, 159, 160, 161, 162, 180, 181, 182, 183, 184, 187, 273, 275, 276, 277, 278, 279, 285, 292, 293, 300, 307, 310, 312, 313, 329, 333, 334, 335, 336, 337, 338, 339, 343, 346, 351, 354, 357, 375, 377, 383, 384, 385, 386, 387, 388, 389, 390, 391, 392, 393, 394, 395, 396, 397, 398, 399, 400, 401, 444, 446, 448, 450, 457, 458, 459, 461, 462, 463, 464, 465, 467, 468, 469, 472, 475, 476, 477, 478, 479, 480, 554, 679, 682, 683, 685, 694, 713, 714, 715, 716, 717, 718, 719, 720, 721, 722, 723, 724, 725, 726, 727, 728, 729, 730, 731, 732, 733, 734, 735, 736, 737, 740, 741, 742, 743, 744, 745, 746, 747, 748, 749, 750, 751, 752, 753, 754, 755, 756, 758, 759, 761, 762, 763, 764, 765, 766, 767, 768, 769, 770, 771, 772, 773, 774, 775, 776, 777, 778, 779, 780, 782, 783, 784, 785, 787, 788, 789, 790, 791, 792, 793, 794, 795, 796, 797, 799, 800, 801, 803, 804, 805, 806, 807, 808, 811, 812, 813, 814, 815, 816, 817, 818, 819, 820, 821, 822, 823, 824, 825, 826, 827, 828, 829, 830, 832, 833, 834, 835, 836, 837, 838, 841, 843, 845, 846, 847, 848, 849, 850, 851, 853, 855, 857, 859, 860, 862, 863, 864, 865, 866, 867, 868, 869, 870, 871, 872, 873, 874, 875, 876, 877, 878, 879, 880, 881, 882, 884, 885, 886, 888, 891, 892, 893, 894, 895, 896, 900, 901, 902, 903, 904, 905, 906, 907, 908, 909, 910, 911, 912, 913, 914, 915, 916, 917, 918, 919, 920, 921, 922, 923, 924, 925, 926, 927, 928, 929, 931, 932, 933, 934, 935, 936, 937, 938, 941, 942, 943, 945, 946, 947, 948, 949, 950, 951, 952, 953, 954, 955, 956, 958, 959, 962, 964, 965, 969, 970, 971, 973, 974, 976, 977, 978, 979, 980, 983, 987, 988, 991, 994, 995, 996, 997, 1004, 1005, 1006, 1009, 1011, 1012, 1013, 1014, 1015, 1016, 1019, 1020, 1021, 1022, 1025, 1026, 1036, 1038, 1040, 1041, 1043, 1044, 1045, 1046, 1048, 1049, 1050, 1054, 1055, 1056, 1059, 1060, 1061, 1062, 1063, 1064, 1065, 1066, 1067, 1068, 1069, 1071, 1073, 1075, 1077, 1078, 1079, 1080, 1081, 1082, 1083, 1084, 1085, 1086, 1087, 1088, 1100, 1104, 1106, 1107, 1110, 1113, 1115, 1116, 1117, 1120, 1121, 1122, 1123, 1124, 1125, 1126, 1127, 1128, 1129, 1130, 1131, 1132, 1133, 1135, 1136, 1137, 1138, 1139, 1140, 1141, 1142, 1143, 1144, 1145, 1146, 1147, 1148, 1149, 1150, 1151, 1152, 1153, 1154, 1155, 1156, 1157, 1158, 1159, 1160, 1161, 1162, 1163, 1164, 1165, 1166, 1169, 1216, 1217, 1218, 1233, 1235, 1236, 1237, 1238, 1241, 1242, 1412, 1413, 1441, 1442, 1443, 1444, 1449, 1451, 1453, 1454, 1455, 1457, 1459, 1460, 1464, 1467, 1470, 1471, 1472, 1473, 1475, 1481, 1482, 1483, 1486, 1487, 1488, 1489, 1496, 1498, 1590.

\section{Evaluation Details}
\label{sec:evaluation_appx}
%!TEX root = main.tex

\subsection{Algorithms and Hyperparameters}
\label{sub:hyperparams_appx}

We ran each method on every dataset with the following hyperparameters:
\begin{itemize}%[topsep=0pt]
	\item algorithm-specific \emph{hyperparameters}, shown in Table~\ref{table:algos}.
	\item 9 choices of \emph{learning rates}, on a logarithmic grid from 0.001 to 10 (see Section~\ref{sub:exp_setup}).
	\item 3 choices of \emph{reductions}: IPS, DR and IWR (see Section~\ref{sub:reductions}).
	Note that these mainly apply to methods that reduce to off-policy optimization (\ie, $\epsilon$-Greedy and Bag/BTS),
	and to some extent, methods that reduce to cost-sensitive classification (\ie, cover and active $\epsilon$-greedy,
	though the IWR reduction is heuristic in this case).
	Both RegCB variants directly reduce to~regression.
	\item 3 choices of loss \emph{encodings}: 0/1, -1/0 and 9/10 (see Eq.~\eqref{eq:loss_enc_def}).
	0/1 and -1/0 encodings are typically a design choice, while the experiments with 9/10 are aimed at assessing
	some robustness to loss range.
\end{itemize}

\begin{table}[tb]
\caption{Choices of hyperparameters and reduction for each method.
Fixed choices of hyperparameters for -1/0 encodings are in \textbf{bold}.
These were obtained for each method with an instant-runoff voting mechanism
on 200 of the multiclass datasets with -1/0 encoding,
where each dataset ranks hyperparameter choices according to the difference between significant wins
and losses against all other choices (the vote of each dataset is divided by the number of tied choices ranked first). Table~\ref{table:hyperparams2} shows optimized choices of hyperparameters for different encoding settings used in our study.
}
\label{table:algos}

\center
\begin{tabular}{ | c | c | c | c | }
\hline
Name & Method & Hyperparameters & Reduction \\ \hline
G & Greedy & - & \textbf{IWR} \\ \hline
R/RO & RegCB-elim/RegCB-opt & $C_0 \in 10^{-\{1, 2, \textbf{3}\}}$ & - \\ \hline
\multirow{2}{*}{C-nu} & \multirow{2}{*}{Cover-NU} & $N \in \{\textbf{4}, 8, 16\}$ & \multirow{2}{*}{IPS/\textbf{DR}}\\
	& & $\psi \in \{0.01, \textbf{0.1}, 1\}$ & \\ \hline
\multirow{2}{*}{C-u} & \multirow{2}{*}{Cover} & $N \in \{\textbf{4}, 8, 16\}$ & \multirow{2}{*}{\textbf{IPS}/DR}\\
	& & $\psi \in \{0.01, \textbf{0.1}, 1\}$ & \\ \hline
B/B-g & Bag/Bag-greedy & $N \in \{\textbf{4}, 8, 16\}$ & IPS/DR/\textbf{IWR} \\ \hline
$\epsilon$G & $\epsilon$-greedy & $\epsilon \in \{\textbf{0.02}, 0.05, 0.1\}$ & IPS/DR/\textbf{IWR} \\ \hline
\multirow{2}{*}{A} & \multirow{2}{*}{active $\epsilon$-greedy} & $\epsilon \in \{\textbf{0.02}, 1\}$ & \multirow{2}{*}{IPS/DR/\textbf{IWR}} \\
 	& & $C_0 \in 10^{-\{2, 4, \textbf{6}\}}$ & \\ \hline

\end{tabular}
\end{table}

\begin{table}
\caption{Optimized choices of hyperparameters for different encoding settings,
obtained using the voting mechanism described in Table~\ref{table:algos}:
-1/0 (same as bold choices in Table~\ref{table:algos}, used in Tables~\ref{table:win_loss_diff}(top left), \ref{table:cost_sensitive}ce, \ref{table:breakdown}, \ref{table:win_loss_multiclass}a, \ref{table:win_loss_multilab}a, \ref{table:win_loss_uci}a and in the figures);
0/1 (used in Tables~\ref{table:win_loss_diff}(bottom left), \ref{table:cost_sensitive}abd, \ref{table:win_loss_multiclass}b, \ref{table:win_loss_multilab}b, \ref{table:win_loss_uci}b, \ref{table:rcv1cs_all}).
}
\label{table:hyperparams2}

\small
\center
\begin{tabular}{|c|c|c|}
\hline
Algorithm & \textbf{-1/0} & \textbf{0/1} \\ \hline
G & - & - \\ \hline
R/RO & $C_0 = 10^{-3}$ & $C_0 = 10^{-3}$ \\ \hline
C-nu & $N = 4, \psi = 0.1$, DR & $N = 4, \psi = 0.01$, DR \\ \hline
C-u & $N = 4, \psi = 0.1$, IPS & $N = 4, \psi = 0.1$, DR \\ \hline
B & $N = 4$, IWR & $N = 16$, IWR \\ \hline
B-g & $N = 4$, IWR & $N = 8$, IWR \\ \hline
$\epsilon$G & $\epsilon = 0.02$, IWR & $\epsilon = 0.02$, IWR \\ \hline
A & $\epsilon = 0.02, C_0 = 10^{-6}$, IWR & $\epsilon = 0.02, C_0 = 10^{-6}$, IWR \\ \hline
\end{tabular}
\end{table}

\subsection{Additional Evaluation Results}
\label{sub:results_appx}

This sections provides additional experimental results, and more detailed win/loss statistics
for tables in the main paper, showing both significant wins and significant losses,
rather than just their difference.

\paragraph{Extended tables.}
Tables~\ref{table:win_loss_multiclass} and~\ref{table:win_loss_multilab}
are extended versions of Table~\ref{table:win_loss_diff}, showing both significant wins and loss,
more methods, and separate statistics for multiclass and multilabel datasets.
In particular, we can see that both variants of RegCB become even more competitive against all other methods
when using 0/1 encodings.
Table~\ref{table:rcv1cs_all} extends Table~\ref{table:cost_sensitive}(a) with additional methods.
Table~\ref{table:reductions_win_loss} is a more detailed win/loss version of Table~\ref{table:reductions},
and additionally shows statistics for 0/1 encodings.

We also show separate statistics in Table~\ref{table:win_loss_uci} for the 8 datasets from the UCI repository considered in~\citep{foster2018practical},
which highlight that Greedy can outperform RegCB on some of these datasets,
and that the optimistic variant of RegCB is often superior to the elimination variant.
We note that our experimental setup is quite different from~\citet{foster2018practical},
who consider batch learning on an doubling epoch schedule,
which might explain some of the differences in the results.

\begin{table}[tb]
\caption{\emph{Statistically significant} wins / losses of all methods on the 317 held-out multiclass classification datasets.
Hyperparameters are fixed as given in Table~\ref{table:hyperparams2}.
}
\label{table:win_loss_multiclass}
\tiny
\center
%run paper_tables.py --granular_name --name neg10 --base_name allmulticlass --nodiff --noval
\begin{tabular}{ | l | c | c | c | c | c | c | c | c | c |  }
\hline
$\downarrow$ vs $\rightarrow$ & G & R & RO & C-nu & B & B-g & $\epsilon$G & C-u & A \\ \hline
G & - & 18 / 24 & 6 / 16 & 39 / 30 & 76 / 15 & 63 / 16 & 58 / 6 & 153 / 10 & 33 / 9 \\ \hline
R & 24 / 18 & - & 15 / 19 & 47 / 27 & 73 / 15 & 60 / 17 & 69 / 8 & 155 / 8 & 48 / 11 \\ \hline
RO & 16 / 6 & 19 / 15 & - & 46 / 18 & 76 / 11 & 59 / 11 & 70 / 2 & 162 / 5 & 46 / 4 \\ \hline
C-nu & 30 / 39 & 27 / 47 & 18 / 46 & - & 63 / 24 & 48 / 29 & 71 / 18 & 153 / 9 & 47 / 26 \\ \hline
B & 15 / 76 & 15 / 73 & 11 / 76 & 24 / 63 & - & 9 / 31 & 45 / 41 & 121 / 16 & 26 / 52 \\ \hline
B-g & 16 / 63 & 17 / 60 & 11 / 59 & 29 / 48 & 31 / 9 & - & 48 / 29 & 125 / 10 & 27 / 39 \\ \hline
$\epsilon$G & 6 / 58 & 8 / 69 & 2 / 70 & 18 / 71 & 41 / 45 & 29 / 48 & - & 121 / 14 & 2 / 35 \\ \hline
C-u & 10 / 153 & 8 / 155 & 5 / 162 & 9 / 153 & 16 / 121 & 10 / 125 & 14 / 121 & - & 9 / 147 \\ \hline
A & 9 / 33 & 11 / 48 & 4 / 46 & 26 / 47 & 52 / 26 & 39 / 27 & 35 / 2 & 147 / 9 & - \\ \hline
\end{tabular}

(a) -1/0 encoding
\vspace{0.3cm}

%run paper_tables.py --granular_name --name 01 --base_name allmulticlass --nodiff --noval
\begin{tabular}{ | l | c | c | c | c | c | c | c | c | c |  }
\hline
$\downarrow$ vs $\rightarrow$ & G & R & RO & C-nu & B & B-g & $\epsilon$G & C-u & A \\ \hline
G & - & 26 / 64 & 5 / 67 & 34 / 49 & 74 / 36 & 71 / 36 & 66 / 19 & 144 / 29 & 35 / 17 \\ \hline
R & 64 / 26 & - & 14 / 34 & 40 / 21 & 86 / 9 & 82 / 12 & 105 / 16 & 165 / 3 & 76 / 19 \\ \hline
RO & 67 / 5 & 34 / 14 & - & 57 / 12 & 108 / 7 & 104 / 8 & 117 / 2 & 168 / 3 & 88 / 3 \\ \hline
C-nu & 49 / 34 & 21 / 40 & 12 / 57 & - & 82 / 25 & 72 / 26 & 94 / 24 & 164 / 5 & 61 / 30 \\ \hline
B & 36 / 74 & 9 / 86 & 7 / 108 & 25 / 82 & - & 21 / 31 & 57 / 48 & 124 / 14 & 36 / 66 \\ \hline
B-g & 36 / 71 & 12 / 82 & 8 / 104 & 26 / 72 & 31 / 21 & - & 61 / 46 & 122 / 18 & 39 / 54 \\ \hline
$\epsilon$G & 19 / 66 & 16 / 105 & 2 / 117 & 24 / 94 & 48 / 57 & 46 / 61 & - & 116 / 28 & 2 / 41 \\ \hline
C-u & 29 / 144 & 3 / 165 & 3 / 168 & 5 / 164 & 14 / 124 & 18 / 122 & 28 / 116 & - & 22 / 144 \\ \hline
A & 17 / 35 & 19 / 76 & 3 / 88 & 30 / 61 & 66 / 36 & 54 / 39 & 41 / 2 & 144 / 22 & - \\ \hline
\end{tabular}

(b) 0/1 encoding
\end{table}

\begin{table}[tb]
\small
\center
\caption{\emph{Statistically significant} wins / losses of all methods on the 5 multilabel classification datasets.
Hyperparameters are fixed as given in Table~\ref{table:hyperparams2}.
}
\label{table:win_loss_multilab}
%run paper_tables.py --granular_name --name neg10 --base_name regmultilab --use_cs --nodiff
\begin{tabular}{ | l | c | c | c | c | c | c | c | c | c |  }
\hline
$\downarrow$ vs $\rightarrow$ & G & R & RO & C-nu & B & B-g & $\epsilon$G & C-u & A \\ \hline
G & - & 2 / 2 & 1 / 0 & 3 / 1 & 2 / 2 & 3 / 2 & 2 / 1 & 4 / 1 & 2 / 1 \\ \hline
R & 2 / 2 & - & 2 / 2 & 2 / 0 & 3 / 0 & 3 / 0 & 3 / 2 & 5 / 0 & 3 / 2 \\ \hline
RO & 0 / 1 & 2 / 2 & - & 2 / 2 & 2 / 2 & 3 / 1 & 2 / 1 & 4 / 1 & 2 / 1 \\ \hline
C-nu & 1 / 3 & 0 / 2 & 2 / 2 & - & 3 / 1 & 3 / 1 & 3 / 2 & 5 / 0 & 3 / 2 \\ \hline
B & 2 / 2 & 0 / 3 & 2 / 2 & 1 / 3 & - & 1 / 1 & 3 / 2 & 5 / 0 & 3 / 2 \\ \hline
B-g & 2 / 3 & 0 / 3 & 1 / 3 & 1 / 3 & 1 / 1 & - & 2 / 3 & 4 / 1 & 2 / 3 \\ \hline
$\epsilon$G & 1 / 2 & 2 / 3 & 1 / 2 & 2 / 3 & 2 / 3 & 3 / 2 & - & 4 / 1 & 1 / 0 \\ \hline
C-u & 1 / 4 & 0 / 5 & 1 / 4 & 0 / 5 & 0 / 5 & 1 / 4 & 1 / 4 & - & 1 / 4 \\ \hline
A & 1 / 2 & 2 / 3 & 1 / 2 & 2 / 3 & 2 / 3 & 3 / 2 & 0 / 1 & 4 / 1 & - \\ \hline
\end{tabular}

(a) -1/0 encoding
\vspace{0.3cm}

%run paper_tables.py --granular_name --name 01 --base_name regmultilab --use_cs --nodiff
\begin{tabular}{ | l | c | c | c | c | c | c | c | c | c |  }
\hline
$\downarrow$ vs $\rightarrow$ & G & R & RO & C-nu & B & B-g & $\epsilon$G & C-u & A \\ \hline
G & - & 3 / 1 & 2 / 2 & 1 / 3 & 4 / 0 & 4 / 1 & 4 / 0 & 5 / 0 & 3 / 0 \\ \hline
R & 1 / 3 & - & 0 / 3 & 1 / 4 & 2 / 2 & 2 / 3 & 2 / 2 & 4 / 1 & 2 / 2 \\ \hline
RO & 2 / 2 & 3 / 0 & - & 1 / 2 & 5 / 0 & 4 / 0 & 3 / 1 & 5 / 0 & 3 / 1 \\ \hline
C-nu & 3 / 1 & 4 / 1 & 2 / 1 & - & 4 / 1 & 3 / 1 & 4 / 0 & 4 / 1 & 4 / 1 \\ \hline
B & 0 / 4 & 2 / 2 & 0 / 5 & 1 / 4 & - & 0 / 2 & 1 / 2 & 2 / 1 & 1 / 3 \\ \hline
B-g & 1 / 4 & 3 / 2 & 0 / 4 & 1 / 3 & 2 / 0 & - & 2 / 2 & 4 / 1 & 1 / 3 \\ \hline
$\epsilon$G & 0 / 4 & 2 / 2 & 1 / 3 & 0 / 4 & 2 / 1 & 2 / 2 & - & 4 / 1 & 0 / 1 \\ \hline
C-u & 0 / 5 & 1 / 4 & 0 / 5 & 1 / 4 & 1 / 2 & 1 / 4 & 1 / 4 & - & 0 / 5 \\ \hline
A & 0 / 3 & 2 / 2 & 1 / 3 & 1 / 4 & 3 / 1 & 3 / 1 & 1 / 0 & 5 / 0 & - \\ \hline
\end{tabular}

(b) 0/1 encoding
\end{table}

\begin{table}[tb]
\small
\center
\caption{\emph{Statistically significant} wins / losses of all methods on the
8 classification datasets from the UCI repository considered in~\citep{foster2018practical}.
Hyperparameters are fixed as given in Table~\ref{table:hyperparams2}.
}
\label{table:win_loss_uci}
%run paper_tables.py --granular_name --name neg10 --base_name allmulticlass --nodiff --uci
\begin{tabular}{ | l | c | c | c | c | c | c | c | c | c |  }
\hline
$\downarrow$ vs $\rightarrow$ & G & R & RO & C-nu & B & B-g & $\epsilon$G & C-u & A \\ \hline
\textbf{G} & - & 4 / 0 & 2 / 1 & 5 / 2 & 5 / 0 & 4 / 1 & 6 / 0 & 6 / 0 & 5 / 0 \\ \hline
R & 0 / 4 & - & 0 / 1 & 3 / 1 & 4 / 1 & 3 / 1 & 4 / 2 & 6 / 0 & 2 / 2 \\ \hline
RO & 1 / 2 & 1 / 0 & - & 4 / 1 & 5 / 0 & 5 / 0 & 5 / 0 & 6 / 0 & 3 / 0 \\ \hline
C-nu & 2 / 5 & 1 / 3 & 1 / 4 & - & 3 / 1 & 2 / 2 & 3 / 2 & 7 / 0 & 2 / 3 \\ \hline
B & 0 / 5 & 1 / 4 & 0 / 5 & 1 / 3 & - & 0 / 2 & 3 / 2 & 6 / 0 & 2 / 2 \\ \hline
B-g & 1 / 4 & 1 / 3 & 0 / 5 & 2 / 2 & 2 / 0 & - & 4 / 1 & 6 / 0 & 2 / 2 \\ \hline
$\epsilon$G & 0 / 6 & 2 / 4 & 0 / 5 & 2 / 3 & 2 / 3 & 1 / 4 & - & 6 / 0 & 0 / 2 \\ \hline
C-u & 0 / 6 & 0 / 6 & 0 / 6 & 0 / 7 & 0 / 6 & 0 / 6 & 0 / 6 & - & 0 / 6 \\ \hline
A & 0 / 5 & 2 / 2 & 0 / 3 & 3 / 2 & 2 / 2 & 2 / 2 & 2 / 0 & 6 / 0 & - \\ \hline
\end{tabular}

(a) -1/0 encoding
\vspace{0.3cm}

%run paper_tables.py --granular_name --name 01 --base_name allmulticlass --nodiff --uci
\begin{tabular}{ | l | c | c | c | c | c | c | c | c | c |  }
\hline
$\downarrow$ vs $\rightarrow$ & G & R & RO & C-nu & B & B-g & $\epsilon$G & C-u & A \\ \hline
G & - & 1 / 5 & 0 / 5 & 2 / 2 & 3 / 2 & 3 / 1 & 3 / 1 & 7 / 0 & 2 / 1 \\ \hline
R & 5 / 1 & - & 0 / 1 & 5 / 0 & 6 / 0 & 6 / 0 & 6 / 0 & 8 / 0 & 5 / 0 \\ \hline
\textbf{RO} & 5 / 0 & 1 / 0 & - & 5 / 0 & 6 / 0 & 6 / 0 & 7 / 0 & 7 / 0 & 6 / 0 \\ \hline
C-nu & 2 / 2 & 0 / 5 & 0 / 5 & - & 2 / 1 & 3 / 0 & 2 / 1 & 7 / 0 & 1 / 1 \\ \hline
B & 2 / 3 & 0 / 6 & 0 / 6 & 1 / 2 & - & 3 / 0 & 1 / 3 & 7 / 0 & 0 / 3 \\ \hline
B-g & 1 / 3 & 0 / 6 & 0 / 6 & 0 / 3 & 0 / 3 & - & 1 / 3 & 7 / 0 & 0 / 3 \\ \hline
$\epsilon$G & 1 / 3 & 0 / 6 & 0 / 7 & 1 / 2 & 3 / 1 & 3 / 1 & - & 7 / 0 & 0 / 1 \\ \hline
C-u & 0 / 7 & 0 / 8 & 0 / 7 & 0 / 7 & 0 / 7 & 0 / 7 & 0 / 7 & - & 0 / 7 \\ \hline
A & 1 / 2 & 0 / 5 & 0 / 6 & 1 / 1 & 3 / 0 & 3 / 0 & 1 / 0 & 7 / 0 & - \\ \hline
\end{tabular}

(b) 0/1 encoding
\end{table}

\begin{table}[tb]
\caption{Progressive validation loss for RCV1 with real-valued costs.
Same as Table~\ref{table:cost_sensitive}(a), but with all methods.
Hyperparameters are fixed as given in Table~\ref{table:hyperparams2}.
The learning rate is optimized once on the original dataset,
and we show mean and standard error based on 10 different random reshufflings of the dataset.}
\label{table:rcv1cs_all}
\small
\center
%run paper_tables.py --avg_std_name --name 01 --base_name rcv1shuf --use_cs
\begin{tabular}{ | c | c | c | c | c |  }
\hline
G & R & RO & C-nu & C-u \\ \hline 
0.215 $\pm$ 0.010 & 0.408 $\pm$ 0.003 & 0.225 $\pm$ 0.008 & 0.215 $\pm$ 0.006 & 0.570 $\pm$ 0.023 \\ \hline
\end{tabular}
\begin{tabular}{| c | c | c | c |}
\hline
B & B-g & $\epsilon$G & A \\ \hline
0.256 $\pm$ 0.006 & 0.251 $\pm$ 0.005 & 0.230 $\pm$ 0.009 & 0.230 $\pm$ 0.010 \\ \hline
\end{tabular}
\end{table}

\begin{table}[tb]
\caption{Impact of reductions for Bag (left) and $\epsilon$-greedy (right),
with hyperparameters optimized and encoding fixed to -1/0 or 0/1.
Extended version of Table~\ref{table:reductions}.
Each (row, column) entry shows the \emph{statistically significant} wins and losses
of row against column.}
\label{table:reductions_win_loss}

\small
\center
%run paper_tables.py --opt_algo --algo bag --sep_cb_type --name neg10 --base_name allbinary --nodiff
\begin{tabular}{ | l | c | c | c |  }
\hline
$\downarrow$ vs $\rightarrow$ & ips & dr & iwr \\ \hline
ips & - & 29 / 71 & 25 / 80 \\ \hline
dr & 71 / 29 & - & 30 / 55 \\ \hline
iwr & 80 / 25 & 55 / 30 & - \\ \hline
\end{tabular}
%run paper_tables.py --opt_algo --algo e_greedy --sep_cb_type --name neg10 --base_name allbinary --nodiff
~~\begin{tabular}{ | l | c | c | c |  }
\hline
$\downarrow$ vs $\rightarrow$ & ips & dr & iwr \\ \hline
ips & - & 83 / 22 & 16 / 144 \\ \hline
dr & 22 / 83 & - & 9 / 159 \\ \hline
iwr & 144 / 16 & 159 / 9 & - \\ \hline
\end{tabular}

(a) -1/0 encoding
\vspace{0.3cm}

%run paper_tables.py --opt_algo --algo bag --sep_cb_type --name 01 --base_name allbinary --nodiff
\begin{tabular}{ | l | c | c | c |  }
\hline
$\downarrow$ vs $\rightarrow$ & ips & dr & iwr \\ \hline
ips & - & 46 / 231 & 17 / 236 \\ \hline
dr & 231 / 46 & - & 33 / 95 \\ \hline
iwr & 236 / 17 & 95 / 33 & - \\ \hline
\end{tabular}
%run paper_tables.py --opt_algo --algo e_greedy --sep_cb_type --name 01 --base_name allbinary --nodiff
~~\begin{tabular}{ | l | c | c | c |  }
\hline
$\downarrow$ vs $\rightarrow$ & ips & dr & iwr \\ \hline
ips & - & 40 / 129 & 36 / 174 \\ \hline
dr & 129 / 40 & - & 23 / 142 \\ \hline
iwr & 174 / 36 & 142 / 23 & - \\ \hline
\end{tabular}

(b) 0/1 encoding
\end{table}

\paragraph{Varying~$C_0$ in RegCB-opt and active $\epsilon$-greedy.}
Figure~\ref{fig:ro} shows a comparison between RegCB-opt and Greedy or Cover-NU on our corpus,
for different values of~$C_0$, which controls the level of exploration through the width of confidence bounds.
Figure~\ref{fig:active} shows the improvements that the active $\epsilon$-greedy algorithm can achieve
compared to $\epsilon$-greedy, under different settings.

\begin{figure}[tb]
%run paper_scatterplots.py --regcb --base_name allbinary --noval
%run paper_scatterplots.py --regcb_cover --base_name allbinary --noval
	\centering
	\includegraphics[width=0.30\columnwidth]{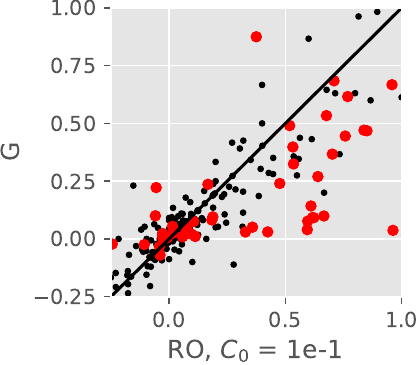}
	\includegraphics[width=0.30\columnwidth]{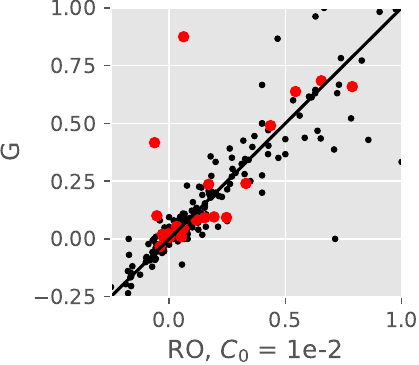}
	\includegraphics[width=0.30\columnwidth]{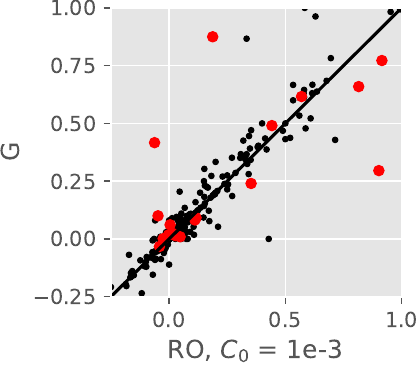}
	\includegraphics[width=0.30\columnwidth]{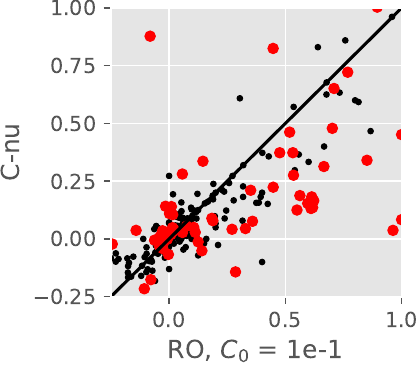}
	\includegraphics[width=0.30\columnwidth]{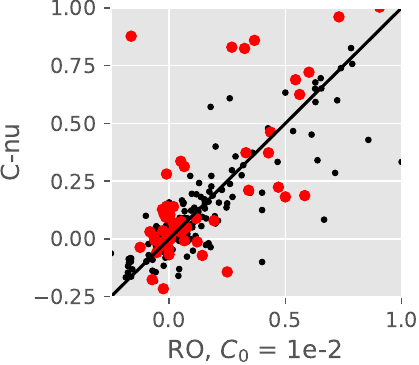}
	\includegraphics[width=0.30\columnwidth]{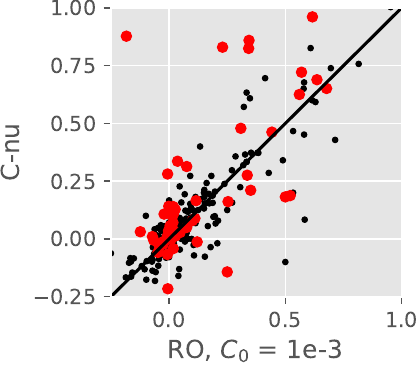}

	\caption{Comparison of RegCB-opt with Greedy (top) and Cover-NU (bottom) for different values of~$C_0$.
	Hyperparameters for Greedy and Cover-NU fixed as in Table~\ref{table:algos}. Encoding fixed to -1/0.
	The plots consider normalized loss on held-out datasets, with red points indicating significant wins.
	}
	\label{fig:ro}
\end{figure}

\begin{figure}[tb]
	\centering
	\small
	\includegraphics[width=0.30\columnwidth]{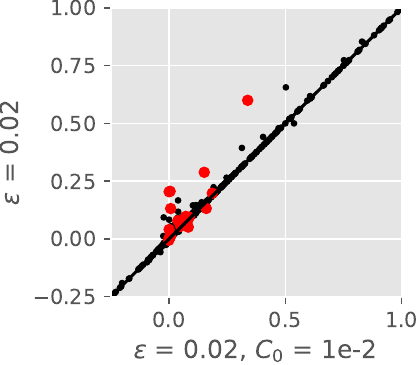}
	\includegraphics[width=0.30\columnwidth]{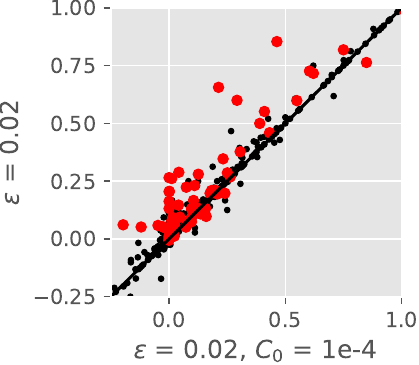}
	\includegraphics[width=0.30\columnwidth]{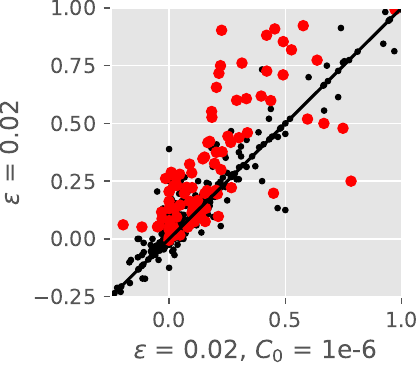}\\
	(a) DR\\ \vspace{0.2cm}
	\includegraphics[width=0.30\columnwidth]{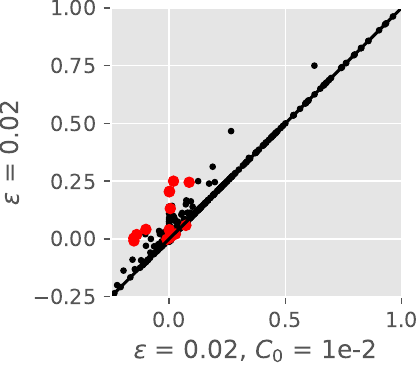}
	\includegraphics[width=0.30\columnwidth]{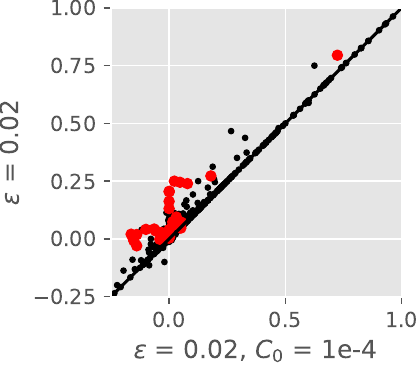}
	\includegraphics[width=0.30\columnwidth]{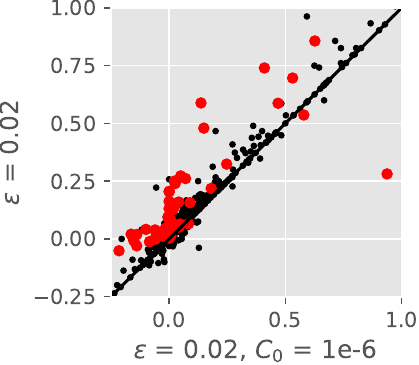}\\
	(b) \MTR{}

	\caption{Improvements to $\epsilon$-greedy from our active learning strategy. Encoding fixed to -1/0.
	The \MTR{} implementation described in Section~\ref{sub:active_impl}
	still manages to often outperform $\epsilon$-greedy,
	despite only providing an approximation to Algorithm~\ref{alg:egreedy_active}.}
	\label{fig:active}
\end{figure}

\paragraph{Counterfactual evaluation.}
Figure~\ref{fig:cfe_appx} extends Figure~\ref{fig:cfe} to include all algorithms,
and additionally shows results of using IPS estimates directly on the losses~$\ell_t(a_t)$
instead of rewards $1 - \ell_t(a_t)$, which tend to be significantly worse.

\begin{figure}[tb]
	\centering
	\begin{subfigure}[c]{.45\textwidth}
	\includegraphics[width=\textwidth]{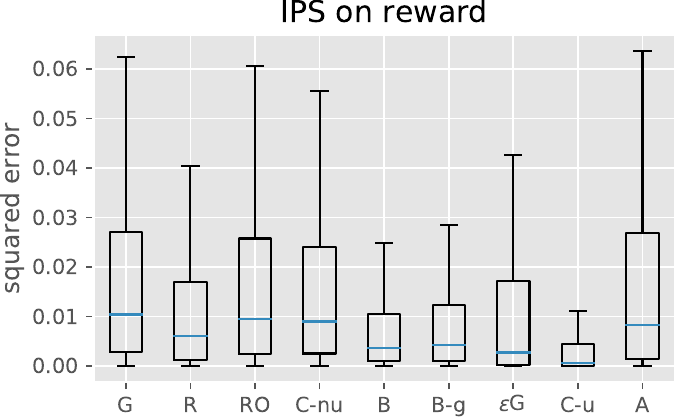}
	\caption{all multiclass datasets}
	\end{subfigure}
	\begin{subfigure}[c]{.45\textwidth}
	\includegraphics[width=\textwidth]{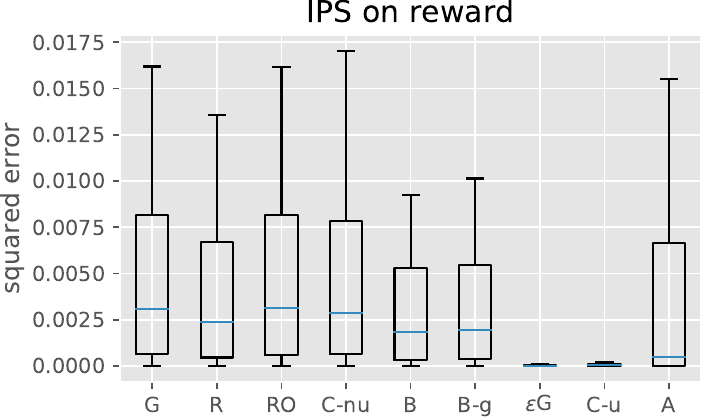}
	\caption{$n \geq 10\,000$ only}
	\end{subfigure}
	\begin{subfigure}[c]{.45\textwidth}
	\includegraphics[width=\textwidth]{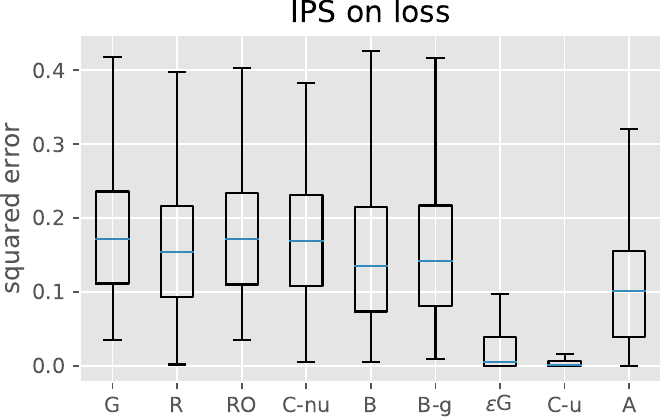}
	\caption{all multiclass datasets}
	\end{subfigure}
	\begin{subfigure}[c]{.45\textwidth}
	\includegraphics[width=\textwidth]{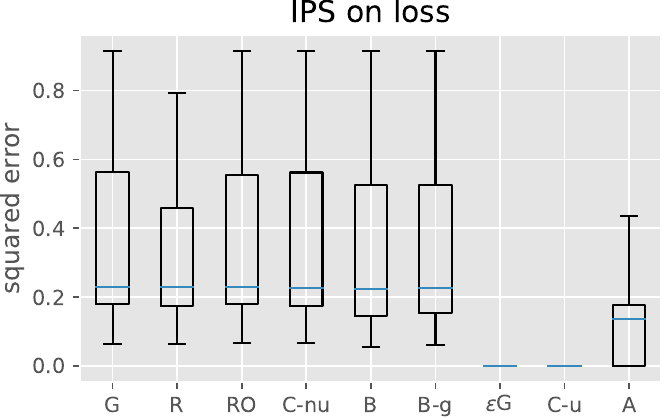}
	\caption{$n \geq 10\,000$ only}
	\end{subfigure}
	\caption{Errors of IPS counterfactual estimates for the uniform random policy using exploration
	logs collected by various algorithms on multiclass datasets (extended version of Figure~\ref{fig:cfe}).
	The boxes show quartiles (with the median shown as a blue line) of the distribution of squared errors across all multiclass datasets or only those with at least 10\,000 examples.
	The logs are obtained by running each algorithm with -1/0 encodings, fixed hyperparameters from Table~\ref{table:algos}, and the best learning rate on each dataset according to progressive validation loss.
	The top plots consider IPS with \emph{reward} estimates (as in Figure~\ref{fig:cfe}), while the bottom plots consider IPS on the \emph{loss}.}
	\label{fig:cfe_appx}
\end{figure}

\subsection{Shared \emph{baseline} parameterization}
\label{sub:baseline}

We also experimented with the use of an \emph{action-independent additive baseline} term in our loss estimators,
which can help learn better estimates with fewer samples in some situations.
In this case the regressors take the form $f(x, a) = \theta_0 + \theta_a^\top x$ (\MTR{})
or $\hat{\ell}(x, a) = \phi_0 + \phi_a^\top x$ (DR).
In order to learn the baseline term more quickly,
we propose to use a separate online update for the parameters $\theta_0$ or $\phi_0$ to regress on observed losses,
followed by an online update on the residual for the action-dependent part.
We scale the step-size of these baseline updates by the largest observed magnitude of the loss,
in order to adapt to the observed loss range for normalized updates~\citep{ross2013normalized}.

Such an additive baseline can be helpful to
quickly adapt to a constant loss estimate thanks to the separate online update.
This appears particularly useful with the -1/0 encoding, for which the initialization at 0 may give
pessimistic loss estimates which can be damaging in particular for the greedy method,
that often gets some initial exploration from an optimistic cost encoding. This can be seen
in Figure~\ref{fig:baseline}(top).
Table~\ref{table:baseline_opt} shows that optimizing over the use of \emph{baseline} on each dataset
can improve the performance of Greedy and RegCB-opt when compared to other methods such as Cover-NU.

In an online learning setting, baseline can also help to quickly reach
an unknown target range of loss estimates.
This is demonstrated in Figure~\ref{fig:baseline}(bottom),
where the addition of baseline is shown to help various methods with 9/10 encodings
on a large number of datasets. We do not evaluate RegCB for 9/10 encodings as it needs a priori known upper and lower bounds on costs.

\begin{figure}[tb]
%run paper_scatterplots.py --enc_910 --base_name disagree --noval
%run paper_scatterplots.py --baseline --base_name allbinary --noval
	\centering
	%run paper_scatterplots.py --baseline --base_name allbinary --noval
	\includegraphics[width=0.30\columnwidth]{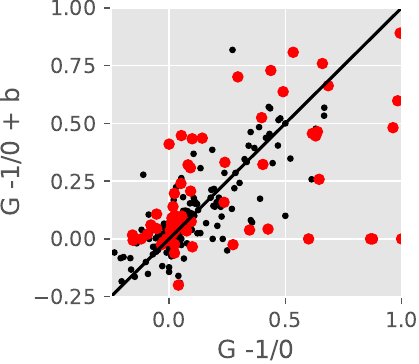}
	~\includegraphics[width=0.30\columnwidth]{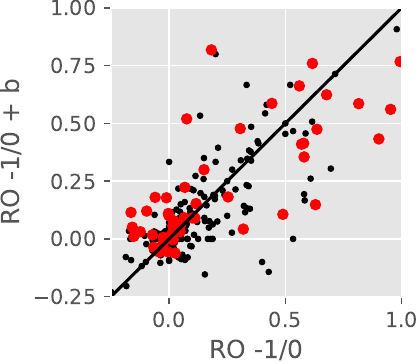}
	~\includegraphics[width=0.30\columnwidth]{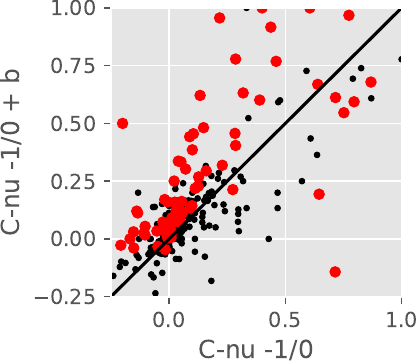} \\
	\includegraphics[width=0.30\columnwidth]{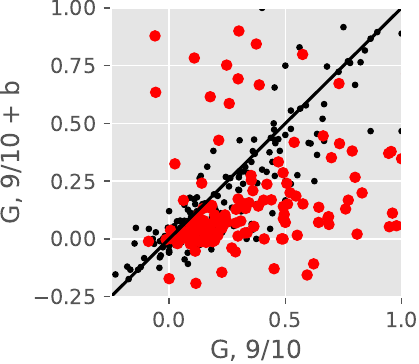}
	~\includegraphics[width=0.30\columnwidth]{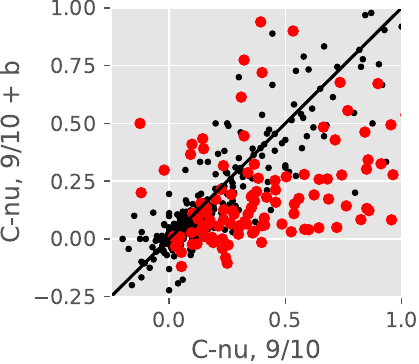}
	~\includegraphics[width=0.30\columnwidth]{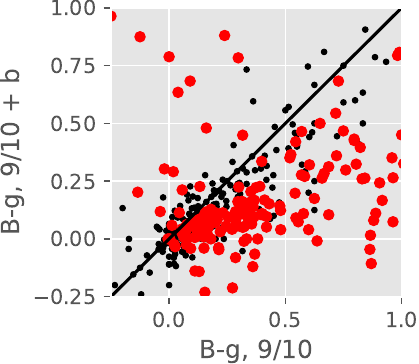}

	\caption{(top) Impact of \emph{baseline} on different algorithms with encoding fixed to -1/0;
	for Greedy and RegCB-opt, it can significantly help against pessimistic initial costs in some datasets.
	Hyperparameters fixed as in Table~\ref{table:algos}.
	(bottom) Baseline improves robustness to the range of losses.
	The plots consider normalized loss on held-out datasets, with red points indicating significant wins.}
	\label{fig:baseline}
% \vspace{-0.3cm}
\end{figure}

\begin{table}
\small
\center
\caption{\emph{Statistically significant} wins / losses of all methods on held-out datasets,
with -1/0 encoding and fixed hyperparameters, except for \emph{baseline}, which is optimized on each dataset together with the learning rate.
The fixed hyperparameters are shown in the table below, and were selected with the same voting
approach described in Table~\ref{table:algos}.
This optimization benefits Greedy and RegCB-opt in particular.}
\label{table:baseline_opt}

%run paper_tables.py --granular_neg10_bopt --base_name allbinary --noval --nodiff
\begin{tabular}{ | l | c | c | c | c | c | c | c | c | c |  }
\hline
$\downarrow$ vs $\rightarrow$ & G & R & RO & C-nu & B & B-g & $\epsilon$G & C-u & A \\ \hline
G & - & 28 / 13 & 11 / 16 & 58 / 14 & 87 / 13 & 68 / 16 & 75 / 0 & 171 / 1 & 48 / 4 \\ \hline
R & 13 / 28 & - & 8 / 32 & 41 / 21 & 68 / 15 & 51 / 20 & 64 / 10 & 163 / 7 & 42 / 15 \\ \hline
RO & 16 / 11 & 32 / 8 & - & 63 / 11 & 89 / 10 & 66 / 10 & 82 / 2 & 180 / 1 & 61 / 3 \\ \hline
C-nu & 14 / 58 & 21 / 41 & 11 / 63 & - & 56 / 30 & 32 / 40 & 57 / 23 & 158 / 6 & 40 / 33 \\ \hline
B & 13 / 87 & 15 / 68 & 10 / 89 & 30 / 56 & - & 10 / 32 & 53 / 34 & 128 / 7 & 29 / 57 \\ \hline
B-g & 16 / 68 & 20 / 51 & 10 / 66 & 40 / 32 & 32 / 10 & - & 56 / 17 & 140 / 2 & 37 / 36 \\ \hline
$\epsilon$G & 0 / 75 & 10 / 64 & 2 / 82 & 23 / 57 & 34 / 53 & 17 / 56 & - & 126 / 10 & 3 / 42 \\ \hline
C-u & 1 / 171 & 7 / 163 & 1 / 180 & 6 / 158 & 7 / 128 & 2 / 140 & 10 / 126 & - & 5 / 159 \\ \hline
A & 4 / 48 & 15 / 42 & 3 / 61 & 33 / 40 & 57 / 29 & 36 / 37 & 42 / 3 & 159 / 5 & - \\ \hline
\end{tabular}

\vspace{0.3cm}
\begin{tabular}{|c|c|}
\hline
Algorithm & Hyperparameters \\ \hline
G &  - \\ \hline
R/RO & $C_0 = 10^{-3}$ \\ \hline
C-nu & $N = 16, \psi = 0.1$, DR \\ \hline
C-u & $N = 4, \psi = 0.1$, IPS \\ \hline
B & $N = 4$, IWR \\ \hline
B-g & $N = 4$, IWR \\ \hline
$\epsilon$G & $\epsilon = 0.02$, IWR \\ \hline
A & $\epsilon = 0.02, C_0 = 10^{-6}$, IWR \\ \hline
\end{tabular}

\end{table}

\clearpage
\section{Active $\epsilon$-greedy: Practical Algorithm and Analysis}
\label{sec:active_e_greedy_appx}
%!TEX root = main.tex

This section presents our active $\epsilon$-greedy method, a variant of $\epsilon$-greedy
that reduces the amount of uniform exploration using techniques from active learning.
Section~\ref{sub:active_impl} introduces the practical algorithm,\footnote{Our implementation is available in the following branch of Vowpal Wabbit: \url{https://github.com/albietz/vowpal_wabbit/tree/bakeoff}.} while Section~\ref{sub:active_analysis}
provides a theoretical analysis of the method, showing that it achieves a regret of~$O(T^{1/3})$
under specific favorable settings, compared to~$O(T^{2/3})$ for vanilla $\epsilon$-greedy.

\subsection{Algorithm}
\label{sub:active_impl}

\begin{algorithm}[tb]
\caption{Active $\epsilon$-greedy}
\label{alg:egreedy_active}
$\pi_1$; $\epsilon$; $C_0 > 0$.

$\verb+explore+(x_t)$:
\begin{algorithmic}
  \STATE $A_t = \{a : \verb+loss_diff+(\pi_t, x_t, a) \leq \Delta_{t,C_0}\}$;
  \STATE $p_t(a) = \frac{\epsilon}{K} \1\{a \in A_t\} + (1 - \frac{\epsilon |A_t|}{K}) \1\{\pi_t(x_t) = a\}$;
  \STATE {\bfseries return} $p_t$;
\end{algorithmic}
$\verb+learn+(x_t, a_t, \ell_t(a_t), p_t)$:
\begin{algorithmic}
  \STATE $\hat{\ell}_t = \verb+estimator+(x_t, a_t, \ell_t(a_t), p_t(a_t))$;
  \STATE $\hat{c}_t(a) = \begin{cases}
    \hat{\ell}_t(a), &\text{ if }p_t(a) > 0\\
    1, &\text{ otherwise.}
  \end{cases}$
  \STATE $\pi_{t+1} = \verb+csc_oracle+(\pi_t, x_t, \hat{c}_t)$;
\end{algorithmic}
\end{algorithm}

The simplicity of the $\epsilon$-greedy method described in Section~\ref{sub:e_greedy} often makes it
the method of choice for practitioners. However, the uniform exploration over randomly selected actions
can be quite inefficient and costly in practice.
A natural consideration is to restrict this randomization over actions which could plausibly be selected
by the optimal policy $\pi^* = \arg\min_{\pi \in \Pi} L(\pi)$,
where $L(\pi) = \E_{(x, \ell) \sim D}[\ell(\pi(x))]$ is the expected loss of a policy~$\pi$.

To achieve this, we use techniques from disagreement-based active learning~\citep{hanneke2014theory,hsu2010algorithms}.
After observing a context~$x_t$, for any action~$a$, if we can find a policy~$\pi$ that
would choose this action ($\pi(x_t) = a$) instead of the empirically best action~$\pi_t(x_t)$,
while achieving a small loss on past data,
then there is disagreement about how good such an action is, and we allow exploring it.
Otherwise, we are confident that the best policy would not choose this action, thus
we avoid exploring it, and assign it a high cost.
The resulting method is in Algorithm~\ref{alg:egreedy_active}.
Like RegCB, the method requires a known loss range $[c_{min}, c_{max}]$,
and assigns a loss~$c_{max}$ to such unexplored actions
(we consider the range~$[0, 1]$ in Algorithm~\ref{alg:egreedy_active} for simplicity).
The disagreement test we use is based on empirical loss differences,
similar to the Oracular CAL active learning method~\citep{hsu2010algorithms},
denoted \verb+loss_diff+, together with a~threshold:
\[\Delta_{t, C_0} = \sqrt{C_0 \frac{K \log t}{\epsilon t}} + C_0 \frac{K \log t}{\epsilon t}.\]
A practical implementation of \verb+loss_diff+ for an online setting is given below.
We analyze a theoretical form of this algorithm in Section~\ref{sub:active_analysis},
showing a formal version of the following theorem:
\begin{theorem}
\label{thm:regret_informal}
With high-probability, and under favorable conditions on disagreement and on the problem noise,
active $\epsilon$-greedy achieves expected regret $O(T^{1/3})$.
\end{theorem}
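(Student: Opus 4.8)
The plan is to follow the classical epoch-greedy regret decomposition of~\citet{langford2008epoch}, while exploiting the fact that the active variant confines its uniform exploration to the disagreement set~$A_t$ rather than to all~$K$ actions. Writing the per-round regret as the sum of the regret incurred on exploitation steps (where the empirically-best policy~$\pi_t$ is played) and that paid on exploration steps, I would bound the former by the generalization error of~$\pi_t$ and the latter by the exploration probability~$\tfrac{\epsilon}{K}\,\E[|A_t|]$ times the loss range. The entire improvement over the~$O(T^{2/3})$ rate of vanilla~$\epsilon$-greedy comes from showing that, under the stated favorable conditions, $\E[|A_t|]$ shrinks with~$t$ rather than remaining at~$K$.

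First I would establish a uniform deviation bound for the importance-weighted loss estimates produced by \verb+estimator+. Because each estimate has conditional variance controlled by the inverse exploration probability (which is at least~$\epsilon/K$ on explored actions), a Freedman/Bernstein argument yields, with probability~$1-\delta$ and simultaneously over all~$\pi \in \Pi$ and all~$t$, a deviation of the empirical loss difference~$\hat L_t(\pi) - \hat L_t(\pi_t)$ from its population counterpart that matches the threshold~$\Delta_{t,C_0}$; the two-term form (a~$\sqrt{\cdot}$ plus a linear term) of~$\Delta_{t,C_0}$ is precisely the shape of a Bernstein tail with per-sample variance and range of order~$K/\epsilon$. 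On this high-probability event the disagreement test \verb+loss_diff+ is sound: the optimal policy~$\pi^*$ is never eliminated, so its action always lies in~$A_t$ and is never assigned the artificial cost, while any action realizable only by policies with population suboptimality exceeding~$O(\Delta_{t,C_0})$ is excluded from~$A_t$.

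Next I would convert the soundness of the test into a bound on~$\E[|A_t|]$ using a disagreement-coefficient assumption together with the noise condition. The disagreement coefficient relates the probability mass of the region where some near-optimal policy disagrees with~$\pi^*$ to the radius of the near-optimal ball; since the test only retains actions realizable by policies within population distance~$O(\Delta_{t,C_0})$ of~$\pi^*$, this measure decays with~$t$. A Massart/Tsybakov margin condition plays a dual role: it bounds the suboptimality gap of a wrong action away from zero on all but a vanishing fraction of contexts, which both sharpens the excess-risk rate of~$\pi_t$ (turning the slow~$\sqrt{1/t}$ decay into a faster one) and accelerates the shrinkage of the disagreement region. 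Feeding the resulting bound on~$\tfrac{\epsilon}{K}\,\E[|A_t|]$ into the exploration term and the fast excess-risk rate into the exploitation term, I would sum over~$t = 1,\dots,T$ and optimize over~$\epsilon$; the balance point shifts so as to yield~$O(T^{1/3})$ in place of the non-active~$O(T^{2/3})$.

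The main obstacle will be controlling the importance-weighted estimates inside the \emph{online} disagreement test: unlike the batch Oracular CAL analysis of~\citet{hsu2010algorithms}, here the exploration probabilities~$p_t$ are data-dependent, and the estimates feed back into the set~$A_t$ that determines future~$p_t$, so the deviation bound must hold uniformly along a single adaptive trajectory while the variance inflation by~$1/p_t(a)$ interacts with the decaying threshold. Making this self-referential argument rigorous—so that one high-probability event simultaneously guarantees test soundness, the disagreement-region bound, and the fast excess-risk rate—is where the real work lies; the final balancing of exploration against estimation error is then routine.
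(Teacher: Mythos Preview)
Your plan is correct and mirrors the paper's own argument closely: the paper also decomposes the per-round regret into an exploitation term bounded by $L(\pi_t)-L(\pi^*)$ (controlled via a Freedman-based deviation lemma with variance proportional to $\rho(\pi,\pi^*)$, then sharpened to a fast rate under the Massart condition) and an exploration term bounded by $\tfrac{\epsilon}{K}$ times the expected number of non-$\pi^*$ actions in~$A_t$, which is controlled by showing that any such action forces $x_t\in DIS(2\Delta_t/\tau)$ and invoking the disagreement coefficient. The one ingredient worth making explicit in your write-up is the ``favorable bias'' lemma---that assigning cost~$1$ to unexplored actions can only increase $\tilde L_T(\pi)-\tilde L_T(\pi^*)$ once $\pi^*(x_t)\in A_t$---since this, together with an induction on~$t$, is what cleanly resolves the self-referential issue you flag in your last paragraph.
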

Note that this data-dependent guarantee improves on worst-case guarantees achieved by
the optimal algorithms~\citet{agarwal2014taming,dudik2011efficient}.
In the extreme case where the loss of any suboptimal policy is bounded away from that of~$\pi^*$,
we show that our algorithm can achieve constant regret.
While active learning algorithms suggest that data-dependent thresholds~$\Delta_t$ can yield
better guarantees~\citep[\eg,][]{huang2015efficient},
this may require more work in our setting due to open problems
related to data-dependent guarantees for contextual bandits~\citep{agarwal2017open}.
In a worst-case scenario, active $\epsilon$-greedy behaves similarly to $\epsilon$-greedy~\citep{langford2008epoch},
achieving an $O(T^{2/3})$ expected regret with high probability.

\paragraph{Practical implementation of the disagreement test.}
We now present a practical way to implement the disagreement tests in the active $\epsilon$-greedy method,
in the context of online cost-sensitive classification oracles based on regression, as in Vowpal Wabbit.
This corresponds to the $\verb+loss_diff+$ method in Algorithm~\ref{alg:egreedy_active}.

Let~$\hat{L}_{t-1}(\pi)$ denote the empirical loss of policy~$\pi$ on the (biased) sample
of cost-sensitive examples collected up to time $t-1$ (see Section~\ref{sub:active_analysis} for details).
After observing a context $x_t$, we want to estimate
\[
\verb+loss_diff+(\pi_t, x_t, \bar a) \approx \hat{L}_{t-1}(\pi_{t,\bar a}) - \hat{L}_{t-1}(\pi_t),
\]
for any action $\bar a$, where
\begin{align*}
\pi_t &= \arg\min_\pi \hat{L}_{t-1}(\pi) \\
\pi_{t,\bar a} &= \arg\min_{\pi:\pi(x_t)=\bar a} \hat{L}_{t-1}(\pi).
\end{align*}
In our online setup, we take $\pi_t$ to be the current online policy (as in Algorithm~\ref{alg:egreedy_active}),
and we estimate the loss difference by looking at how many online CSC examples of the form $\bar c := (\1\{a \ne \bar{a}\})_{a=1..K}$
are needed (or the importance weight on such an example) in order to switch prediction from $\pi_t(x_t)$ to $\bar a$.
If we denote this importance weight by~$\tau_{\bar a}$,
then we can estimate $\hat L_{t-1}(\pi_{t,\bar{a}}) - \hat L_{t-1}(\pi_t) \approx \tau_{\bar a}/t$.

\paragraph{Computing $\tau_{\bar a}$ for IPS/DR.}
In the case of IPS/DR, we use an online CSC oracle,
which is based on $K$ regressors $f(x, a)$ in VW, each predicting the cost for an action~$a$.
Let $f_t$ be the current regressors for policy $\pi_t$, $y_t(a) := f_t(x_t, a)$,
and denote by $s_t(a)$ the \emph{sensitivity} of regressor $f_t(\cdot, a)$ on example $(x_t, \bar{c}(a))$.
This sensitivity is essentially defined to be the derivative with respect to an importance weight~$w$
of the prediction $y'(a)$ obtained from the regressor after an online update $(x_t, \bar{c}(a))$ with importance weight~$w$.
A similar quantity has been used, \eg, by~\citet{huang2015efficient,karampatziakis2011online,krishnamurthy2017active}.
Then, the predictions on actions $\bar{a}$ and $a$ cross when the importance weight~$w$
satisfies $y_t(\bar{a}) - s_t(\bar{a}) w = y_t(a) + s_t(a) w$.
Thus, the importance weight required for action~$\bar{a}$ to be preferred (\ie, smaller predicted loss)
to action~$a$ is given by:
\[
w_{\bar{a}}^a = \frac{y_t(\bar{a}) - y_t(a)}{s_t(\bar{a}) + s_t(a)}.
\]
Action~$\bar{a}$ will thus be preferred to all other actions when using an importance weight $\tau_{\bar{a}} = \max_a w_{\bar{a}}^a$.

\paragraph{Computing $\tau_{\bar a}$ for \MTR{}.}
Although Algorithm~\ref{alg:egreedy_active} and the theoretical analysis require CSC in order to
assign a loss of 1 to unexplored actions, and hence does not directly support \MTR{},
we can consider an approximation which leverages the benefits of \MTR{} by performing
standard \MTR{} updates as in $\epsilon$-greedy,
while exploring only on actions that pass a similar disagreement test.
In this case, we estimate $\tau_{\bar a}$ as the importance weight on an online regression example $(x_t, 0)$
for the regressor $f_t(\cdot, \bar a)$, needed to switch prediction to $\bar a$.
If $s_t(\bar a)$ is the sensitivity for such an example,
we have $\tau_{\bar a} = (y_t(\bar a) - y_t^*) / s_t(\bar a)$, where $y_t^* = \min_a y_t(a)$.

\subsection{Theoretical Analysis}
\label{sub:active_analysis}

This section presents a theoretical analysis of the active $\epsilon$-greedy
method introduced in Section~\ref{sub:active_impl}.
We begin by presenting the analyzed version of the algorithm together with definitions in Section~\ref{ssub:active_algo}.
Section~\ref{ssub:active_correctness} then studies the correctness of the method,
showing that with high probability, the actions chosen by the optimal policy are always explored,
and that policies considered by the algorithm are always as good as those obtained
under standard $\epsilon$-greedy exploration.
This section also introduces a Massart-type low-noise condition similar to the one considered
by~\citet{krishnamurthy2017active} for cost-sensitive classification.
Finally, Section~\ref{ssub:active_regret} provides a regret analysis of the algorithm,
both in the worst case and under disagreement conditions together with the Massart noise condition.
In particular, a formal version of Theorem~\ref{thm:regret_informal} is given by Theorem~\ref{thm:regret_massart},
and a more extreme but informative situation is considered in Proposition~\ref{prop:policy_gap},
where our algorithm can achieve constant regret.

\subsubsection{Algorithm and definitions}
\label{ssub:active_algo}

We consider a version of the active $\epsilon$-greedy strategy that is more suitable for
theoretical analysis, given in Algorithm~\ref{alg:greedy_active}.
This method considers exact CSC oracles, as well as a CSC oracle with one constraint on
the policy ($\pi(x_t) = a$ in Eq.\eqref{eq:csc_constraint}).
The threshold~$\Delta_t$ is defined later in Section~\ref{ssub:active_correctness}.
Computing it would require some knowledge about the size of the policy class,
which we avoid by introducing a parameter~$C_0$ in the practical variant.
The disagreement strategy is based on the Oracular CAL active learning method of~\citet{hsu2010algorithms},
which tests for disagreement using empirical error differences, and considers biased samples when
no label is queried. Here, similar tests are used to decide which actions should be explored,
in the different context of cost-sensitive classification,
and the unexplored actions are assigned a loss of 1, making the empirical sample biased
($\hat{Z}_T$ in Algorithm~\ref{alg:greedy_active}).

\paragraph{Definitions.}
Define $Z_T = \{(x_t, \ell_t)\}_{t=1..T} \subset \mathcal{X} \times \R^K$, $\tilde{Z}_T = \{(x_t, \tilde{\ell}_t)\}_{t=1..T}$ (biased sample)
and $\hat{Z}_T = \{(x_t, \hat{\ell}_t)\}_{t=1..T}$ (IPS estimate of biased sample),
where $\ell_t \in [0, 1]^K$ is the (unobserved) loss vector at time~$t$ and
\begin{align}
\tilde{\ell}_t(a) &= \begin{cases}
	\ell_t(a), &\text{ if }a \in A_t\\
	1, &\text{ o/w}
\end{cases} \\
\hat{\ell}_t(a) &= \begin{cases}
	\frac{\1\{a = a_t\}}{p_t(a_t)} \ell_t(a_t), &\text{ if }a \in A_t\\
	1, &\text{ o/w.}
\end{cases} \label{eq:ell_hat}
\end{align}

For any set $Z \subset \mathcal{X} \times \R^K$ defined as above, we denote, for $\pi \in \Pi$,
\begin{align*}
L(\pi, Z) = \frac{1}{|Z|} \sum_{(x, c) \in Z} c(\pi(x)).
\end{align*}
We then define the empirical losses $L_T(\pi) := L(\pi, Z_T)$, $\hat{L}_T(\pi) := L(\pi, \hat{Z}_T)$ and $\tilde{L}_T(\pi) := L(\pi, \tilde{Z}_T)$.
Let $L(\pi) := \E_{(x, \ell) \sim D} [ \ell(\pi(x)) ]$ be the expected loss of policy $\pi$,
and $\pi^* := \arg\min_{\pi \in \Pi} L(\pi)$.
We also define $\rho(\pi, \pi') := P_x(\pi(x) \ne \pi'(x))$, the expected disagreement between policies $\pi$ and $\pi'$,
where $P_x$ denotes the marginal distribution of $D$ on contexts.

\begin{algorithm}[tb]
\caption{active $\epsilon$-greedy: analyzed version}
\label{alg:greedy_active}
\begin{algorithmic}
\STATE {\bfseries Input:} exploration probability $\epsilon$.
\STATE Initialize: $\hat{Z}_0 := \emptyset$.
\FOR{$t = 1, \ldots$}
  \STATE Observe context $x_t$. Let
  \begin{align}
  \pi_t &:= \arg\min_{\pi} L(\pi, \hat{Z}_{t-1}) \nonumber \\
  \pi_{t,a} &:= \arg\min_{\pi : \pi(x_t) = a} L(\pi, \hat{Z}_{t-1}) \label{eq:csc_constraint} \\
  A_t &:= \{a : L(\pi_{t,a}, \hat{Z}_{t-1}) - L(\pi_{t}, \hat{Z}_{t-1}) \leq \Delta_t \} \label{eq:at_def}
  \end{align}
  \STATE Let
  \[
  p_t(a) = \begin{cases}
  	1 - (|A_t| - 1) \epsilon / K, &\text{ if }a = \pi_t(x_t)\\
  	\epsilon / K, &\text{ if }a \in A_t \setminus \{\pi_t(x_t)\} \\
  	0, &\text{ otherwise.}
  \end{cases}
  \]
  \STATE Play action $a_t \sim p_t$, observe $\ell_t(a_t)$ and set $\hat{Z}_t = \hat{Z}_{t-1} \cup \{(x_t, \hat{\ell}_t)\}$, where $\hat{\ell}_t$ is defined in~\eqref{eq:ell_hat}.
\ENDFOR
\end{algorithmic}
\end{algorithm}

\subsubsection{Correctness}
\label{ssub:active_correctness}

We begin by stating a lemma that controls deviations of empirical loss differences,
which relies on Freedman's inequality for martingales~\citep[see, \eg,][Lemma 3]{kakade2009generalization}.
\begin{lemma}[Deviation bounds]
\label{lemma:deviations}
With probability $1 - \delta$, the following event holds: for all $\pi \in \Pi$, for all $T \geq 1$,
\begin{align}
|(\hat{L}_T(\pi) - \hat{L}_T(\pi^*)) - (\tilde{L}_T(\pi) - \tilde{L}_T(\pi^*))| &\leq \sqrt{\frac{2K \rho(\pi, \pi^*) e_T}{\epsilon}} + \left(\frac{K}{\epsilon} + 1 \right) e_T \label{eq:tilde_bound} \\
|(L_T(\pi) - L_T(\pi^*)) - (L(\pi) - L(\pi^*))| &\leq \sqrt{\rho(\pi, \pi^*) e_T} + 2e_T, \label{eq:sample_bound}
\end{align}
where $e_T = \log (2|\Pi| / \delta_T) / T$ and $\delta_T = \delta / (T^2 + T)$.
We denote this event by~$\mathcal{E}$ in what follows.
\end{lemma}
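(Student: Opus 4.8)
The plan is to derive both inequalities as applications of Freedman's martingale inequality (the cited Lemma~3 of~\citet{kakade2009generalization}), once for each fixed $\pi$, and then to union bound over $\pi \in \Pi$ and over $T$. The choice $\delta_T = \delta/(T^2+T)$ is dictated by the union bound over horizons, since $\sum_{T\geq 1}\delta_T = \delta\sum_{T\geq 1}(\frac{1}{T}-\frac{1}{T+1}) = \delta$; allocating the remaining budget $\delta_T$ across the $|\Pi|$ policies and the two sides of each absolute value (a factor $2|\Pi|$) is exactly what produces the logarithmic factor $\log(2|\Pi|/\delta_T) = T e_T$ appearing in $e_T$. So for each claim I would fix $\pi$ and $T$, prove a one-sided bound with failure probability $\delta_T/(2|\Pi|)$, and assemble.

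For~\eqref{eq:sample_bound} I would set $X_t := \ell_t(\pi(x_t)) - \ell_t(\pi^*(x_t))$. Since the pairs $(x_t,\ell_t)$ are i.i.d., the centered variables $X_t - (L(\pi)-L(\pi^*))$ form a martingale difference sequence. The key structural fact is that $X_t = 0$ whenever $\pi(x_t)=\pi^*(x_t)$ and $|X_t|\leq 1$ otherwise, so $\E[X_t^2] \leq \E[\1\{\pi(x)\ne\pi^*(x)\}] = \rho(\pi,\pi^*)$, giving summed conditional variance $V \leq T\rho(\pi,\pi^*)$ and a constant range bound. Freedman then yields, after dividing by $T$, a bound of the form $\sqrt{\rho(\pi,\pi^*)\,e_T}$ plus a term linear in $e_T$, which is~\eqref{eq:sample_bound}.

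The bound~\eqref{eq:tilde_bound} is the crux. Fixing $\pi$, I would set $Y_t := (\hat\ell_t(\pi(x_t)) - \hat\ell_t(\pi^*(x_t))) - (\tilde\ell_t(\pi(x_t)) - \tilde\ell_t(\pi^*(x_t)))$ and verify three properties. (i) \emph{Martingale property}: conditioning on $\mathcal{F}_{t-1}$ and on $x_t$ fixes $A_t$, $p_t$ and $\pi_t(x_t)$, leaving only $a_t\sim p_t$ random; since $\pi_t(x_t)\in A_t$, every $a\in A_t$ has $p_t(a)\geq \epsilon/K>0$, so the IPS values are unbiased for the biased losses, $\E[\hat\ell_t(a)\mid \mathcal{F}_{t-1},x_t] = \tilde\ell_t(a)$ (and both equal $1$ for $a\notin A_t$), whence $\E[Y_t\mid\mathcal{F}_{t-1},x_t]=0$. (ii) \emph{Range}: explored actions satisfy $p_t(a)\geq\epsilon/K$, so $\hat\ell_t(a)\leq K/\epsilon$ and $|Y_t|\leq K/\epsilon+1$, matching the linear coefficient. (iii) \emph{Variance}: $Y_t=0$ when $\pi(x_t)=\pi^*(x_t)$; otherwise, writing $u=\pi(x_t)$, $v=\pi^*(x_t)$ and using $\1\{a_t=u\}\1\{a_t=v\}=0$, a direct second-moment computation gives $\E[Y_t^2\mid\mathcal{F}_{t-1},x_t] \leq \frac{1}{p_t(u)}+\frac{1}{p_t(v)} \leq \frac{2K}{\epsilon}$, since at most one of $u,v$ is the greedy action. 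Because $x_t$ is independent of $\mathcal{F}_{t-1}$, taking the further expectation turns the indicator into $\rho(\pi,\pi^*)$, so $\sum_t\Var(Y_t\mid\mathcal{F}_{t-1})\leq \frac{2KT}{\epsilon}\rho(\pi,\pi^*)$. Plugging $V\leq \frac{2KT}{\epsilon}\rho(\pi,\pi^*)$ and $R\leq K/\epsilon+1$ into Freedman and dividing by $T$ produces the two terms of~\eqref{eq:tilde_bound}.

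The main obstacle is step (iii): the importance-weighted estimates have variance that blows up like $K/\epsilon$, and the two terms $\hat\ell_t(\pi(x_t))$ and $\hat\ell_t(\pi^*(x_t))$ are correlated through the single draw $a_t$, so the second moment must be computed jointly rather than bounded termwise; the gain that makes the right-hand side scale with $\rho(\pi,\pi^*)$ rather than with $1$ comes entirely from the vanishing of $Y_t$ on the agreement event $\{\pi(x_t)=\pi^*(x_t)\}$ and from deferring the expectation over $x_t$ until after conditioning. A secondary point requiring care is the measurability bookkeeping---that $A_t$ and $p_t$ are determined by $\mathcal{F}_{t-1}$ together with $x_t$---which is what legitimizes treating $a_t\sim p_t$ as the only source of randomness in both the unbiasedness and the variance computations.
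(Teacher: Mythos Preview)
Your proposal is correct and follows essentially the same approach as the paper's proof: both apply Freedman's inequality to the same martingale difference sequences, bound the range by $K/\epsilon+1$ (resp.\ $2$), bound the conditional variance by $\frac{2K}{\epsilon}\rho(\pi,\pi^*)$ (resp.\ $\rho(\pi,\pi^*)$) via the vanishing on the agreement event, and union bound over $\pi$ and $T$. The only cosmetic difference is that the paper obtains the variance bound for~\eqref{eq:tilde_bound} via the indicator identity $(\1\{\pi(x_t)=a_t\}-\1\{\pi^*(x_t)=a_t\})^2 \leq \1\{\pi(x_t)\neq\pi^*(x_t)\}(\1\{\pi(x_t)=a_t\}+\1\{\pi^*(x_t)=a_t\})$ rather than your term-by-term second-moment computation, and the factor $2$ in $2|\Pi|$ is allocated in the paper to the two inequalities~\eqref{eq:tilde_bound} and~\eqref{eq:sample_bound} rather than to the two sides of the absolute value.
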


\begin{proof}
We prove the result using Freedman's inequality~\citep[see, \eg,][Lemma 3]{kakade2009generalization},
which controls deviations of a sum using the conditional variance of each term in the sum
and an almost sure bound on their magnitude, along with a union bound.

For~\eqref{eq:tilde_bound}, let $(\hat{L}_T(\pi) - \hat{L}_T(\pi^*)) - (\tilde{L}_T(\pi) - \tilde{L}_T(\pi^*)) = \frac{1}{T}\sum_{t=1}^T R_t$, with
\begin{align*}
R_t = \hat{\ell}_t(\pi(x_t)) - \hat{\ell}_t(\pi^*(x_t)) - (\tilde{\ell}_t(\pi(x_t)) - \tilde{\ell}_t(\pi^*(x_t))).
\end{align*}
We define the $\sigma$-fields $\mathcal{F}_t := \sigma(\{x_i, \ell_i, a_i\}_{i=1}^t)$. Note that $R_t$ is $\mathcal{F}_t$-measurable and
\begin{align*}
\E[\hat{\ell}_t(\pi(x_t)) - \hat{\ell}_t(\pi^*(x_t)) | x_t, \ell_t] = \tilde{\ell}_t(\pi(x_t)) - \tilde{\ell}_t(\pi^*(x_t)),
\end{align*}
so that $\E[R_t | \mathcal{F}_{t-1}] = \E[\E[R_t |x_t, \ell_t] | \mathcal{F}_{t-1}] = 0$. Thus, $(R_t)_{t\geq 1}$ is a martingale difference sequence adapted to the filtration $(\mathcal{F}_t)_{t\geq 1}$.
We have
\begin{align*}
|R_t| \leq |\hat{\ell}_t(\pi(x_t)) - \hat{\ell}_t(\pi^*(x_t))| + |\tilde{\ell}_t(\pi(x_t)) - \tilde{\ell}_t(\pi^*(x_t))| \leq \frac{K}{\epsilon} + 1.
\end{align*}
Note that $\E[\hat{\ell}_t(\pi(x_t)) - \hat{\ell}_t(\pi^*(x_t)) | x_t, \ell_t] = \tilde{\ell}_t(\pi(x_t)) - \tilde{\ell}_t(\pi^*(x_t))$, so that
\begin{align*}
\E[R_t^2 | \mathcal{F}_{t-1}] &= \E[\E[R_t^2 | x_t, \ell_t, A_t] | \mathcal{F}_{t-1}] \\
	&\leq \E[ \E[(\hat{\ell}_t(\pi(x_t)) - \hat{\ell}_t(\pi^*(x_t)))^2| x_t, \ell_t, A_t] | \mathcal{F}_{t-1}] \\
	&\leq \E \left[\E \left[\frac{(\1\{\pi(x_t) = a_t\} - \1\{\pi^*(x_t) = a_t\})^2}{p_t(a_t)^2} | x_t, \ell_t, A_t \right] | \mathcal{F}_{t-1} \right] \\
	&\leq \E \left[\E \left[\frac{\1\{\pi(x_t) \ne \pi^*(x_t)\}(\1\{\pi(x_t) = a_t\} + \1\{\pi^*(x_t) = a_t\})}{p_t(a_t)^2} | x_t, \ell_t, A_t \right] | \mathcal{F}_{t-1} \right] \\
	&= \E\left[\frac{2 K \1\{\pi(x_t) \ne \pi^*(x_t)\}}{\epsilon} |\mathcal{F}_{t-1} \right] = \frac{2K}{\epsilon} \rho(\pi, \pi^*).
\end{align*}
Freedman's inequality then states that~\eqref{eq:tilde_bound} holds with probability $1 - \delta_T/2 |\Pi|$.

For~\eqref{eq:sample_bound}, we consider a similar setup with
\begin{align*}
R_t = \ell_t(\pi(x_t)) - \ell_t(\pi^*(x_t)) - (L(\pi) - L(\pi^*)).
\end{align*}
We have $\E[R_t|\mathcal{F}_{t-1}] = 0$, $|R_t| \leq 2$ and $\E[R_t^2|\mathcal{F}_{t-1}] \leq \rho(\pi, \pi^*)$,
which yields that $\eqref{eq:sample_bound}$ holds with probability $1 - \delta_T/2 |\Pi|$ using Freedman's inequality.
A union bound on $\pi \in \Pi$ and $T \geq 1$ gives the desired result.
\end{proof}

\paragraph{Threshold.}
We define the threshold $\Delta_T$ used in~\eqref{eq:at_def} in Algorithm~\ref{alg:greedy_active} as:
\begin{equation}
\label{eq:threshold}
\Delta_T := \left(\sqrt{\frac{2K}{\epsilon}} + 1\right) \sqrt{e_{T-1}} + \left( \frac{K}{\epsilon} + 3\right) e_{T-1}.
\end{equation}
We also define the following more precise deviation quantity for a given policy,
which follows directly from the deviation bounds in Lemma~\ref{lemma:deviations}
\begin{equation}
\label{eq:deviation_tot}
\Delta_T^*(\pi) := \left(\sqrt{\frac{2K}{\epsilon}} + 1\right) \sqrt{\rho(\pi, \pi^*) e_{T-1}} + \left( \frac{K}{\epsilon} + 3\right) e_{T-1}.
\end{equation}
Note that we have $\Delta_T^*(\pi) \leq \Delta_T$ for any policy $\pi$.

The next lemma shows that the bias introduced in the empirical sample by assigning a loss of 1
to unexplored actions is favorable, in the sense that it will not hurt us in identifying~$\pi^*$.
\begin{lemma}[Favorable bias]
\label{lemma:favorable_bias}
Assume $\pi^*(x_t) \in A_t$ for all $t \leq T$. We have
\begin{equation}
\label{eq:favorable_bias}
\tilde{L}_{T}(\pi) - \tilde{L}_{T}(\pi^*) \geq L_{T}(\pi) - L_{T}(\pi^*).
\end{equation}
\end{lemma}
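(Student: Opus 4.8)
The plan is to prove~\eqref{eq:favorable_bias} term by term, exploiting the fact that the bias introduced by $\tilde{\ell}_t$ can only \emph{increase} the loss of any policy, while leaving $\pi^*$ untouched under the stated hypothesis. First I would rewrite the desired inequality by moving everything to one side and regrouping by policy: using $L_T(\pi) = \frac{1}{T}\sum_t \ell_t(\pi(x_t))$ and $\tilde{L}_T(\pi) = \frac{1}{T}\sum_t \tilde{\ell}_t(\pi(x_t))$, the claim is equivalent to
\[
\frac{1}{T}\sum_{t=1}^T \big(\tilde{\ell}_t(\pi(x_t)) - \ell_t(\pi(x_t))\big) \;\geq\; \frac{1}{T}\sum_{t=1}^T \big(\tilde{\ell}_t(\pi^*(x_t)) - \ell_t(\pi^*(x_t))\big),
\]
so it suffices to control the per-step discrepancy $\tilde{\ell}_t(a) - \ell_t(a)$ at $a = \pi(x_t)$ and at $a = \pi^*(x_t)$.

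Second, I would dispatch the right-hand side. By the assumption $\pi^*(x_t) \in A_t$ for every $t \leq T$, the definition of the biased loss gives $\tilde{\ell}_t(\pi^*(x_t)) = \ell_t(\pi^*(x_t))$, so each summand on the right vanishes and the entire right-hand side equals zero. In words: because the optimal policy's chosen action is always in the explored set, the bias never alters its observed loss.

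Third, I would bound the left-hand side from below. For an arbitrary $\pi$, the definition of $\tilde{\ell}_t$ splits into two cases at $a = \pi(x_t)$: if $\pi(x_t) \in A_t$ then $\tilde{\ell}_t(\pi(x_t)) = \ell_t(\pi(x_t))$ and the summand is zero; otherwise $\tilde{\ell}_t(\pi(x_t)) = 1 \geq \ell_t(\pi(x_t))$ since $\ell_t \in [0,1]^K$. In either case $\tilde{\ell}_t(\pi(x_t)) - \ell_t(\pi(x_t)) \geq 0$, so the left-hand side is an average of nonnegative terms and is therefore $\geq 0$. Combining this with the second step yields the inequality.

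I do not expect any real obstacle: the statement is a direct consequence of the one-sided nature of the bias—assigning the \emph{maximal} possible loss value $1$ to unexplored actions—together with the fact that $\pi^*$'s actions always lie in $A_t$. The only point requiring care is to invoke the hypothesis $\pi^*(x_t) \in A_t$ in exactly the right place, namely to zero out the $\pi^*$ contribution; every other step follows from the boundedness $\ell_t(a) \leq 1$.
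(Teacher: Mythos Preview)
Your proposal is correct and follows essentially the same argument as the paper: both establish that $\tilde{\ell}_t(a) \geq \ell_t(a)$ for all $a$ (so $\tilde{L}_T(\pi) \geq L_T(\pi)$), and that the hypothesis $\pi^*(x_t) \in A_t$ forces $\tilde{\ell}_t(\pi^*(x_t)) = \ell_t(\pi^*(x_t))$ (so $\tilde{L}_T(\pi^*) = L_T(\pi^*)$), which together give the claim. Your presentation via the per-step discrepancy is slightly more explicit, but the content is identical.
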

\begin{proof}
For any $t \leq T$, we have $\tilde{\ell}_t(a) \geq \ell_t(a)$, so that $\tilde{L}_{T}(\pi) \geq L_{T}(\pi)$.
Separately, we have $\tilde{\ell}_t(\pi^*(x_t)) = \ell_t(\pi^*(x_t))$ for all $t \leq T$ using the definition of $\tilde{\ell}_t$ and the assumption $\pi^*(x_t) \in A_t$,
hence $\tilde{L}_{T}(\pi^*) \geq L_{T}(\pi^*)$.
\end{proof}

We now show that with high probability, the optimal action is always explored by the algorithm.
\begin{lemma}
\label{lemma:pi_star}
Assume that event~$\mathcal{E}$ holds.
The actions given by the optimal policy are always explored for all $t \geq 1$,
i.e., $\pi^*(x_t) \in A_t$ for all $t \geq 1$.
\end{lemma}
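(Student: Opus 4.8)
The plan is to prove the statement by strong induction on $t$, combining the two deviation bounds of Lemma~\ref{lemma:deviations} (which are available because we assume event~$\mathcal{E}$ holds) with the favorable-bias Lemma~\ref{lemma:favorable_bias}. The base case $t=1$ is immediate: no data has been collected ($\hat{Z}_0 = \emptyset$), so the empirical gap defining $A_1$ is zero for every action and hence $\pi^*(x_1) \in A_1$ regardless of the threshold. For the inductive step I assume $\pi^*(x_s) \in A_s$ for all $s < t$ and aim to show $\pi^*(x_t) \in A_t$.

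The first step is to reduce membership in $A_t$ to a single scalar inequality. Since $\pi^*$ selects action $\pi^*(x_t)$ at $x_t$, it is feasible for the constrained problem defining $\pi_{t,\pi^*(x_t)}$, so $L(\pi_{t,\pi^*(x_t)}, \hat{Z}_{t-1}) \leq \hat{L}_{t-1}(\pi^*)$ and therefore
\[
L(\pi_{t,\pi^*(x_t)}, \hat{Z}_{t-1}) - \hat{L}_{t-1}(\pi_t) \leq \hat{L}_{t-1}(\pi^*) - \hat{L}_{t-1}(\pi_t).
\]
Thus it suffices to bound the empirical excess loss $\hat{L}_{t-1}(\pi^*) - \hat{L}_{t-1}(\pi_t)$ by $\Delta_t$, which then places $\pi^*(x_t)$ in $A_t$.

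The core of the argument is to translate this excess loss on the biased IPS sample $\hat{Z}_{t-1}$ into the population excess loss $L(\pi_t) - L(\pi^*)$, which is nonnegative because $\pi^*$ minimizes $L$, while collecting the deviation terms that exactly constitute the threshold. I would chain three inequalities, all applied with $\pi = \pi_t$ and $T = t-1$: first, the bound~\eqref{eq:tilde_bound} to pass from $\hat{L}_{t-1}$ to the biased-sample losses $\tilde{L}_{t-1}$; second, the favorable-bias Lemma~\ref{lemma:favorable_bias}, which is applicable \emph{precisely because} the inductive hypothesis supplies $\pi^*(x_s) \in A_s$ for $s < t$, giving $\tilde{L}_{t-1}(\pi_t) - \tilde{L}_{t-1}(\pi^*) \geq L_{t-1}(\pi_t) - L_{t-1}(\pi^*)$; and third, the bound~\eqref{eq:sample_bound} to pass from the empirical true losses $L_{t-1}$ to the population quantity, dropping the term $-(L(\pi_t) - L(\pi^*)) \leq 0$. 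Collecting the deviation contributions from the first and third inequalities yields $\hat{L}_{t-1}(\pi^*) - \hat{L}_{t-1}(\pi_t) \leq (\sqrt{2K/\epsilon} + 1)\sqrt{\rho(\pi_t,\pi^*) e_{t-1}} + (K/\epsilon + 3) e_{t-1} = \Delta_t^*(\pi_t) \leq \Delta_t$, where the last step uses $\rho(\pi_t,\pi^*) \leq 1$. This closes the induction.

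The step I expect to require the most care is the bookkeeping in this chain: the two signed deviation bounds must be applied in the correct direction (as a lower bound on $\hat{L}_{t-1}(\pi_t) - \hat{L}_{t-1}(\pi^*)$ and on $L_{t-1}(\pi_t) - L_{t-1}(\pi^*)$ respectively), and the favorable-bias inequality must be invoked with its hypothesis drawn from the induction rather than assumed outright. The definition of $\Delta_t$ becomes transparent at this point — it is exactly the sum of the worst-case ($\rho = 1$) deviation terms of~\eqref{eq:tilde_bound} and~\eqref{eq:sample_bound} — while the remaining ingredients (feasibility of $\pi^*$ and nonnegativity of the population gap) are routine.
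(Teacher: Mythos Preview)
Your proposal is correct and follows essentially the same approach as the paper: strong induction on $t$, with the inductive step chaining the deviation bound~\eqref{eq:tilde_bound}, the favorable-bias Lemma~\ref{lemma:favorable_bias} (enabled by the inductive hypothesis), the deviation bound~\eqref{eq:sample_bound}, and the optimality of~$\pi^*$ to obtain $\hat L_{t-1}(\pi^*) - \hat L_{t-1}(\pi_t) \leq \Delta_t^*(\pi_t) \leq \Delta_t$. The only cosmetic difference is that you argue directly (using feasibility of~$\pi^*$ for the constrained problem defining $\pi_{t,\pi^*(x_t)}$ to place $\pi^*(x_t)$ in~$A_t$), whereas the paper argues the contrapositive (showing that any $a \notin A_t$ satisfies $\hat L_{t-1}(\pi_{t,a}) > \hat L_{t-1}(\pi^*)$, hence $\pi^*(x_t) \neq a$); the underlying inequality and its derivation are identical.
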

\begin{proof}
We show by induction on $T \geq 1$ that $\pi^*(x_t) \in A_t$ for all $t = 1, \ldots, T$.
For the base case, we have $A_1 = [K]$ since $\hat{Z}_0 = \emptyset$ and hence empirical errors
are always equal to 0, so that $\pi^*(x_1) \in A_1$.
Let us now assume as the inductive hypothesis that $\pi^*(x_t) \in A_t$ for all $t \leq T-1$.

From deviation bounds, we have
\begin{align*}
\hat{L}_{T-1}(\pi_T) - \hat{L}_{T-1}(\pi^*) &\geq \tilde{L}_{T-1}(\pi_T) - \tilde{L}_{T-1}(\pi^*) - \left(\sqrt{\frac{2K \rho(\pi, \pi^*)e_{T-1}}{\epsilon}} + (K/\epsilon + 1) e_{T-1}\right) \\
L_{T-1}(\pi_T) - L_{T-1}(\pi^*) &\geq L(\pi_T) - L(\pi^*) - \left(\sqrt{\rho(\pi, \pi^*)e_{T-1}} + 2e_{T-1}\right).
\end{align*}
Using Lemma~\ref{lemma:favorable_bias} together with the inductive hypothesis, the above inequalities yield
\begin{align*}
\hat{L}_{T-1}(\pi_T) - \hat{L}_{T-1}(\pi^*) \geq L(\pi_T) - L(\pi^*) - \Delta_T^*(\pi_T).
\end{align*}

Now consider an action $a \notin A_t$. Using the definition~\eqref{eq:at_def} of $A_t$, we have
\begin{align*}
\hat{L}_{T-1}(\pi_{T,a}) - \hat{L}_{T-1}(\pi^*) &= \hat{L}_{T-1}(\pi_{T,a}) - \hat{L}_{T-1}(\pi_{T}) + \hat{L}_{T-1}(\pi_{T}) - \hat{L}_{T-1}(\pi^*) \\
	&> \Delta_T - \Delta_T^*(\pi_T) = 0,
\end{align*}
which implies $\pi^*(x_T) \ne a$, since $\hat{L}_{T-1}(\pi_{T,a})$ is the minimum of $\hat{L}_{T-1}$ over policies satisfying $\pi(x_T)=a$.
This yields $\pi^*(x_T) \in A_T$, which concludes the proof.
\end{proof}

With the previous results, we can now prove that with high probability,
discarding some of the actions from the exploration process does not hurt us
in identifying good policies.
In particular, $\pi_{T+1}$ is about as good as it would have been with
uniform $\epsilon$-exploration all along.
\begin{theorem}
\label{thm:correctness}
Under the event $\mathcal{E}$, which holds with probability $1 - \delta$,
\begin{equation*}
L(\pi_{T+1}) - L(\pi^*) \leq \Delta_{T+1}^*(\pi_{T+1}).
\end{equation*}
In particular, $L(\pi_{T+1}) - L(\pi^*) \leq \Delta_{T+1}$.
\end{theorem}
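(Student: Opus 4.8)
The plan is to chain the three preceding results into a single telescoping bound on $L(\pi_{T+1}) - L(\pi^*)$, passing from the true expected-loss gap, through the unobserved empirical gap $L_T$, to the biased gap $\tilde{L}_T$, and finally to the IPS-estimated gap $\hat{L}_T$ that the algorithm actually minimizes. Since $\pi_{T+1}$ is by definition the minimizer of $\hat{L}_T$, the last gap is non-positive, and all the slack accumulated along the chain should reassemble exactly into $\Delta_{T+1}^*(\pi_{T+1})$.

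Concretely, I would first invoke Lemma~\ref{lemma:pi_star}, which under $\mathcal{E}$ guarantees $\pi^*(x_t) \in A_t$ for every $t \leq T$; this is precisely the hypothesis needed to apply the favorable-bias Lemma~\ref{lemma:favorable_bias}. Then I would write the chain at sample size $T$ (recall $\Delta_{T+1}^*$ is built from $e_T$). Starting from the sample deviation bound~\eqref{eq:sample_bound} applied to $\pi = \pi_{T+1}$, I bound $L(\pi_{T+1}) - L(\pi^*)$ by $L_T(\pi_{T+1}) - L_T(\pi^*) + \sqrt{\rho(\pi_{T+1},\pi^*) e_T} + 2e_T$. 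Next, Lemma~\ref{lemma:favorable_bias} gives $L_T(\pi_{T+1}) - L_T(\pi^*) \leq \tilde{L}_T(\pi_{T+1}) - \tilde{L}_T(\pi^*)$, and the IPS deviation bound~\eqref{eq:tilde_bound} further bounds this biased gap by $\hat{L}_T(\pi_{T+1}) - \hat{L}_T(\pi^*) + \sqrt{2K\rho(\pi_{T+1},\pi^*)e_T/\epsilon} + (K/\epsilon + 1)e_T$. Finally, optimality of $\pi_{T+1}$ for $\hat{L}_T$ makes $\hat{L}_T(\pi_{T+1}) - \hat{L}_T(\pi^*) \leq 0$ vanish.

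Collecting the surviving terms, the two square-root contributions combine as $(\sqrt{2K/\epsilon} + 1)\sqrt{\rho(\pi_{T+1},\pi^*)e_T}$ and the linear terms as $(K/\epsilon + 3)e_T$, which is exactly $\Delta_{T+1}^*(\pi_{T+1})$ by its definition~\eqref{eq:deviation_tot}. The ``in particular'' clause then follows immediately from $\rho(\pi_{T+1},\pi^*) \leq 1$, yielding $\Delta_{T+1}^*(\pi_{T+1}) \leq \Delta_{T+1}$, as already noted after~\eqref{eq:deviation_tot}.

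There is no genuine obstacle here beyond bookkeeping: the whole argument is an assembly of Lemmas~\ref{lemma:deviations}, \ref{lemma:favorable_bias}, and~\ref{lemma:pi_star}. The one point requiring care is index alignment---verifying that the deviation bounds and the favorable-bias lemma are all applied at the same sample size $T$ (not $T+1$), so that the constants $\sqrt{2K/\epsilon}$, $1$, $K/\epsilon$, and $3$ add up precisely to the coefficients appearing in $\Delta_{T+1}^*$---together with confirming that the same $\rho(\pi_{T+1}, \pi^*)$ factor surfaces in both square-root terms so they can be factored together.
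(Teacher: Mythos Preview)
Your proposal is correct and follows exactly the paper's approach: chain the sample deviation bound~\eqref{eq:sample_bound}, Lemma~\ref{lemma:favorable_bias} (enabled by Lemma~\ref{lemma:pi_star}), and the IPS deviation bound~\eqref{eq:tilde_bound}, then use optimality of $\pi_{T+1}$ for $\hat L_T$ to drop the remaining gap. Your write-up is in fact more explicit than the paper's, which compresses the entire chain into the single line $L(\pi_{T+1}) - L(\pi^*) \leq \hat L_T(\pi_{T+1}) - \hat L_T(\pi^*) + \Delta_{T+1}^*(\pi_{T+1}) \leq \Delta_{T+1}^*(\pi_{T+1})$.
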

\begin{proof}
Assume event $\mathcal{E}$ holds.
Using~(\ref{eq:tilde_bound}-\ref{eq:sample_bound}) combined with Lemma~\ref{lemma:favorable_bias} (which holds by Lemma~\ref{lemma:pi_star}), we have
\begin{align*}
L(\pi_{T+1}) - L(\pi^*) \leq \hat{L}_T(\pi_{T+1}) - \hat{L}_T(\pi^*) + \Delta_{T+1}^*(\pi_{T+1}) \leq \Delta_{T+1}^*(\pi_{T+1}).
\end{align*}
\end{proof}

\paragraph{Massart noise condition.}
We introduce a low-noise condition that will help us obtain improved regret guarantees.
Similar conditions have been frequently used in supervised learning~\citep{massart2006risk}
and active learning~\citep{hsu2010algorithms,huang2015efficient,krishnamurthy2017active}
for obtaining better data-dependent guarantees.
We consider the following Massart noise condition with parameter $\tau > 0$:
\begin{equation}\tag{M}
\label{eq:massart}
\rho(\pi, \pi^*) \leq \frac{1}{\tau} (L(\pi) - L(\pi^*)).
\end{equation}
This condition holds when $\E[\min_{a \ne \pi^*(x)} \ell(a) - \ell(\pi^*(x)) | x] \geq \tau$, $P_x$-almost surely,
which is similar to the Massart condition considered in~\cite{krishnamurthy2017active}
in the context of active learning for cost-sensitive classification.
Indeed, we have
\begin{align*}
L(\pi) - L(\pi^*) &= \E[\1\{\pi(x) \ne \pi^*(x)\} (\ell(\pi(x)) - \ell(\pi^*(x))] \\
  &\qquad + \E[\1\{\pi(x) = \pi^*(x)\} (\ell(\pi^*(x)) - \ell(\pi^*(x)))] \\
 	&\geq \E \left[\1\{\pi(x) \ne \pi^*(x)\} \left(\min_{a \ne \pi^*(x)} \ell(a) - \ell(\pi^*(x)) \right) \right] \\
 	&= \E[\1\{\pi(x) \ne \pi^*(x)\} \E[\min_{a \ne \pi^*(x)} \ell(a) - \ell(\pi^*(x))|x]] \\
	&\geq \E[\1\{\pi(x) \ne \pi^*(x)\} \tau] = \tau \rho(\pi, \pi^*),
\end{align*}
which is precisely~\eqref{eq:massart}.
The condition allows us to obtain a fast rate for the policies considered by our algorithm, as we now show.

\begin{theorem}
Assume the Massart condition~\eqref{eq:massart} holds with parameter~$\tau$.
Under the event $\mathcal{E}$, which holds w.p. $1 - \delta$,
\begin{equation*}
L(\pi_{T+1}) - L(\pi^*) \leq C \frac{K}{\tau \epsilon} e_T,
\end{equation*}
for some numeric constant~$C$.
\end{theorem}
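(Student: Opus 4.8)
The plan is to combine the correctness guarantee of Theorem~\ref{thm:correctness} with the Massart condition~\eqref{eq:massart} to turn the deviation-type bound into a self-bounding inequality for the excess loss, which then yields the fast rate. Writing $\Delta := L(\pi_{T+1}) - L(\pi^*)$ and $\rho := \rho(\pi_{T+1}, \pi^*)$ for brevity, Theorem~\ref{thm:correctness} gives, under the event $\mathcal{E}$, together with the definition~\eqref{eq:deviation_tot},
\[
\Delta \leq \Delta_{T+1}^*(\pi_{T+1}) = \left(\sqrt{\frac{2K}{\epsilon}} + 1\right)\sqrt{\rho\, e_T} + \left(\frac{K}{\epsilon} + 3\right) e_T.
\]
The key move is to invoke the Massart condition~\eqref{eq:massart}, which controls the disagreement by the excess risk, $\rho \leq \Delta/\tau$. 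Substituting this into the $\sqrt{\rho\, e_T}$ term produces an inequality of the form $\Delta \leq A\sqrt{\Delta} + B$, with $A = (\sqrt{2K/\epsilon}+1)\sqrt{e_T/\tau}$ and $B = (K/\epsilon + 3)\, e_T$.

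I would then solve this as a quadratic inequality in $\sqrt{\Delta}$: setting $u = \sqrt{\Delta}$ gives $u^2 - Au - B \leq 0$, hence $u \leq (A + \sqrt{A^2 + 4B})/2 \leq A + \sqrt{B}$ using $\sqrt{A^2+4B}\leq A + 2\sqrt{B}$, and therefore $\Delta = u^2 \leq 2A^2 + 2B$. It remains to bound $A^2$ and $B$ on the target scale. Since losses lie in $[0,1]$, the per-context gap is at most $1$, so the Massart parameter satisfies $\tau \leq 1$, which lets me replace $e_T$ by $e_T/\tau$ and absorb the additive term $B$ (which carries no $1/\tau$ factor) into the same $K/(\tau\epsilon)\, e_T$ scale. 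With the crude estimates $(\sqrt{2K/\epsilon}+1)^2 \leq 8K/\epsilon$ and $K/\epsilon + 3 \leq 4K/\epsilon$ (valid because $K \geq 1$ and $\epsilon \leq 1$), both $2A^2$ and $2B$ are at most a numeric constant times $\frac{K}{\tau\epsilon}\, e_T$, giving the claim with an explicit constant such as $C = 24$.

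I do not expect a genuine obstacle here: the whole argument is a short, standard self-bounding manipulation that converts a $\sqrt{\rho}$-type deviation bound into a fast rate once the variance proxy $\rho$ is controlled by the excess risk. The only point requiring care is the observation that $\tau \leq 1$, which is exactly what permits the lower-order term $B$ to be folded into the advertised $K/(\tau\epsilon)$ rate and thus yields the clean final bound.
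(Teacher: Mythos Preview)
Your proposal is correct and follows essentially the same route as the paper: apply Theorem~\ref{thm:correctness} to get $\Delta \leq \Delta_{T+1}^*(\pi_{T+1})$, substitute the Massart bound $\rho \leq \Delta/\tau$ into the $\sqrt{\rho\,e_T}$ term, simplify the coefficients via $(\sqrt{2K/\epsilon}+1)^2 \leq 8K/\epsilon$ and $K/\epsilon+3 \leq 4K/\epsilon$, and solve the resulting quadratic in $\sqrt{\Delta}$. The paper leaves the quadratic solution and the use of $\tau \leq 1$ implicit, whereas you spell them out, but the argument is the same.
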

\begin{proof}
Using Theorem~\ref{thm:correctness} and the Massart condition, we have
\begin{align*}
L(\pi_{T+1}) - L(\pi^*) &\leq \Delta_{T+1}^*(\pi_{T+1}) = \left(\sqrt{\frac{2K}{\epsilon}} + 1\right) \sqrt{\rho(\pi_{T+1}, \pi^*) e_{T}} + \left( \frac{K}{\epsilon} + 3\right) e_{T} \\
	&\leq \left(\sqrt{\frac{2K}{\epsilon}} + 1\right) \sqrt{(L(\pi_{T+1}) - L(\pi^*)) e_{T} / \tau} + \left( \frac{K}{\epsilon} + 3\right) e_{T} \\
	&\leq \sqrt{\frac{8K e_{T}}{\tau \epsilon}(L(\pi_{T+1}) - L(\pi^*))} + \frac{4 K e_{T}}{\epsilon}.
\end{align*}
Solving the quadratic inequality in $L(\pi_{T+1}) - L(\pi^*)$ yields the result.
\end{proof}

\subsubsection{Regret Analysis}
\label{ssub:active_regret}

In a worst-case scenario, the following result shows that Algorithm~\ref{alg:greedy_active}
enjoys a similar~$O(T^{2/3})$ regret guarantee to the vanilla $\epsilon$-greedy approach~\citep{langford2008epoch}.
\begin{theorem}
Conditioned on the event~$\mathcal{E}$, which holds with probability $1- \delta$,
the expected regret of the algorithm is
\begin{equation*}
\E[R_T | \mathcal{E}] \leq O \left(\sqrt{\frac{KT \log(T|\Pi|/\delta)}{\epsilon}} + T \epsilon \right).
\end{equation*}
Optimizing over the choice of $\epsilon$ yields a regret $O(T^{2/3} (K \log(T|\Pi|/\delta))^{1/3})$.
\end{theorem}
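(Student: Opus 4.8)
The plan is to decompose the regret into a per-step \textbf{exploitation} term governed by the suboptimality of the current policy $\pi_t$, an \textbf{exploration} term of size $T\epsilon$, and a martingale remainder, then to invoke Theorem~\ref{thm:correctness} to control the exploitation term on $\mathcal{E}$. First I would bound the conditional per-step regret $r_t := \E[\ell_t(a_t) - \ell_t(\pi^*(x_t)) \mid \mathcal{F}_{t-1}]$, where $\mathcal{F}_{t-1} := \sigma(\{x_i,\ell_i,a_i\}_{i \le t-1})$. The form of $p_t$ gives $\sum_a p_t(a)\ell_t(a) = \ell_t(\pi_t(x_t)) + \sum_{a \in A_t \setminus \{\pi_t(x_t)\}} \tfrac{\epsilon}{K}\big(\ell_t(a) - \ell_t(\pi_t(x_t))\big)$; since $\ell_t \in [0,1]^K$ and $|A_t| \le K$, the correction has magnitude at most $\epsilon$, so taking expectations over $(x_t,\ell_t)\sim D$ yields $r_t \le (L(\pi_t) - L(\pi^*)) + \epsilon$ for \emph{every} realization of $\mathcal{F}_{t-1}$ (this step does not use $\mathcal{E}$).

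Next I would isolate the internal randomness. Set $M_T := R_T - \sum_{t=1}^T r_t$, a sum of martingale differences (with respect to $(\mathcal{F}_t)$) each bounded by $2$ in absolute value, so that $\E[M_T]=0$ and $\E[M_T^2] = O(T)$. Since $\E[M_T]=0$, Cauchy--Schwarz gives $|\E[M_T \mid \mathcal{E}]| = |\E[M_T\,\mathbbm{1}_{\mathcal{E}^c}]|/P(\mathcal{E}) \le \sqrt{\E[M_T^2]\,P(\mathcal{E}^c)}/P(\mathcal{E}) = O(\sqrt{T\delta})$, which will be dominated by the leading term below (note the crude bound $|M_T|\le 2T$ would be too lossy here, so the second-moment bound is essential). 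Combining with the previous step, on $\mathcal{E}$ we have $\sum_t r_t \le \sum_t (L(\pi_t)-L(\pi^*)) + T\epsilon$ deterministically, hence $\E[R_T \mid \mathcal{E}] \le \E\big[\sum_{t=1}^T (L(\pi_t) - L(\pi^*)) \,\big|\, \mathcal{E}\big] + T\epsilon + O(\sqrt{T\delta})$.

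It remains to control $\sum_t (L(\pi_t) - L(\pi^*))$ on $\mathcal{E}$. Here I would apply Theorem~\ref{thm:correctness}, which on $\mathcal{E}$ gives $L(\pi_t) - L(\pi^*) \le \Delta_t$ for $t \ge 2$ (the $t=1$ term is trivially $O(1)$), so the conditional expectation is bounded deterministically by $1 + \sum_{t=2}^T \Delta_t$. Plugging in the threshold \eqref{eq:threshold} and using $e_{t-1} \le L_\star/(t-1)$ with $L_\star := \log(2|\Pi|(T^2+T)/\delta) = O(\log(T|\Pi|/\delta))$, together with $\sum_{s=1}^{T-1} s^{-1/2} = O(\sqrt{T})$ and $\sum_{s=1}^{T-1} s^{-1} = O(\log T)$, yields $\sum_{t=2}^T \Delta_t = O\big(\sqrt{KTL_\star/\epsilon}\big) + O\big((K/\epsilon)L_\star \log T\big)$; the first term equals $O\big(\sqrt{KT\log(T|\Pi|/\delta)/\epsilon}\big)$ and dominates the second (and the $O(\sqrt{T\delta})$ martingale term) for large $T$. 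This gives $\E[R_T\mid\mathcal{E}] = O\big(\sqrt{KT\log(T|\Pi|/\delta)/\epsilon} + T\epsilon\big)$, and balancing the two terms at $\epsilon = (K\log(T|\Pi|/\delta)/T)^{1/3}$ produces the $O\big(T^{2/3}(K\log(T|\Pi|/\delta))^{1/3}\big)$ rate.

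The main obstacle I anticipate is the conditioning on $\mathcal{E}$: the identity $\E[M_T]=0$ holds only unconditionally, and the point is that $\mathcal{E}$ excludes only $\delta$-mass, so a second-moment estimate keeps the induced perturbation at $O(\sqrt{T\delta})$, safely below the leading $\sqrt{KT/\epsilon}$ term. Everything else---the $\epsilon$-cost of exploration, the summation of the thresholds $\Delta_t$, and the final optimization over $\epsilon$---is routine.
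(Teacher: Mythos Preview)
Your proposal is correct and follows essentially the same route as the paper: bound the per-step conditional regret by $L(\pi_t)-L(\pi^*)+\epsilon$, invoke Theorem~\ref{thm:correctness} on~$\mathcal{E}$ to replace $L(\pi_t)-L(\pi^*)$ by~$\Delta_t$, and sum the thresholds. The one genuine difference is that you treat the conditioning on~$\mathcal{E}$ carefully via the martingale remainder $M_T$ and the Cauchy--Schwarz bound $|\E[M_T\mid\mathcal{E}]|\le \sqrt{\E[M_T^2]P(\mathcal{E}^c)}/P(\mathcal{E})=O(\sqrt{T\delta})$; the paper simply writes $\E[R_T\mid\mathcal{E}]\le 1+\sum_{t\ge 2}\E[L(\pi_t)-L(\pi^*)+\epsilon\mid \mathcal{F}_{t-1},\mathcal{E}]$ and moves on, glossing over the fact that conditioning on~$\mathcal{E}$ and on~$\mathcal{F}_{t-1}$ do not interact cleanly. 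Your extra step makes the argument rigorous without changing the final bound, so this is a minor (and welcome) refinement rather than a different approach.
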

\begin{proof}
We condition on the $1- \delta$ probability event~$\mathcal{E}$ that the deviation bounds of Lemma~\ref{lemma:deviations} hold.
We have
\begin{align*}
\E[\ell_t(a_t) - \ell_t(\pi^*(x_t)) | \mathcal{F}_{t-1}] &=  \E[\1\{a_t = \pi_t(x_t)\} (\ell_t(\pi_t(x_t)) - \ell_t(\pi^*(x_t)))| \mathcal{F}_{t-1}] \\
  &\qquad + \E[\1\{a_t \ne \pi_t(x_t)\}(\ell_t(a_t) - \ell_t(\pi^*(x_t))) | \mathcal{F}_{t-1}] \\
	&\leq \E[\ell_t(\pi_t(x_t)) - \ell_t(\pi^*(x_t)) | \mathcal{F}_{t-1}] + \E[\E[1 - p_t(\pi_t(x_t)) | x_t] | \mathcal{F}_{t-1}] \\
	&\leq L(\pi_t) - L(\pi^*) + \epsilon.
\end{align*}
Summing over~$t$ and applying Theorem~\ref{thm:correctness} together with $\Delta_t^*(\pi) \leq \Delta_t$, we obtain
\begin{align*}
\E[R_T | \mathcal{E}] &= \E \left[ \sum_{t=1}^T \ell_t(a_t) - \ell_t(\pi^*(x_t)) | \mathcal{E} \right] \\
	&\leq 1 + \sum_{t=2}^T \E[L(\pi_t) - L(\pi^*) + \epsilon | \mathcal{F}_{t-1}, \mathcal{E}] \\
	&\leq 1 + T \epsilon + \sum_{t=2}^T \Delta_t.
\end{align*}
Using $\sum_{t=2}^T \sqrt{e_t} \leq O(\sqrt{T \log(8 T^2 |\Pi|/\delta)})$
and $\sum_{t=2}^T e_t \leq O(\log(8 T^2 |\Pi|/\delta) \log T )$, we obtain
\begin{equation*}
\E[R_T | \mathcal{E}] \leq O \left(1 + \sqrt{\frac{KT \log(T|\Pi|/\delta)}{\epsilon}} + \frac{K\log(T|\Pi|/\delta)}{\epsilon} \log T + T \epsilon \right),
\end{equation*}
which yields the result.
\end{proof}

\paragraph{Disagreement definitions.}
In order to obtain improvements in regret guarantees over the worst case,
we consider notions of disagreement that extend standard definitions from the active learning
literature~\cite[\eg,][]{hanneke2014theory,hsu2010algorithms,huang2015efficient} to the multiclass case.
Let $B(\pi^*, r) := \{\pi \in \Pi : \rho(\pi, \pi^*) \leq r\}$ be the ball centered at~$\pi^*$ under
the (pseudo)-metric $\rho(\cdot, \cdot)$.
We define the disagreement region~$DIS(r)$ and disagreement coefficient~$\theta$ as follows:
\begin{align*}
DIS(r) &:= \{x : \exists \pi \in B(\pi^*, r) \quad \pi(x) \ne \pi^*(x)\} \\
\theta &:= \sup_{r>0} \frac{P(x \in DIS(r))}{r}.
\end{align*}

The next result shows that under the Massart condition and with a finite disagreement coefficient~$\theta$,
our algorithm achieves a regret that scales as $O(T^{1/3})$ (up to logarithmic factors),
thus improving on worst-case guarantees obtained by optimal algorithms such as~\citet{agarwal2012contextual,agarwal2014taming,dudik2011efficient}.

\begin{theorem}
\label{thm:regret_massart}
Assume the Massart condition~\eqref{eq:massart} holds with parameter~$\tau$.
Conditioning on the event~$\mathcal{E}$ which holds w.p. $1 - \delta$, the algorithm has expected regret
\begin{equation*}
\E[R_T|\mathcal{E}] \leq O \left(\frac{K \log(T|\Pi|/\delta)}{\tau \epsilon} \log T + \frac{\theta}{\tau} \sqrt{\epsilon KT \log(T|\Pi|/\delta)} \right).
\end{equation*}
Optimizing over the choice of $\epsilon$ yields a regret
\begin{equation*}
\E[R_T|\mathcal{E}] \leq O \left( \frac{1}{\tau} (\theta K \log(T|\Pi|/\delta))^{2/3} (T \log T)^{1/3} \right).
\end{equation*}
\end{theorem}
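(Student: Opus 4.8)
The plan is to refine the per-step regret decomposition used in the worst-case analysis so that the \emph{amount} of exploration, rather than being crudely bounded by $\epsilon$, is controlled through the disagreement coefficient. Conditioning on the event $\mathcal{E}$ throughout, I would first reproduce the decomposition
\begin{equation*}
\E[\ell_t(a_t) - \ell_t(\pi^*(x_t)) \mid \mathcal{F}_{t-1}] \leq (L(\pi_t) - L(\pi^*)) + \E\!\left[\frac{(|A_t|-1)\epsilon}{K} \,\middle|\, \mathcal{F}_{t-1}\right],
\end{equation*}
where the exploration term uses $1 - p_t(\pi_t(x_t)) = (|A_t|-1)\epsilon/K$ together with $\ell_t \in [0,1]^K$. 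The exploitation term $L(\pi_t) - L(\pi^*)$ is handled immediately by the Massart fast-rate correctness bound proved just above, giving $L(\pi_t) - L(\pi^*) \leq O(K e_{t-1}/(\tau\epsilon))$.

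The crux is the exploration term. I would show that whenever more than one action is explored at $x_t$ (i.e.\ $|A_t|>1$), the context must lie in a small disagreement region. For any $a \in A_t$ the constrained minimizer $\pi_{t,a}$ satisfies $\hat{L}_{t-1}(\pi_{t,a}) - \hat{L}_{t-1}(\pi^*) \leq \Delta_t$, by combining the definition~\eqref{eq:at_def} of $A_t$ with the optimality of $\pi_t$ for $\hat{L}_{t-1}$. Feeding this into the ``for all $\pi$'' form of the deviation argument behind Theorem~\ref{thm:correctness} (Lemma~\ref{lemma:deviations} plus the favorable-bias Lemma~\ref{lemma:favorable_bias}, valid since $\pi^*(x_t) \in A_t$ by Lemma~\ref{lemma:pi_star}) yields $L(\pi_{t,a}) - L(\pi^*) \leq \Delta_t + \Delta_t^*(\pi_{t,a}) \leq 2\Delta_t$, so the Massart condition~\eqref{eq:massart} gives $\rho(\pi_{t,a},\pi^*) \leq 2\Delta_t/\tau$, i.e.\ $\pi_{t,a} \in B(\pi^*, 2\Delta_t/\tau)$. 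If $|A_t|>1$ there is an explored action $a \neq \pi_t(x_t)$; either $\pi_t(x_t)\neq\pi^*(x_t)$, in which case $\pi_t$ witnesses disagreement at $x_t$ with $\rho(\pi_t,\pi^*)\leq\Delta_t/\tau$ (Theorem~\ref{thm:correctness} and Massart), or $\pi_t(x_t)=\pi^*(x_t)$ and then $\pi_{t,a}(x_t)=a\neq\pi^*(x_t)$ witnesses it. Either way $x_t \in DIS(2\Delta_t/\tau)$, so $(|A_t|-1) \leq K\,\1\{x_t \in DIS(2\Delta_t/\tau)\}$; since $x_t$ is independent of $\mathcal{F}_{t-1}$ and $P(x \in DIS(r)) \leq \theta r$, the exploration term is at most $\epsilon\,P(x \in DIS(2\Delta_t/\tau)) \leq 2\theta\epsilon\Delta_t/\tau$.

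Summing the two contributions over $t$ then reuses the series estimates from the worst-case proof, $\sum_t e_{t-1} = O(\log(T|\Pi|/\delta)\log T)$ and $\sum_t \sqrt{e_{t-1}} = O(\sqrt{T\log(T|\Pi|/\delta)})$, together with $\Delta_t = O(\sqrt{K/\epsilon}\,\sqrt{e_{t-1}} + (K/\epsilon) e_{t-1})$. The exploitation sum contributes $O\big(\frac{K}{\tau\epsilon}\log(T|\Pi|/\delta)\log T\big)$ and the exploration sum $\frac{2\theta\epsilon}{\tau}\sum_t\Delta_t$ contributes $O\big(\frac{\theta}{\tau}\sqrt{\epsilon K T\log(T|\Pi|/\delta)}\big)$ (a lower-order $\frac{\theta K}{\tau}\log\log$ term being absorbed), matching the stated two-term bound. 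Finally I would minimize $f(\epsilon)=A/\epsilon+B\sqrt{\epsilon}$ with $A=\frac{K}{\tau}\log(T|\Pi|/\delta)\log T$ and $B=\frac{\theta}{\tau}\sqrt{KT\log(T|\Pi|/\delta)}$; choosing $\epsilon^\star \propto (A/B)^{2/3}$ gives $f(\epsilon^\star)=O(A^{1/3}B^{2/3})=O\big(\frac{1}{\tau}(\theta K\log(T|\Pi|/\delta))^{2/3}(T\log T)^{1/3}\big)$.

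I expect the main obstacle to be the exploration step: certifying that a single explored-but-suboptimal action forces $x_t \in DIS(2\Delta_t/\tau)$. This needs the strengthening of the correctness argument so that the excess-loss bound applies to the data-dependent constrained minimizers $\pi_{t,a}$ and not merely to $\pi_t$, careful tracking of which radius applies in the two cases, and the independence argument that collapses $P(x_t \in DIS(2\Delta_t/\tau)\mid\mathcal{F}_{t-1})$ to the marginal $P(x \in DIS(\cdot))$ so that the disagreement-coefficient bound can be invoked.
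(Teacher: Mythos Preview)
Your proposal is correct and follows essentially the same route as the paper: decompose per-step regret into exploitation ($L(\pi_t)-L(\pi^*)$, controlled via the Massart fast rate) and exploration, show that any explored action $a\in A_t\setminus\{\pi^*(x_t)\}$ is witnessed by a policy in $B(\pi^*,2\Delta_t/\tau)$ so that $x_t\in DIS(2\Delta_t/\tau)$, bound the exploration term by $2\theta\epsilon\Delta_t/\tau$, sum, and optimize $\epsilon$. One minor slip: the lower-order exploration contribution from the $(K/\epsilon)e_{t-1}$ part of $\Delta_t$, after scaling by $2\theta\epsilon/\tau$ and summing, is $O\!\big(\tfrac{\theta K}{\tau}\log(T|\Pi|/\delta)\log T\big)$, not a $\log\log$ term; it is still absorbed into the final bound after optimizing $\epsilon$.
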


\begin{proof}
Assume $\mathcal{E}$ holds. Let $t \geq 2$, and assume $a \in A_t \setminus \{\pi^*(x_t)\}$.
Define
\begin{equation*}
\pi_a = \begin{cases}
	\pi_t, &\text{ if }\pi_t(x_t) = a\\
	\pi_{t,a}, &\text{ if }\pi_t(x_t) \ne a,
\end{cases}
\end{equation*}
so that we have $\pi_a(x_t) = a \ne \pi^*(x_t)$.
\begin{itemize}
	\item If $\pi_a = \pi_t$, then $L(\pi_a) - L(\pi^*) \leq \Delta_t^*(\pi_a) \leq \Delta_t$ by Theorem~\ref{thm:correctness}
	\item If $\pi_a = \pi_{t,a}$, using deviation bounds, Lemma~\ref{lemma:pi_star} and~\ref{lemma:favorable_bias}, we have
	\begin{align*}
	L(\pi_a) - L(\pi^*) &= L(\pi_{t,a}) - L(\pi^*) \\
		&\leq \hat{L}_{t-1}(\pi_{t,a}) - \hat{L}_{t-1}(\pi^*) + \Delta_t^*(\pi_{t,a}) \\
		&= \underbrace{\hat{L}_{t-1}(\pi_{t,a}) - \hat{L}_{t-1}(\pi_t)}_{\leq \Delta_t} + \underbrace{\hat{L}_{t-1}(\pi_t) - \hat{L}_{t-1}(\pi^*)}_{\leq 0} + \Delta_t^*(\pi_{t,a}) \\
		&\leq 2 \Delta_t,
	\end{align*}
	where the last inequality uses $a \in A_t$.
\end{itemize}
By the Massart assumption, we then have $\rho(\pi_a, \pi^*) \leq 2 \Delta_t / \tau$.
Hence, we have $x_t \in DIS(2 \Delta_t / \tau)$. We have thus shown
\begin{equation*}
\E [\E[\1\{a \in A_t \setminus \{\pi^*(x_t)\}\} | x_t] | \mathcal{F}_{t-1}] \leq \E[P(x_t \in DIS(2 \Delta_t / \tau)) | \mathcal{F}_{t-1}] \leq 2 \theta \Delta_t / \tau.
\end{equation*}
We then have
\begin{align*}
\E[\ell_t(a_t) - \ell_t(\pi^*(x_t)) | \mathcal{F}_{t-1}]
	&= \E [\1\{a_t = \pi_t(x_t)\} (\ell_t(\pi_t(x_t)) - \ell_t(\pi^*(x_t))) \\
	&\quad + \1\{a_t = \pi^*(x_t) \wedge a_t \ne \pi_t(x_t)\} (\ell_t(\pi^*(x_t)) - \ell_t(\pi^*(x_t))) \\
	&\quad + \sum_{a=1}^K \1\{a_t = a \wedge a \notin \{\pi_t(x_t), \pi^*(x_t)\}\} (\ell_t(a) - \ell_t(\pi^*(x_t))) | \mathcal{F}_{t-1}] \\
	&\leq \E [ \ell_t(\pi_t(x_t)) - \ell_t(\pi^*(x_t)) | \mathcal{F}_{t-1}] \\
  &\quad + \E \left[\sum_{a=1}^K \E[\1\{a_t = a \} \1\{a \notin \{\pi_t(x_t), \pi^*(x_t)\}\}|x_t] | \mathcal{F}_{t-1} \right] \\
	&= L(\pi_t) - L(\pi^*) + \sum_{a=1}^K \E[ \E[ p_t(a) \1\{a \notin \{\pi_t(x_t), \pi^*(x_t)\}\} | x_t] | \mathcal{F}_{t-1}] \\
	&\leq L(\pi_t) - L(\pi^*) + \frac{\epsilon}{K} \sum_{a=1}^K \E[\E[\1\{a \in A_t \setminus\{\pi^*(x_t)\}\} | x_t] | \mathcal{F}_{t-1}] \\
	&\leq C \frac{K}{\tau \epsilon} e_{t-1} + 2 \epsilon \theta \Delta_t / \tau,
\end{align*}
where we used
\begin{align*}
p_t(a) \1\{a \notin \{\pi_t(x_t), \pi^*(x_t)\}\} &= \frac{\epsilon}{K} \1\{a \in A_t \setminus \{\pi_t(x_t), \pi^*(x_t)\}\} \\
	&\leq \frac{\epsilon}{K} \1\{a \in A_t \setminus \{\pi^*(x_t)\}\}.
\end{align*}
Summing over $t$ and taking total expectations (conditioned on $\mathcal{E}$) yields
\begin{equation*}
\E[R_T|\mathcal{E}] \leq O \left(\frac{K \log(T|\Pi|/\delta)}{\tau \epsilon} \log T + \frac{\epsilon \theta}{\tau} \left(\sqrt{\frac{KT \log(T|\Pi|/\delta)}{\epsilon}} + \frac{K\log(T|\Pi|/\delta)}{\epsilon} \log(T) \right) \right),
\end{equation*}
and the result follows.
\end{proof}

Finally, we look at a simpler instructive example, which considers an extreme situation where the expected loss
of any suboptimal policy is bounded away from that of the optimal policy.
In this case, Algorithm~\ref{alg:greedy_active} can achieve constant regret when the disagreement coefficient is bounded,
as shown by the following result.
\begin{proposition}
\label{prop:policy_gap}
Assume that $L(\pi) - L(\pi^*) \geq \tau > 0$ for all $\pi \ne \pi^*$, and that $\theta < \infty$.
Under the event~$\mathcal{E}$, the algorithm achieves constant expected regret.
In particular, the algorithm stops incurring regret for $T > T_0 := \max \{t : 2 \Delta_t > \tau \}$.
\end{proposition}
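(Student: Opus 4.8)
The plan is to show that, on the event $\mathcal{E}$, the exploration set $A_t$ collapses to the single optimal action $\{\pi^*(x_t)\}$ for every $t > T_0$, so that the algorithm plays $\pi^*(x_t)$ deterministically and incurs zero instantaneous regret from that round on; the total regret is then bounded by the regret accumulated during the first $T_0$ rounds, which is at most $T_0$ since losses lie in $[0,1]$. First I would record that $T_0$ is finite: since $e_T = \log(2|\Pi|/\delta_T)/T \to 0$, the threshold $\Delta_T = (\sqrt{2K/\epsilon}+1)\sqrt{e_{T-1}} + (K/\epsilon+3) e_{T-1}$ tends to $0$, so $\{t : 2\Delta_t > \tau\}$ is a finite set and $T_0 = \max\{t : 2\Delta_t > \tau\}$ is well defined (no monotonicity of $\Delta_t$ is needed). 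Consequently $2\Delta_t \le \tau$ for every $t > T_0$.

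The core step is a ``reverse'' deviation bound, which is exactly the inequality derived inside the proof of Lemma~\ref{lemma:pi_star} but applied to an arbitrary fixed policy rather than to $\pi_t$: chaining \eqref{eq:tilde_bound}, the favorable bias of Lemma~\ref{lemma:favorable_bias} (valid since $\pi^*(x_s) \in A_s$ for all $s$ by Lemma~\ref{lemma:pi_star}), and \eqref{eq:sample_bound} gives
\[
\hat{L}_{t-1}(\pi) - \hat{L}_{t-1}(\pi^*) \ge L(\pi) - L(\pi^*) - \Delta_t^*(\pi) \ge L(\pi) - L(\pi^*) - \Delta_t .
\]
Now fix $t > T_0$ and $a \neq \pi^*(x_t)$, and suppose $a \in A_t$. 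Applying the bound to $\pi = \pi_{t,a}$ and noting $\pi_{t,a}(x_t) = a \neq \pi^*(x_t)$, so that $\pi_{t,a} \neq \pi^*$ and hence $L(\pi_{t,a}) - L(\pi^*) \ge \tau$ by the gap assumption, yields $\hat{L}_{t-1}(\pi_{t,a}) - \hat{L}_{t-1}(\pi^*) \ge \tau - \Delta_t$. On the other hand, $a \in A_t$ together with the optimality $\hat{L}_{t-1}(\pi_t) \le \hat{L}_{t-1}(\pi^*)$ gives $\hat{L}_{t-1}(\pi_{t,a}) - \hat{L}_{t-1}(\pi^*) \le \hat{L}_{t-1}(\pi_{t,a}) - \hat{L}_{t-1}(\pi_t) \le \Delta_t$. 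Combining forces $\tau \le 2\Delta_t$, contradicting $2\Delta_t \le \tau$ whenever the inequality is strict. Hence $A_t \setminus \{\pi^*(x_t)\} = \emptyset$, and since $\pi^*(x_t) \in A_t$ by Lemma~\ref{lemma:pi_star}, we conclude $A_t = \{\pi^*(x_t)\}$.

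Finally I would translate this into the regret bound. Because $\pi_t(x_t)$ always lies in $A_t$ (the constrained minimizer $\pi_{t,\pi_t(x_t)}$ is $\pi_t$ itself, with empirical loss difference $0 \le \Delta_t$), the identity $A_t = \{\pi^*(x_t)\}$ forces $\pi_t(x_t) = \pi^*(x_t)$ and $|A_t| = 1$, so $p_t$ is the point mass at $\pi^*(x_t)$ and $a_t = \pi^*(x_t)$ almost surely; the instantaneous regret $\ell_t(a_t) - \ell_t(\pi^*(x_t))$ is thus $0$ for all $t > T_0$. Summing over rounds, $\E[R_T \mid \mathcal{E}] = \E[\sum_{t \le T_0} (\ell_t(a_t) - \ell_t(\pi^*(x_t))) \mid \mathcal{E}] \le T_0$ for every $T > T_0$, a constant independent of $T$.

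The main obstacle is the boundary case $2\Delta_t = \tau$: the contradiction in the second paragraph is strict only when $2\Delta_t < \tau$, whereas the definition of $T_0$ guarantees merely $2\Delta_t \le \tau$ for $t > T_0$. For any fixed $\tau$ this equality can hold for at most finitely many $t$, and those rounds contribute only an additional constant to the regret, so the conclusion is unaffected; nonetheless one must state the inequalities carefully, and one could alternatively define $T_0$ through $2\Delta_t \ge \tau$ to make the collapse of $A_t$ exact.
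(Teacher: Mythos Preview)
Your proof is correct and follows essentially the same mechanism as the paper: both show that once $2\Delta_t < \tau$, no action other than $\pi^*(x_t)$ can survive the disagreement test, so the algorithm plays optimally thereafter. The paper routes this through the proof of Theorem~\ref{thm:regret_massart}, citing the inequality $L(\pi_a) - L(\pi^*) \leq 2\Delta_t$ for $a \in A_t \setminus \{\pi^*(x_t)\}$ established there and then packaging the conclusion as $\E[\1\{a \in A_t \setminus \{\pi^*(x_t)\}\}] \leq \theta\,\1\{2\Delta_t \geq \tau\}$ before summing. You instead re-derive that inequality directly from the deviation bounds and argue by contradiction that $A_t = \{\pi^*(x_t)\}$, which is cleaner and has a useful by-product: your argument never uses the disagreement coefficient, revealing that the hypothesis $\theta < \infty$ in the statement is in fact unnecessary for this proposition. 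Your treatment of the boundary case $2\Delta_t = \tau$ is also more careful; the paper sidesteps it by silently redefining $T_0 := \max\{t : 2\Delta_t \geq \tau\}$ inside the proof.
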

\begin{proof}
By Theorem~\ref{thm:correctness} and our assumption, we have $L(\pi_t) - L(\pi^*) \leq \1\{\Delta_t \geq \tau\} \Delta_t$.
Similarly, the assumption implies that $\rho(\pi, \pi^*) \leq \1\{L(\pi) - L(\pi^*) \geq \tau\}$,
so that using similar arguments to the proof of Theorem~\ref{thm:regret_massart}, we have
\begin{equation*}
\E [\E[\1\{a \in A_t \setminus \{\pi^*(x_t)\}\} | x_t] | \mathcal{F}_{t-1}] \leq \theta \1\{2 \Delta_t \geq \tau\}.
\end{equation*}
Following the proof of Theorem~\ref{thm:regret_massart}, this implies that when~$t$ is such that $2 \Delta_t < \tau$, then we have
\begin{align*}
\E[\ell_t(a_t) - \ell_t(\pi^*(x_t)) | \mathcal{F}_{t-1}] = 0.
\end{align*}
Let $T_0 := \max \{t : 2 \Delta_t \geq \tau \}$. We thus have
\begin{align*}
\E[R_T | \mathcal{E}] \leq 1 + \sum_{t=2}^{T_0} (\Delta_t + \epsilon).
\end{align*}
\end{proof}

\end{document}